\theoremstyle{plain}
\newtheorem{theorem}{Theorem}[section]
\newtheorem{lemma}[theorem]{Lemma}
\newtheorem{corollary}[theorem]{Corollary}
\newtheoremstyle{TheoremNum}
    {\topsep}{\topsep}              
    {\itshape}                      
    {}                              
    {\bfseries}                     
    {.}                             
    { }                             
    {\thmname{#1}\thmnote{ \bfseries #3}}
\theoremstyle{TheoremNum}
\newtheorem{lemmanum}{Lemma}
\theoremstyle{definition}
\newtheorem{definition}[theorem]{Definition}
\theoremstyle{remark}
\newtheorem{remark}[theorem]{Remark}
\def\eqref#1{equation~\ref{#1}}
\def\1{\bm{1}}
\def\rvx{{\mathbf{x}}}
\def\rvy{{\mathbf{y}}}
\DeclareMathAlphabet{\mathsfit}{\encodingdefault}{\sfdefault}{m}{sl}
\SetMathAlphabet{\mathsfit}{bold}{\encodingdefault}{\sfdefault}{bx}{n}
\DeclareRobustCommand\onedot{\futurelet\@let@token\@onedot}
\def\@onedot{\ifx\@let@token.\else.\null\fi\xspace}
\def\eg{\emph{e.g}\onedot} 
\def\ie{\emph{i.e}\onedot}
\def\wrt{w.r.t\onedot}
\DeclareMathOperator*{\argmax}{arg\,max}
\DeclareMathOperator*{\argmin}{arg\,min}
\DeclareMathOperator*{\er}{er}
\DeclareMathOperator*{\trace}{tr}
\DeclareMathOperator*{\vectorize}{vec}
\DeclareMathOperator*{\diag}{diag}
\DeclareMathOperator*{\rank}{rank}
\DeclareMathOperator*{\aer}{aer}
\DeclareMathOperator*{\kl}{kl}
\newcommand{\approptoinn}[2]{\mathrel{\vcenter{
  \offinterlineskip\halign{\hfil$##$\cr
    #1\propto\cr\noalign{\kern2pt}#1\sim\cr\noalign{\kern-2pt}}}}}
\newcommand{\appropto}{\mathpalette\approptoinn\relax}
\icmltitlerunning{Learning Expressive Priors for Generalization and Uncertainty Estimation in Neural Networks}
\begin{document}

\twocolumn[
\icmltitle{Learning Expressive Priors for Generalization and \\ Uncertainty Estimation in Neural Networks}



\icmlsetsymbol{equal}{*}

\begin{icmlauthorlist}
\icmlauthor{Dominik Schnaus}{equal,tum,mcml}
\icmlauthor{Jongseok Lee}{equal,dlr,kit}
\icmlauthor{Daniel Cremers}{tum,mcml}
\icmlauthor{Rudolph Triebel}{dlr,kit}
\end{icmlauthorlist}

\icmlaffiliation{tum}{TU Munich}
\icmlaffiliation{mcml}{MCML}
\icmlaffiliation{dlr}{DLR}
\icmlaffiliation{kit}{KIT}

\icmlcorrespondingauthor{Dominik Schnaus}{dominik.schnaus@tum.de}
\icmlcorrespondingauthor{Jongseok Lee}{jongseok.lee@dlr.de}

\icmlkeywords{Machine Learning, ICML, Bayesian Neural Networks, Bayesian Deep Learning, Continual Learning, Laplace Approximation, Learned Prior, Progressive Neural Networks}

\vskip 0.3in
]



\printAffiliationsAndNotice{\icmlEqualContribution} 

\begin{abstract}
In this work, we propose a novel prior learning method for advancing generalization and uncertainty estimation in deep neural networks. The key idea is to exploit scalable and structured posteriors of neural networks as informative priors with generalization guarantees. Our learned priors provide expressive probabilistic representations at large scale, like Bayesian counterparts of pre-trained models on ImageNet, and further produce non-vacuous generalization bounds. We also extend this idea to a continual learning framework, where the favorable properties of our priors are desirable. Major enablers are our technical contributions: (1) the sums-of-Kronecker-product computations, and (2) the derivations and optimizations of tractable objectives that lead to improved generalization bounds. Empirically, we exhaustively show the effectiveness of this method for uncertainty estimation and generalization. 
\end{abstract}

\section{Introduction}

Within the deep learning approach to real-world AI problems such as autonomous driving, generalization and uncertainty estimation are one of the most important pillars. To achieve this, Bayesian Neural Networks (BNNs)~\citep{mackay1992practical, hinton1993keeping, neal2012bayesian} leverage the tools of Bayesian statistics in order to improve generalization and uncertainty estimation in deep neural networks. Due to their potential and advancements so far, BNNs have become increasingly popular research topics~\citep{gawlikowski2021survey}. However, one of the open problems in BNNs is the prior specifications. While it is widely known that prior selection is a crucial step in any Bayesian modeling~\citep{bayes1763lii}, the choice of well-specified priors is generally unknown in neural networks. Consequently, many current approaches have resorted to uninformative priors, like isotropic Gaussian, despite reported signs of prior misspecifications~\citep{wenzel2020how, fortuin2021priors}.

To address the problem of prior specification, we propose a prior learning method for BNNs. Our method is grounded in sequential Bayesian inference~\citep{opper1999bayesian}, where the posteriors from the past are used as the prior for the future. We achieve this by relying on Laplace Approximation (LA) \citep{mackay1992practical} with Kronecker-factorization of the posteriors \citep{ritter2018scalable, daxberger2021laplace}. Within this framework, we devise key technical novelties to learn expressive BNN priors with generalization guarantees. First, we demonstrate the sums-of-Kronecker-product computations so that the posteriors in matrix normal distributions, \ie, Gaussian with Kronecker-factorized covariance matrices \citep{gupta2000matrix}, can be tractably used as expressive BNN priors. Second, to explicitly improve generalization, we derive and optimize over a tractable approximation of PAC-Bayes bounds \citep{mcallester1999some,germain2016pac} that lead to non-vacuous bounds, \ie, smaller than the upper bound of the loss. Finally, as an added benefit of our idea, a Bayesian re-interpretation to a popular continual learning architecture, namely progressive neural networks \citep{rusu2016progressive}, is provided for uncertainty-awareness, generalization, and resiliency to catastrophic forgetting.

By design, our method has many advantages for generalization and uncertainty estimation in neural networks. The proposed method achieves non-vacuous generalization bounds in deep learning models, while potentially avoiding prior misspecification for uncertainty estimation using BNNs. For these features, we provide computationally tractable techniques in order to learn expressive priors from large amounts of data and deep network architectures. We contend that such probabilistic representation at a large scale, e.g., pre-trained BNNs on ImageNet, can be beneficial in downstream tasks for both transfer and continual learning set-ups. Therefore, we further provide ablation studies and various experiments to show the aforementioned benefits of our method within the small and large-scale transfer and continual learning tasks. In particular, we empirically demonstrate that our priors mitigate cold posterior effects ~\citep{wenzel2020how} -- a potential sign of a bad prior -- and further produce generalization bounds. Moreover, within a practical scenario of robotic continual learning~\citep{denninger2018persistent}, and standardized benchmarks of few-shot classification~\citep{tran2022plex} for the recent uncertainty baselines~\citep{nado2021uncertainty}, considerable performance improvements over the popular methods are reported.

\textbf{Contributions} To summarize, our primary contribution is a novel method for learning scalable and structured informative priors (Section~\ref{sec:method}), backed up by (a) the sums-of-Kronecker-product computations for computational tractability (Section~\ref{sec:method:empirical:prior}), (b) derivation and optimizations over tractable generalization bounds (Section~\ref{sec:method:pacbayes}), (c) a Bayesian re-interpretation of progressive neural networks for continual learning (Section~\ref{sec:method:bayespnn}), (d) exhaustive experiments to show the effectiveness of our method (Section~\ref{sec:results}).
\section{Learning Expressive Priors}
\label{sec:method}

\subsection{Notation and Background}
\label{sec:method:background}
Consider a neural network $f_{\boldsymbol{\theta}}$ with layers $l \in [L] := \{1, \dots, L\}$ which consists of a parameterized function $\phi_l: \Omega_{l-1} \to \Omega_{l}$ and a non-linear activation function $\sigma_l: \Omega_{l} \to \Omega_{l}$. We denote an input for a network as $\mathbf{x} \in \Omega_{0} =: \mathcal{X}$ and all learnable parameters as a stacked vector $\boldsymbol{\theta}$. Then the pre-activation $\mathbf{s}_l$ and activation $\mathbf{a}_l$ are recursively applied with $\mathbf{a}_0 = \mathbf{x}$, $\mathbf{s}_l = \phi_l(\mathbf{a}_{l-1})$ and $\mathbf{a}_l = \sigma_l(\mathbf{s}_l)$. The output of the neural network $\mathbf{y}$ is given by the last activation $\mathbf{y} = \mathbf{a}_L = f_{\boldsymbol{\theta}}(\mathbf{x})$. The structure of the learning process is governed by different architectural choices such as fully connected, recurrent, and convolutional layers~\citep{goodfellow2016deep}. The parameters $\boldsymbol{\theta}$ are typically obtained by maximum likelihood principles, given training data $\mathcal{D} = \left((\mathbf{x}_i, \mathbf{y}_i)\right)_{i=1}^N$. Unfortunately, these principles lead to the lack of generalization and calibrated uncertainty in neural networks~\citep{jospin2020hands}.

To address these, BNNs provide a compelling alternative by applying Bayesian principles to neural networks. In BNNs, the first step is to specify a prior distribution $\pi(\boldsymbol{\theta})$. Then the Bayes theorem is used to compute the posterior distribution over the parameters, given the training data: $p(\boldsymbol{\theta} | \mathcal{D}) \propto \pi(\boldsymbol{\theta}) \prod_{i=1}^N p(\mathbf{y}_i | \mathbf{x}_i, f_{\boldsymbol{\theta}})$. This means that each parameter is not a single value, but a probability distribution, representing the uncertainty of the model \citep{gawlikowski2021survey}. The posterior distribution provides a set of plausible model parameters, which can be marginalized to compute the predictive distributions of a new sample $(\mathbf{x}^*, \mathbf{y}^*) \notin \mathcal{D}$ for BNNs: $p(\mathbf{y}^* | \mathbf{x}^*, \mathcal{D}) = \int p(\mathbf{y}^* | \mathbf{x}^*, f_{\boldsymbol{\theta}}) \rho(\boldsymbol{\theta}) d\boldsymbol{\theta}$. In the case of neural networks, the posteriors cannot be obtained in closed form. Hence, many of the current research efforts are on accurately approximating the posteriors of BNNs.

For this, LA~\citep{mackay1992practical} approximates the BNN posteriors with a Gaussian distributions around a local mode. Here, a prior is first specified as an isotropic Gaussian. With this prior, the maximum-a-posteriori (MAP) estimates of the parameters $\boldsymbol{\hat{\theta}}$ are obtained by training the neural networks. Then, the Hessian $\mathbf{H} = \frac{d^2}{{d\boldsymbol{\theta}}^2} \ln{p(\boldsymbol{\theta}|\mathcal{D})}=\mathbf{H}_{likelihood} + \mathbf{H}_{prior}$ is computed to obtain the BNN posteriors. In practice, the covariance matrix is usually further scaled which more generally corresponds to temperature scaling. In summary, the priors-posteriors pairs of LA are:
\begin{flalign}
\label{eq:la:prior:posterior}
    &\text{Prior:} \quad \pi(\boldsymbol{\theta})= \mathcal{N}(\boldsymbol{0}, \gamma \mathbf{I}), &&\\
    &\text{Posterior:} \quad p(\boldsymbol{\theta}|\mathcal{D}) \approx \mathcal{N}(\boldsymbol{\hat{\theta}}, (\mathbf{H}_{likelihood} + \mathbf{H}_{prior})^{-1}). &&\nonumber
\end{flalign}
As the Hessian is computationally intractable for modern architectures, the Kronecker-Factored Approximate Curvature (KFAC) method is widely adopted~\citep{martens2015optimizing}. KFAC approximates the true Hessian by a layer-wise block-diagonal matrix, where each diagonal block is the Kronecker-factored Fisher matrix $\mathbf{F}_l$. Therefore, defining the layer-wise activation $\mathbf{\bar{a}}_l$ and loss gradient w.r.t pre-activation $\mathcal{D} \mathbf{s}_l$, the inverse covariance matrix of the posteriors is~\citep{ritter2018scalable}: 
\begin{align}
\label{eq:la:kfac:hessian}
    &\mathbf{H} \approx\mathbf{F}=\diag(\mathbf{F}_1, \mathbf{F}_2, \cdots, \mathbf{F}_L) + \gamma \mathbf{I}, \\
    &\text{where} \ \mathbf{F}_l \approx \mathbb{E}[\mathcal{D} \mathbf{s}_l (\mathcal{D} \mathbf{s}_l)^T] \otimes \mathbb{E}[\mathbf{\bar{a}}_l (\mathbf{\bar{a}}_l)^T]=\mathbf{L}_l \otimes \mathbf{R}_l.\nonumber
\end{align}
In this way, using LA, BNN posteriors can be obtained by approximating the Hessian of neural networks. We provide more details on LA and KFAC in \cref{sec:laplace_approximation}.

We note several ramifications of this formulation for learning BNN priors from data. First, the BNN posteriors can be obtained by approximating the Hessian, which can be tractably computed from large amounts of data and deeper neural network architectures~\citep{lee2020estimating, ba2017distributed}. Second, the resulting BNN posteriors can capture the correlations between the parameters within the same layer~\citep{ritter2018scalable}. Moreover, several open-source libraries exist to ease the implementations~\citep{daxberger2021laplace,humt2020bayesian}. All these points result in easily-obtainable BNN posteriors with expressive probabilistic representation from large amounts of data, deep architectures, and parameter correlations. As opposed to isotropic Gaussian, we next demonstrate that these BNN posteriors can be used to learn expressive prior distributions in order to advance generalization and uncertainty estimation within the transfer and continual learning set-ups.

\subsection{Empirical Prior Learning with Sums-of-Kronecker-Product Computations}
\label{sec:method:empirical:prior}

Intuitively, the idea is to repeat the LA with the prior chosen as the posterior from \cref{eq:la:prior:posterior}, similar to Bayesian filtering. Since we use the LA with a Kronecker-factored covariance matrix in both the prior and the posterior, we want to approximate the resulting sum of Kronecker products with a single Kronecker product. We denote the task to compute the prior as source task $\mathfrak{T}_0$ and the task to compute the posterior as target task $\mathfrak{T}_1$, both consisting of a data-set $\mathcal{D}_t$ from a distribution $P_t$, $t \in \{0, 1\}$.

Applying LA on $\mathcal{D}_0$, we obtain the model parameters $\boldsymbol{\hat{\theta}}^{(0)}$ and the posteriors $p(\boldsymbol{\theta}|\mathcal{D}_0)$ as before (equations \ref{eq:la:prior:posterior} and \ref{eq:la:kfac:hessian}). Then, for the target task, we specify the prior using the posteriors from $\mathcal{D}_0$: $\pi(\boldsymbol{\theta}) = p(\boldsymbol{\theta}|\mathcal{D}_0)$. The ultimate goal is to compute the new posteriors on $\mathcal{D}_1$, \ie, $p(\boldsymbol{\theta}|\mathcal{D}_1) \propto p(\mathcal{D}_1|f_{\boldsymbol{\theta}}) \pi(\boldsymbol{\theta})$. To achieve this, we train the neural networks on $\mathcal{D}_1$ by optimizing:
\begin{alignat}{2}
    \boldsymbol{\hat{\theta}}^{(1)} &\in \argmax_{\boldsymbol{\theta}} &&\{p(\boldsymbol{\theta}|\mathcal{D}_1)\}\nonumber\\
    &= \argmin_{\boldsymbol{\theta}} &&\{-\ln{\left(p(\mathcal{D}_1|f_{\boldsymbol{\theta}})\right)}\\
    & &&+ \frac{1}{2}(\boldsymbol{\theta} - \boldsymbol{\hat{\theta}}^{(0)})^T(\mathbf{F}^{(0)} + \gamma \mathbf{I})(\boldsymbol{\theta} - \boldsymbol{\hat{\theta}}^{(0)})\}\nonumber,
\end{alignat}
where $\boldsymbol{\hat{\theta}}^{(1)}$ is the MAP estimate of the model parameters for task $\mathfrak{T}_1$ and $\mathbf{F}^{(0)}$ is the block-diagonal Kronecker-factored Fisher matrix from task $\mathfrak{T}_0$. The final step is to approximate new Hessian on $\mathcal{D}_1$ using the KFAC method. This results in
\begin{flalign}
    &\text{Prior:} \quad \pi = \mathcal{N}(\boldsymbol{\hat{\theta}}^{(0)}, (\mathbf{F}^{(0)} + \gamma \mathbf{I})^{-1}) && \\
    &\text{Posterior:} \quad \rho = \mathcal{N}(\boldsymbol{\hat{\theta}}^{(1)}, \tau (\mathbf{F}^{(1)} + \mathbf{F}^{(0)} + \gamma \mathbf{I})^{-1}), &&\nonumber
\end{flalign}
where $\tau$ is the temperature scaling~\citep{wenzel2020how, daxberger2021laplace}. Here, the precision matrix of the new BNN posteriors is represented by the sum-of-Kronecker-products, \ie, $\mathbf{\tilde{F}}^{(1)} = \mathbf{F}^{(1)} + \mathbf{F}^{(0)} + \gamma \mathbf{I} = \mathbf{L}^{(0)} \otimes \mathbf{R}^{(0)}  + \mathbf{L}^{(1)} \otimes \mathbf{R}^{(1)}  + \gamma \mathbf{I} \otimes \mathbf{I}$.

\begin{algorithm}[ht]
   \caption{Power method for sums of Kronecker products}
   \label{alg:kronpower:main}
\begin{algorithmic}[1]
   \STATE \textbf{Input}: left matrices $(\mathbf{L}^k)_{k \in [K]}$, right matrices $(\mathbf{R}^k)_{k \in [K]}$, number of steps $n^{max}=100$, and stopping precision $\delta=10^{-5}$.
   \STATE \textbf{Output}: The solutions $\mathbf{\hat{L}}$ and $ \mathbf{\hat{R}}$. 
   \STATE $\vectorize(\mathbf{\Bar{L}}^{(0)}) \leftarrow \mathcal{N}(\mathbf{0}, \mathbf{I})$ \COMMENT{initialization of $\mathbf{\Bar{L}}^{(0)}$}
   \STATE $\mathbf{L}^{(0)} \leftarrow \frac{\mathbf{\Bar{L}}^{(0)}}{\|\mathbf{\Bar{L}}^{(0)}\|_F}$ \COMMENT{normalize $\mathbf{\Bar{L}}^{(0)}$}
   \FOR{$n=1$ to $n^{max}$}
        \STATE $\mathbf{\Bar{R}}^{(n)} \leftarrow \sum_{k=1}^K \langle \mathbf{L}^k, \mathbf{L}^{(n-1)} \rangle_F \mathbf{R}^k$ \COMMENT{first power step}
        \STATE $\mathbf{R}^{(n)} \leftarrow \frac{\mathbf{\Bar{R}}^{(n)}}{\|\mathbf{\Bar{R}}^{(n)}\|_F}$ \COMMENT{normalize $\mathbf{\Bar{R}}^{(n)}$}
        \STATE $\mathbf{\Bar{L}}^{(n)} \leftarrow \sum_{k=1}^K \langle \mathbf{R}^k, \mathbf{R}^{(n)} \rangle_F \mathbf{L}^k$ \COMMENT{second power step}
        \STATE $\mathbf{L}^{(n)} \leftarrow \frac{\mathbf{\Bar{L}}^{(n)}}{\|\mathbf{\Bar{L}}^{(n)}\|_F}$ \COMMENT{normalize $\mathbf{\Bar{L}}^{(n)}$}
        \IF{$\|\mathbf{L}^{(n)} - \mathbf{L}^{(n-1)}\|_F < \delta$}
            \STATE {\bfseries break} \COMMENT{stopping criterion}
        \ENDIF
   \ENDFOR
   \STATE $\mathbf{\hat{L}} \leftarrow \mathbf{L}^{(n)}$
   \STATE $\mathbf{\hat{R}} \leftarrow \sum_{k=1}^K \langle \mathbf{L}^k, \mathbf{L}^{(n)} \rangle_F \mathbf{R}^k$ \COMMENT{first power step}
\end{algorithmic}
\end{algorithm}

Unfortunately, the sum-of-Kronecker-products is not a Kronecker-factored matrix. As a result, the above formulation loses all the essence of Kronecker factorization for modeling high-dimensional probability distributions. For example, we can no longer exploit the rule: $(\mathbf{L} \otimes \mathbf{R})^{-1} = (\mathbf{L}^{-1} \otimes \mathbf{R}^{-1})$ where $\mathbf{L}$ and $\mathbf{R}$ are smaller matrices to store and invert when compared to $(\mathbf{L} \otimes \mathbf{R})$. Even more, the storage and the inverse of $\mathbf{\tilde{F}}^{(1)}$ may not be computationally tractable for modern architectures.

To this end, we devise the sum-of-Kronecker-products computations. Concretely, we approximate the sum-of-Kronecker-products as Kronecker-factored matrices by an optimization formulation:\footnote{For the similar causes to maintain the Kronecker factorization, \citet{ritter2018scalable,ritter2018online} assume $(\mathbf{L} \otimes \mathbf{R} + \gamma \mathbf{I} )^{-1} = (\mathbf{L} + \gamma \mathbf{I})^{-1} \otimes (\mathbf{R}+ \gamma \mathbf{I})^{-1}$ which does not hold in general (see Section~\ref{sec:results:ablations:main}).}\footnote{Our formulation for one single Kronecker-factored matrix is similar to \citet{kao2021natural}.}
\begin{align}
\label{eq:opt_kron_sum}
    \mathbf{\hat{L}}, \mathbf{\hat{R}} \in \argmin_{\substack{\mathbf{L} \in \mathbb{R}^{M \times M}\\ \mathbf{R} \in \mathbb{R}^{N \times N}}} \|\sum_{k=1}^K \mathbf{L}^k \otimes \mathbf{R}^k - \mathbf{L} \otimes \mathbf{R}\|_F.
\end{align}
A solution to this problem is not unique, e.g., one could scale  $\mathbf{\hat{L}}$ by $\alpha \neq 0$ and $\mathbf{\hat{R}}$ by $\frac{1}{\alpha}$. Hence, we assume that $\mathbf{\hat{L}}$ is normalized, $\|\mathbf{\hat{L}}\|_F = 1$ (or alternatively, we can also assume $\|\mathbf{\hat{R}}\|_F = 1$). For the solution, we show the equivalence to the well-known best rank-one approximation problem. 

\begin{lemma}
\label{lem:vec_kron}
Let $M, N, K \in \mathbb{N}$, $\mathbf{L}^k \in \mathbb{R}^{M \times M}$ and $\mathbf{R}^k \in \mathbb{R}^{N \times N}$ for $k \in [K]$. Then
\begin{flalign}
    \label{eq:vec_kron}
    &\|\sum_{k=1}^K \mathbf{L}^k \otimes \mathbf{R}^k - \mathbf{L} \otimes \mathbf{R}\|_F &&\\
    &= \|\sum_{k=1}^K \vectorize(\mathbf{L}^k) \vectorize(\mathbf{R}^k)^T - \vectorize(\mathbf{L}) \vectorize(\mathbf{R})^T\|_F.&&\nonumber
\end{flalign}
\end{lemma}
\begin{proof}
The proof can be found in \cref{subsec:lemma1_proof}.
\end{proof}
This result indicates that the solution to \eqref{eq:opt_kron_sum} can be obtained using the power method, which iterates: 
\begin{flalign*}
&\mathbf{R}^{(n)} \leftarrow \frac{\sum_{k=1}^K \langle \mathbf{L}^k, \mathbf{L}^{(n-1)} \rangle_F \mathbf{R}^k }{\|\sum_{k=1}^K \langle \mathbf{L}^k, \mathbf{L}^{(n-1)} \rangle_F \mathbf{R}^k\|_F} \quad \text{and}&&\\
&\mathbf{L}^{(n)} \leftarrow \frac{\sum_{k=1}^K \langle \mathbf{R}^k, \mathbf{R} \rangle_F \mathbf{L}^k} {\|\sum_{k=1}^K \langle \mathbf{R}^k, \mathbf{R} \rangle_F \mathbf{L}^k\|_F}.&&
\end{flalign*}
Given randomly initialized matrices, the power method iterates until the stop criterion is reached. The full procedure is presented in \cref{alg:kronpower:main}, whereas in Appendix~\ref{appendix:bpnn_overview}, we discuss the computational complexity of the algorithm.

Importantly, we can now obtain a single Kronecker factorization from the sum of Kronecker products. Such Kronecker-factored approximations have advantages on tractable memory consumption for the storage and the inverse computations, which enables sampling from the resulting matrix normal distributions~\citep{martens2015optimizing}. Therefore, such computations allow us to learn expressive BNN priors from the posterior distributions of previously encountered tasks. Finally, we can also prove the convergence of the power method to an optimal rank-one solution.
\begin{lemma}
    \label{lem:opt_kron_sum}
    Let $\mathbf{A} = \sum_{k=1}^K \vectorize(\mathbf{L}^k) \vectorize(\mathbf{R}^k)^T$ and $\mathbf{A} = \sum_{i=1}^r \sigma_i \mathbf{u}_i \mathbf{v}_i^T$ be its singular value decomposition with $\sigma_1 \geq \sigma_2 \geq \dots \geq \sigma_r > 0$ and $\mathbf{u}_i^T \mathbf{u}_j = \mathbf{v}_i^T \mathbf{v}_j = \mathbbm{1}[i = j]$. Then there is a solution of Equation~\ref{eq:opt_kron_sum} with $\vectorize(\mathbf{\hat{L}}) = \mathbf{u}_1$,$ \vectorize(\mathbf{\hat{R}}) = \sigma_1 \mathbf{v}_1$. If $\sigma_1 > \sigma_2$, the solution is unique up to changing the sign of both factors, and our power method converges almost surely to this solution.
\end{lemma}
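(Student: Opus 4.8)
The plan is to combine \cref{lem:vec_kron} with the Eckart--Young--Mirsky theorem and classical power-iteration analysis. By \cref{lem:vec_kron}, minimizing the objective in \eqref{eq:opt_kron_sum} over $(\mathbf{L},\mathbf{R})$ is equivalent to minimizing $\|\mathbf{A} - \vectorize(\mathbf{L})\vectorize(\mathbf{R})^T\|_F$, i.e.\ to computing a best Frobenius-norm rank-one approximation of $\mathbf{A}$. Eckart--Young--Mirsky states that this minimum is attained at the matrix $\sigma_1 \mathbf{u}_1\mathbf{v}_1^T$ (with residual $(\sum_{i\ge 2}\sigma_i^2)^{1/2}$). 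Hence the choice $\vectorize(\mathbf{\hat{L}}) = \mathbf{u}_1$, $\vectorize(\mathbf{\hat{R}}) = \sigma_1 \mathbf{v}_1$ gives $\vectorize(\mathbf{\hat{L}})\vectorize(\mathbf{\hat{R}})^T = \sigma_1\mathbf{u}_1\mathbf{v}_1^T$, which is optimal, and it meets the normalization convention since $\|\mathbf{\hat{L}}\|_F = \|\mathbf{u}_1\|_2 = 1$; because $\vectorize$ is a linear bijection this produces well-defined $\mathbf{\hat{L}},\mathbf{\hat{R}}$, proving the first claim.

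For uniqueness when $\sigma_1 > \sigma_2$ (reading $\sigma_2 := 0$ if $r=1$), Eckart--Young--Mirsky moreover gives that $\sigma_1\mathbf{u}_1\mathbf{v}_1^T$ is the \emph{unique} best rank-one approximation, so any minimizer with $\|\mathbf{\hat{L}}\|_F = 1$ must satisfy $\vectorize(\mathbf{\hat{L}})\vectorize(\mathbf{\hat{R}})^T = \sigma_1\mathbf{u}_1\mathbf{v}_1^T$. I would then invoke the elementary fact that in a rank-one factorization $\mathbf{a}\mathbf{b}^T$ the vector $\mathbf{a}$ is determined up to scaling, and, once $\|\mathbf{a}\|_2$ is fixed, up to sign: since $\mathbf{a}$ must span the column space $\mathrm{span}\{\mathbf{u}_1\}$, the constraint $\|\vectorize(\mathbf{\hat{L}})\|_2 = 1$ forces $\vectorize(\mathbf{\hat{L}}) = \pm\mathbf{u}_1$ and then $\vectorize(\mathbf{\hat{R}}) = \pm\sigma_1\mathbf{v}_1$ with the same sign. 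Transporting back through $\vectorize$ yields uniqueness up to a simultaneous sign flip of both factors.

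The remaining and most delicate part is the convergence of the power method. The key step is to unfold one outer iteration of \cref{alg:kronpower:main} in vectorized form: using $\langle \mathbf{L}^k,\mathbf{L}^{(n-1)}\rangle_F = \vectorize(\mathbf{L}^k)^T\vectorize(\mathbf{L}^{(n-1)})$ one obtains $\vectorize(\mathbf{\bar{R}}^{(n)}) = \mathbf{A}^T\vectorize(\mathbf{L}^{(n-1)})$ and then $\vectorize(\mathbf{\bar{L}}^{(n)}) = \mathbf{A}\,\vectorize(\mathbf{R}^{(n)})$, so that after composing the two normalization steps
\[
\vectorize(\mathbf{L}^{(n)}) \;=\; \frac{\mathbf{A}\mathbf{A}^T\,\vectorize(\mathbf{L}^{(n-1)})}{\|\mathbf{A}\mathbf{A}^T\,\vectorize(\mathbf{L}^{(n-1)})\|_2}.
\]
This is precisely normalized power iteration for the symmetric positive-semidefinite matrix $\mathbf{A}\mathbf{A}^T$, whose top eigenvalue is $\sigma_1^2$ with eigenvector $\mathbf{u}_1$ and whose gap $\sigma_1^2 > \sigma_2^2 \ge 0$ is strict exactly because $\sigma_1 > \sigma_2 \ge 0$. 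Standard power-iteration theory then gives $\vectorize(\mathbf{L}^{(n)}) \to \sign\!\big(\langle \vectorize(\mathbf{L}^{(0)}),\mathbf{u}_1\rangle\big)\,\mathbf{u}_1$ whenever $\langle \vectorize(\mathbf{L}^{(0)}),\mathbf{u}_1\rangle \neq 0$; positivity of the eigenvalues of $\mathbf{A}\mathbf{A}^T$ precludes sign oscillation, and the $\mathbf{u}_1$-component of every iterate stays nonzero, so all normalizations are well defined. Since $\vectorize(\mathbf{\bar{L}}^{(0)})$ is drawn from a nondegenerate Gaussian, the bad event $\langle \vectorize(\mathbf{L}^{(0)}),\mathbf{u}_1\rangle = 0$ has probability zero, which is the source of the ``almost surely.'' Finally $\vectorize(\mathbf{\hat{R}}) = \mathbf{A}^T\vectorize(\mathbf{L}^{(n)}) \to \pm\mathbf{A}^T\mathbf{u}_1 = \pm\sigma_1\mathbf{v}_1$, so the pair returned by the algorithm converges a.s.\ to the unique-up-to-sign optimum found above.

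I expect the main obstacle to be the power-method part: correctly tracking the $\vectorize$ identifications through the two half-updates and their normalizations, invoking the power-iteration convergence statement with the right hypotheses (strict spectral gap, nonzero initial overlap), and cleanly isolating the measure-zero exceptional initializations from the Gaussian draw. The rank-one-approximation and uniqueness parts are essentially immediate once \cref{lem:vec_kron} and Eckart--Young--Mirsky are in hand.
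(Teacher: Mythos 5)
Your proposal is correct and follows essentially the same route as the paper's proof: reduce via Lemma~\ref{lem:vec_kron} to a best rank-one approximation, apply Eckart--Young--Mirsky for optimality and uniqueness up to sign, and show that the two half-updates of Algorithm~\ref{alg:kronpower:main} compose to normalized power iteration on $\mathbf{A}\mathbf{A}^T$, with the almost-sure convergence coming from the Gaussian initialization having nonzero overlap with $\mathbf{u}_1$ almost surely. Your treatment is, if anything, slightly more explicit than the paper's about the uniqueness of the factors and the measure-zero exceptional initializations.
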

\begin{proof}
The proof can be found in \cref{subsec:lemma2_proof}.
\end{proof}

\subsection{Derivations and Optimizations over Tractable PAC-Bayes Bounds}
\label{sec:method:pacbayes}

So far, we have presented a method for learning a scalable and structured prior. In this section, we show how generalization bounds for the LA can be adapted to explicitly minimize the generalization bounds with the learned prior. In particular, the proposed approach allows tuning the hyperparameters of the LA on the training set without a costly grid search. This is achieved by optimizing a differentiable cost function that approximates generalization bounds for the LA. We choose to optimize generalization bounds to trade off the broadness of the distribution and the performance on the training data to improve generalization.

A common method in BNNs, and especially in LA, is to scale the covariance matrix by a positive scalar called the temperature $\tau > 0$~\citep{wenzel2020how, daxberger2021laplace}. In addition, often either the Hessian (or Fisher matrix) of the likelihood~\citep{ritter2018scalable, lee2020estimating} or the prior~\citep{gawlikowski2021survey} is scaled. Including all these terms, the posterior has the following form:
\begin{align}
    \rho = \mathcal{N}(\boldsymbol{\hat{\theta}}^{(1)}, \tau (\beta \mathbf{F}^{(1)} + \alpha \mathbf{\tilde{F}}^{(0)})^{-1}),
\end{align}
where $\mathbf{\tilde{F}}^{(0)} = \mathbf{F}^{(0)} + \gamma \mathbf{I}$ is the precision matrix of the prior. This reweighting of the three scalars allows us to cope with misspecified priors~\citep{wilson2020bayesian}, approximation errors in the Fisher matrices, and to improve the broadness of the distribution. While these values are usually hyperparameters that have to be manually scaled by hand, we argue that these values should be application-dependent. Therefore, we propose to find all three scales -- $\alpha$, $\beta$, and $\tau$ -- by minimizing generalization bounds.

In the following, we will first introduce our two approaches and then explain how the optimization of PAC-Bayes bounds can be made tractable for the LA.

\textbf{Method 1: Curvature Scaling} 
\textit{Previous approaches typically scale only one or two of the scales~\citep{daxberger2021laplace, ritter2018scalable, lee2020estimating}. With our automatic scaling, we can optimize all three scales for each individual layer, allowing the model to decide the weighting in a more fine-grained way. We can use this method not only to scale the posterior, but also to compute the prior. Thus, the prior and posterior are defined as}
\begin{align}
    \pi = \mathcal{N}(\boldsymbol{\hat{\theta}}^{(0)}, (\mathbf{\tilde{F}}^{(0)})^{-1}),\quad \rho = \mathcal{N}(\boldsymbol{\hat{\theta}}^{(1)}, (\mathbf{\tilde{F}}^{(1)})^{-1}),
\end{align}
\textit{where the diagonal blocks of the precision matrix corresponding to each layer are defined as} $\mathbf{\tilde{F}}^{(0)}_l = \frac{1}{\mathbf{\tau}^{(0)}_l}(\mathbf{\beta}^{(0)}_l \mathbf{F}^{(0)}_l + \mathbf{\alpha}^{(0)}_l \gamma \mathbf{I})$ and $\mathbf{\tilde{F}}^{(1)}_l = \frac{1}{\mathbf{\tau}^{(0)}_l}(\mathbf{\beta}^{(1)}_l \mathbf{F}^{(1)}_l + \mathbf{\alpha}^{(1)}_l \mathbf{\tilde{F}}^{(0)}_l)$, \textit{respectively.}
\begin{remark}
Although the temperature scaling could be captured within the other curvature scales, we find it easier to optimize the bounds with all three parameters per layer.
\end{remark}

\textbf{Method 2: Frequentist Projection}
\textit{Going one step further, we propose to scale not only the curvature but also the network parameters using PAC-Bayes bounds. Since the Fisher matrix is known only after training, we assume the same covariance for the posterior as for the prior when optimizing the network parameters. Since this method does not optimize for the MAP parameters, but only for minimizing the PAC-Bayes bounds, we call it frequentist projection.}

Both methods aim at improving the generalization of our model. To do this, we use PAC-Bayes bounds~\citep{germain2016pac, guedj2019primer} to derive a tractable objective that can be optimized without going through the data-set. The main idea of PAC-Bayes theory is to upper bound the expected loss on the true data distribution $P^N$ with high probability. The bounds typically depend on the expected empirical loss on the training data and the KL-divergence between a data-independent prior and the posterior distribution. For $\varepsilon > 0$, that is
\begin{align}
    \label{eq:pac_bayes}
    P_{\mathcal{D} \sim P^N} &( \forall \rho \ll \pi:\ \mathbb{E}_{\boldsymbol{\theta} \sim \rho}[\mathcal{L}^{l}_{P}(f_{\boldsymbol{\theta}})] \\
    &\leq \delta(\mathbb{E}_{\boldsymbol{\theta} \sim \rho}[\hat{\mathcal{L}}^{l}_{\mathcal{D}}(f_{\boldsymbol{\theta}})], \mathbb{KL}(\rho\|\pi), N, \varepsilon) ) \geq 1 - \varepsilon, \nonumber
\end{align}
where the loss on the true data distribution is denoted as $\mathcal{L}^{l}_{P}(f_{\boldsymbol{\theta}}) = \mathbb{E}_{(\mathbf{x}, \mathbf{y}) \sim P}[l(f_{\boldsymbol{\theta}}(\mathbf{x}), \mathbf{y})]$ and the empirical loss on the training data  is $\hat{\mathcal{L}}^{l}_{\mathcal{D}}(f_{\boldsymbol{\theta}})=\frac{1}{N}\sum_{i = 1}^N l(f_{\boldsymbol{\theta}}(\mathbf{x}), \mathbf{y})$. In general, the bounds balance the fit to the available data (empirical risk term) and the similarity of the posterior to the prior (the KL-divergence term), and the bounds hold with a probability greater than $1 - \varepsilon$. Various forms of the $\delta$ bound can be found in the literature, with varying degrees of tightness. For classification, we rely on the McAllester~\citep{mcallester1999some} and Catoni~\citep{catoni2007pac} bounds. \footnote{This work focuses on classification tasks as PAC-Bayes framework is usually for bounded losses. Yet, using recent theories of PAC-Bayes, we also comment on regression in Appendix~\ref{subsec:regression:comments}.}

Since these bounds depend on the expected empirical loss, we have to iterate over the entire training data and sample from the posterior multiple times at each step in order to evaluate the bound once. This makes optimization intractable. Using the error function as a loss, we propose to compute an approximate upper bound by only using quantities that were already computed during the LA, \ie the Fisher matrix $\diag(\{\mathbf{F}_l\}_{l \in [L]})$ and the network parameters $\boldsymbol{\hat{\theta}}$:
\begin{align}
    \label{eq:approximate_upper_bound}
    &\mathbb{E}_{\boldsymbol{\theta} \sim \rho}\left[\frac{1}{N} \sum_{i=1}^N \mathbbm{1}[\argmax_{\mathbf{y}'} p(\mathbf{y}'|\mathbf{x}_i, f_{\boldsymbol{\theta}}) \neq \mathbf{y}_i] \right] \nonumber\\
    &\leq \mathbb{E}_{\boldsymbol{\theta} \sim \rho}\left[\frac{1}{N} \sum_{i=1}^N - \frac{\ln{p(\mathbf{y}_i|\mathbf{x}_i,f_{\boldsymbol{\theta}})}}{\ln{2}}\right] \\
    &\approx \frac{- \ln{p(\mathcal{D}|f_{\boldsymbol{\hat{\theta}}})} + \frac{1}{2} \sum_{l \in [L]} \mathbf{\tau}_l \trace \left(\mathbf{F}_l (\mathbf{\beta}_l \mathbf{F}_l + \mathbf{\alpha}_l \mathbf{\tilde{F}}_l)^{-1}\right)}{N \ln{2}},\nonumber
\end{align}
where $\mathbf{\tilde{F}}_l$ is from the precision matrix of the prior.  Here, we first use a second-order Taylor approximation of around $\boldsymbol{\hat{\theta}}$. Furthermore, the expectation can be converted to a trace by using the cyclic property of the trace together with the fact that the posterior is a multivariate Gaussian. The negative data log-likelihood of the optimal parameters can be computed jointly with the Fisher matrix during the LA. We denote this approximation as $\aer(\mathbf{\alpha}, \mathbf{\beta}, \mathbf{\tau})$.  On the other hand, the KL-divergence can also be computed in closed form for our prior-posterior pair, since both distributions are multivariate Gaussians. Thus, we can plug both terms into the McAllester bound~\citep{guedj2019primer} to obtain the objective
\begin{align}
    \label{eq:mc_allester_objective}
    ma(\mathbf{\alpha}, \mathbf{\beta}, \mathbf{\tau}) = \aer(\mathbf{\alpha}, \mathbf{\beta}, \mathbf{\tau}) + \sqrt{\frac{\kl(\mathbf{\alpha}, \mathbf{\beta}, \mathbf{\tau}) + \ln{\frac{2\sqrt{N}}{\varepsilon}}}{2 N}}
\end{align}
where we write $\kl(\mathbf{\alpha}, \mathbf{\beta}, \mathbf{\tau})=\mathbb{KL}(\rho\|\pi)$ for the KL-divergence to emphasize the dependence on the scales. Similarly for the Catoni bound~\citep{catoni2007pac}, we obtain
\begin{align}
    \label{eq:catoni_objective}
    ca(\mathbf{\alpha}, \mathbf{\beta}, \mathbf{\tau}) = \inf_{c > 0} \frac{1 - \exp(-c \aer(\mathbf{\alpha}, \mathbf{\beta}, \mathbf{\tau}) - \frac{\kl(\mathbf{\alpha}, \mathbf{\beta}, \mathbf{\tau}) - \ln{\varepsilon}}{N})}{1 - \exp(-c)}.
\end{align}
Overall, these objectives can be evaluated and minimized without using any data samples. As opposed to the cross-validated marginal likelihood or other forms of grid searching, we (a) obtain generalization bounds and analysis, (b) do not need a separate validation set, (c) can find multiple hyperparameters, i.e., scale better with the dimensionality of the considered hyperparameters due to the differentiability of the proposed objectives, and (d) the complexity of the optimization is independent of the data set size and the complexity of the forward pass. The last point is because we only use the precomputed terms from the LA. A full derivation of our objective is given in \cref{appendix:derivations}.

\subsection{Bayesian Progressive Neural Networks}
\label{sec:method:bayespnn}

\begin{figure}
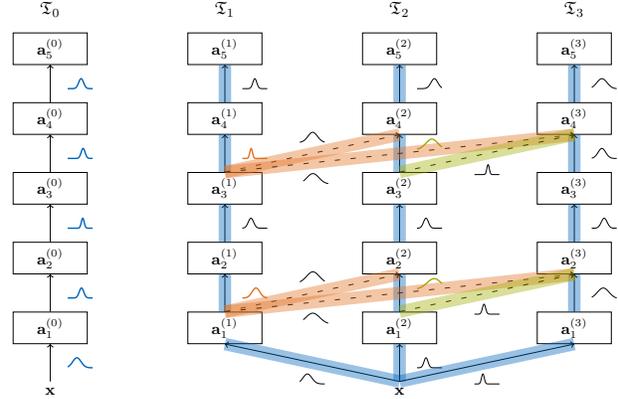

    \centering
    \includestandalone[width=\columnwidth]{tikz/method/pbnn}
    \caption{Illustration of our continual learning architecture. BPNNs use the empirically learned prior from task $\mathfrak{T}_0$ (blue) in all columns. The prior for the lateral connections is the posterior from the lateral connection (orange and green). Best viewed in color.}
    \label{fig:pbnn}
    \vspace*{-5mm}
\end{figure}

\begin{figure*}
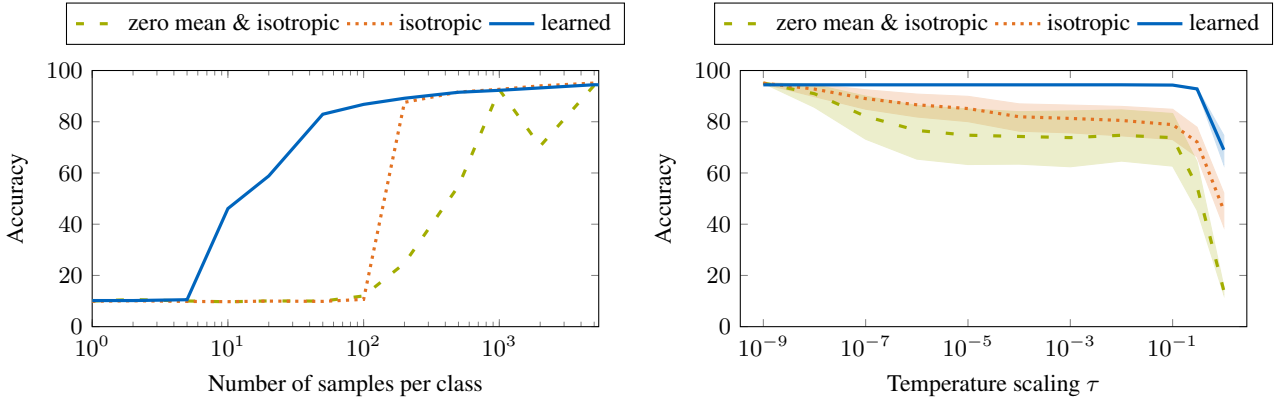

    \centering
    \begin{subfigure}[b]{\textwidth}
        \begin{center}
        \begin{scriptsize}
        \begin{sc}
         \centering
         \caption{PAC-Bayes bounds on the NotMNIST data-set using five different seeds. The ablations, namely baseline (zero mean isotropic Gaussian), grid search, learned prior (our method without PAC-Bayes objectives), curvature scaling, and frequentist projection, validate the design of our approach by each step improving the PAC-Bayes bounds, leading to a non-vacuous bound when all methods are combined.}
         \label{tab:results_each_contribution}
         \begin{tabular}{llllll}
            \toprule
            Method & Baseline & + grid Search & + learned prior & + curvature scaling & + frequentist projection \\
            \midrule
            Catoni bound & $0.999 \pm 0.001$ & $0.990 \pm 0.003$ & $0.978 \pm 0.001$ & $0.925 \pm 0.010$ & $\mathbf{0.885 \pm 0.015}$ \\
            McAllester bound & $2.185 \pm 0.557$ & $1.489 \pm 0.041$ & $1.347 \pm 0.004$ & $1.098 \pm 0.026$ & $\mathbf{1.006 \pm 0.031}$ \\
             \bottomrule
         \end{tabular} 
         \end{sc}
        \end{scriptsize}
        \end{center}
     \end{subfigure}
     \vskip 0.05in
     \hfill
     \begin{subfigure}[b]{0.485\textwidth}
         \centering
         \includestandalone{tikz/results/results_small_data_1e-5_1e-5}
         \caption{The learned prior (blue, solid) needs a magnitude fewer data to have the same accuracy as an isotropic prior around zero (green, dashed) or around the pre-trained weights (orange, dotted). Note that we use a log-log scale.}
         \label{fig:results_small_data}
     \end{subfigure}
     \hfill
     \begin{subfigure}[b]{0.485\textwidth}
         \centering
         \includestandalone{tikz/results/results_cold_posterior_1e-5}
         \caption{The cold posterior effect is more prominent for the isotropic priors (zero mean: green, dashed; pre-trained mean: orange, dashed) than for the learned prior (blue, solid). This means that the temperature scale can be larger without degrading accuracy.}
         \label{fig:results_cold_posterior}
     \end{subfigure}
    \caption{The results of the ablation studies. The results show that the combination of our approaches can lead to non-vacuous bounds (a) and that the learned prior is more data-efficient (b) and needs less temperature scaling (c) than isotropic priors.}
    \label{fig:results_ablations}
\end{figure*}

Having the essentials of learning BNN priors with generalization guarantees, we now present our extension to continual learning, which shows the versatility of our prior learning method. Here, we use so-called Progressive Neural Networks (PNNs) \citep{rusu2016progressive}, where a new neural network (column) is added for each new incoming task, and features from previous columns are also added for positive transfer between each task. Thus, PNNs are immune to catastrophic forgetting at the cost of an increase in memory, while being also applicable in transfer learning more generally \citep{wang2017how, rusu2017sim}. As such, we extend the set-up in Section~\ref{sec:method:empirical:prior} by sequentially considering T tasks: $\mathfrak{T}_1, \dots, \mathfrak{T}_T$. Moreover, to keep the generality of our method, an additional task $\mathfrak{T}_0$ is defined for the priors.

The idea behind our Bayesian re-interpretation of PNNs, dubbed as BPNNs for Bayesian PNNs, is as follows. First, the BNN posteriors are learned at task $\mathfrak{T}_0$ (depicted in \cref{fig:pbnn}). Then, for an incoming sequence of tasks, the BNN priors are specified from the BNN posteriors from $\mathfrak{T}_0$. The proposed methods of the sums-of-Kronecker-products and PAC-Bayes bounds are used. For the lateral connections, the BNN posterior from which the lateral connection originates is used. This ensures that the weight distribution from the prior already produces reasonable features given the activations from the previous column. Altogether, the resulting architecture accounts for model uncertainty, generalization, and resiliency to catastrophic forgetting. All these properties are desirable to have within one unified framework. We note that BPNN is in line with our PAC-Bayes theory, i.e., we increase the complexity without increasing the PAC-Bayes bounds. This is because we can reuse features from previous columns even though they do not contribute to the bounds due to their a-priori fixed weight distribution. In \cref{appendix:bpnn_overview,appendix:derivations}, we provide more details such as its full derivations, and its training and testing procedures. 
\section{Related Work}

There are different work streams related to this paper. The primary area is on BNN priors while we also contribute to Bayesian continual learning and PAC-Bayes theory.

\begin{figure*}
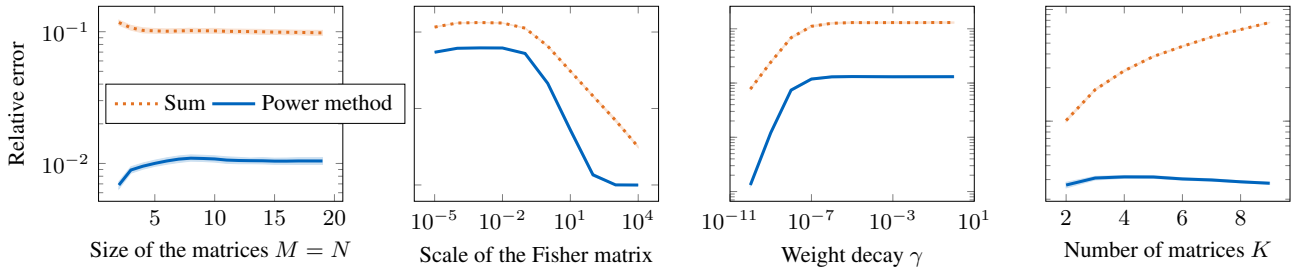

    \centering
    \begin{subfigure}[b]{\textwidth}
        \begin{center}
        \begin{scriptsize}
        \begin{sc}
         \centering
         \caption{Relative Frobenius error as a function of iteration. The convergence was reached in the second iteration.}
         \label{tab:results_convergence:kronecker}
         \begin{tabular}{llllllll}
            \toprule
            Iteration & 1 & 2 & 3 & 4 & 5 & 6 & 7 \\
            \midrule
            Error & $0.793 \pm 0.010$ & $0.049 \pm 0.001$ & $0.049 \pm 0.001$ & $0.049 \pm 0.001$ & $0.049 \pm 0.001$ & $0.049 \pm 0.001$ & $0.049 \pm 0.001$ \\
             \bottomrule
         \end{tabular} 
         \end{sc}
        \end{scriptsize}
        \end{center}
     \end{subfigure}
     \vskip 0.10in
     \hfill
     \begin{subfigure}[b]{\textwidth}
         \centering
         \includestandalone[scale=0.95]{tikz/appendix/results/skp_isotropic}
         \caption{Approximation quality on the sums-of-Kronecker products as two Kronecker factors, \ie, relative error to approximate $\sum_{k=1}^K \mathbf{L}^k \otimes \mathbf{R}^k - \mathbf{L} \otimes \mathbf{R}$. We vary the sizes of the matrices, the scale of the fisher matrix, weight decays within $(\mathbf{L} \otimes \mathbf{R} + \gamma \mathbf{I} )$, and the number of matrices for the sums-of-Kronecker products. The results show that the proposed method can be more accurate.} \label{fig:results_ablations:appendix:sum:kronecker:maintext}
     \end{subfigure}
     \vspace*{-5mm}
    \caption{The results of the ablation studies. The results show the accuracy of the proposed sums-of-Kronecker product computations.}
    \label{fig:results_ablations:kronecker}
    \vspace*{-5mm}
\end{figure*}

\textbf{Bayesian Neural Networks Priors} A prior specification is a prerequisite in Bayesian modeling. However, for neural network parameters, the choice of the prior is generally unknown. So far, uninformative priors such as isotropic Gaussian have been the de-facto-standard for BNNs~\citep{fortuin2021priors}. Such uninformative priors may cause undesirable effects such as cold posteriors and worse predictive performance than standard neural networks~\citep{wenzel2020how, fortuin2021bayesian}. To this end, recent research efforts have focused on exploiting function-space~\citep{sun2018functional}, sparsity-inducing weight-space~\citep{carvalho2009handling, ghosh2018structured}, structured~\citep{louizos2016structured}, and learning-based priors~\citep{immer2021scalable, fortuin2021bayesian, wu2019deterministic}. Amongst these, we build upon learning-based priors. Learning-based priors can be an alternative to uninformative prior when no useful prior knowledge is available to encode, or when there exists relevant data and tasks to learn from \citep{fortuin2021priors}.

The idea of learning a prior distribution on a similar task is certainly not new. In this domain, Bayesian meta-learning~\citep{thrun1998learning} and their modern extensions~\citep{rothfuss2021pacoh, finn2018probabilistic} can be viewed as another form of learning the prior from data, although their focus is typically not on advancing BNNs. Empirical prior learning is closely related to our work~\citep{robbins1992empirical}. For BNNs, \citet{fortuin2021bayesian} learns the empirical weight distributions during stochastic gradient descent. \citet{wu2019deterministic} used a moment-matching approach, while \citet{immer2021scalable} exploited the so-called Laplace-Generalized-Gauss-Newton method. In \citep{krishnan2020specifying}, the mean of a Gaussian prior distribution is learned from a relevant data-set. The concurrent approaches of \citet{shwartz2022pre, tran2022plex} share a similar spirit of learning expressive priors from large-scale data-set and architectures. The former utilizes so-called the SWAG~\citep{maddox2019simple} framework with an inspiring idea of combining self-supervised learning, while the latter builds upon Frequentist batch ensembles~\citep{wen2019batchensemble}. All these works show the strong relevance of learning-based priors for the current BNNs. Inspired by the aforementioned works, our main idea is to learn scalable and structured posterior as expressive informative priors, like Bayesian foundation models from broader data and models.

\textbf{Bayesian Continual Learning} Continual learning~\citep{thrun1995lifelong} methods can be broadly divided into replay-based, regularization-based, and parameter-isolation methods~\citep{de2021continual}. We build on parameter-isolation methods, e.g. \citet{rusu2016progressive}, where different model parameters are dedicated to new tasks. Within this branch, Bayesian methods have been investigated \citep{Ebrahimi2020Uncertainty-guided,ardywibowo2022varigrow,kumar2021bayesian}, which could benefit from our work on advancing the BNN priors. \citet{rudner2022continual} shares a similar spirit of bringing the state-of-the-art BNN priors to Bayesian continual learning by relying on the function space priors~\citep{sun2018functional}. On the other hand, using learning-based expressive priors, we design a single unified framework of continual learning for generalization, uncertainty-awareness, and resiliency to catastrophic forgetting.

\textbf{Generalization Theory} In supervised learning, we can obtain models with a bound on the true expected loss from its true data-generating process. Typical early works involved Vapnik-Chervonenkis theory and Rademacher complexity~\citep{vapnik2015uniform, shalev2014understanding}. However, for high-dimensional models like neural networks, these methods often provide vacuous bounds. In recent years, PAC-Bayes theory~\citep{mcallester1999some,germain2016pac} has become an alternative method with wide applications. The seminar paper of \citep{germain2016pac} showed the connection to approximate Bayesian inference. \citet{rothfuss2021meta, rothfuss2021pacoh} devise compelling meta-learning priors for BNNs with generalization guarantees. These works form our inspiration to explore PAC-Bayes theory for learning-based BNN priors. We note that our goal is not to advance PAC-Bayes theory, but to investigate a method for scaling the BNN priors with an approximate differentiable objective for generalization.
\section{Results}
\label{sec:results}

The goal of the experiments is to investigate whether our approach provides generalization and calibrated uncertainty estimates. To this end, in addition to ablation studies on the presented algorithm, we show its utility for continual learning and uncertainty estimation. Implementation details are presented in Appendix~\ref{appendix:implementation:details}. The code is released at \url{https://github.com/DLR-RM/BPNN}.

\begin{table*}
    \begin{center}
    \begin{scriptsize}
    \begin{sc}
     \centering
     \caption{Continual learning experiment. BPNN with a learned prior (our method) improves the averaged accuracy over all tasks compared to using an isotropic prior around zero or around a learned mean. Our method also improves over PNN with and without MC Dropout. Following \citet{denninger2018persistent}, each column represents incoming continual learning tasks, where we receive new objects.}
     \label{tab:results_continual learning}
    \begin{tabular}{llllllll}
        \toprule
         & Other & Banana & Coffee Mug & Stapler & Flashlight & Apple & Average \\
         \midrule
        PNN (weight decay $10 ^{-3}$) & $95.7 \pm 0.4$ &  $98.5 \pm 2.3$ &  $\mathbf{100.0 \pm 0.0}$ &  $91.7 \pm 1.0$ &   $93.8 \pm 9.1$ &  $93.3 \pm 4.7$ &  $95.5 \pm 2.9$ \\
        PNN (weight decay $10 ^{-5}$) & $\mathbf{96.7 \pm 0.3}$ &  $\underline{99.0 \pm 1.2}$ &  $\mathbf{100.0 \pm 0.0}$ &  $90.7 \pm 2.3$ &   $99.7 \pm 0.4$ &  $\underline{94.1 \pm 3.2}$ &  $\underline{96.7 \pm 1.2}$ \\
        MC Dropout & $\underline{96.3 \pm 0.1}$ &  $\mathbf{99.3 \pm 0.9}$ &  $\mathbf{100.0 \pm 0.0}$ &  $92.7 \pm 0.8$ &   $\underline{99.8 \pm 0.4}$ &  $90.4 \pm 5.1$ &  $96.4 \pm 1.2$ \\
        zero mean \& isotropic & $\underline{96.3 \pm 0.3}$ &  $95.5 \pm 4.2$ &  $\mathbf{100.0 \pm 0.1}$ &  $91.9 \pm 1.7$ &  $\mathbf{100.0 \pm 0.1}$ &  $87.7 \pm 7.8$ &  $95.2 \pm 2.4$ \\
        isotropic & $96.1 \pm 0.3$ &  $98.7 \pm 1.0$ &  $\mathbf{100.0 \pm 0.0}$ &  $\underline{93.1 \pm 0.6}$ &  $\mathbf{100.0 \pm 0.0}$ &  $87.8 \pm 6.6$ &  $96.0 \pm 1.4$ \\
        learned & $96.2 \pm 0.2$ &  $98.4 \pm 1.6$ &  $\mathbf{100.0 \pm 0.0}$ &  $\mathbf{93.9 \pm 0.9}$ &  $\mathbf{100.0 \pm 0.0}$ &  $\mathbf{95.1 \pm 4.7}$ &  $\mathbf{97.3 \pm 1.2}$ \\
        \bottomrule
    \end{tabular}
     \end{sc}
    \end{scriptsize}
    \end{center}
    \vskip -0.2in
 \end{table*}

\subsection{Ablation Studies}
\label{sec:results:ablations:main}

Our method is to improve generalization in the LA and also to provide an informative prior for the posterior inference. Therefore, in the ablation studies, we want to investigate the impact of each of our methods on the generalization bounds. Moreover, we want to study the learned prior in the small-data regime and see if it can mitigate the cold posterior effect, \ie phenomena in BNNs where the performance improves by cooling the posterior with a temperature of less than one~\citep{wenzel2020how}. This effect highlights the discrepancy of current BNNs, where no posterior tempering should be theoretically needed. For this, we use a LeNet-5 architecture~\citep{lecun1989backpropagation}, learn the prior on MNIST~\citep{lecun1998gradient} and compute the posterior on NotMNIST~\citep{bulatov2011notmnist}. In our experiments, we compare the performance of our learned prior against an isotropic Gaussian prior with multiple weight decays, either with a mean zero or using the pre-trained model as the mean.

In Table~\ref{tab:results_each_contribution}, one can see the contribution of each method to the final generalization bounds. For this, we report the mean and standard deviation using $5$ different seeds. We observe that learning the prior improves the generalization bounds, as the posterior can reuse some of the pre-trained weights. The curvature further controls the flexibility of each parameter, leading to a smaller KL-divergence while still providing a small error rate. In addition, we show that scaling the curvature with our approximate bounds reduces the bounds. In particular, the generalization bounds are also better than a thorough grid search. Frequentist projection further leads to a non-vacuous bound of $0.885$. 
In the small data regime, we train our method on a random subset of NotMNIST, \ie, we vary the number of available samples per class. Figure~\ref{fig:results_small_data} shows that the learned prior requires a magnitude fewer data to achieve similar accuracy compared to isotropic priors. In Appendix~\ref{appendix:results_small_data}, we further evaluate the impact of using different temperature scalings and weight decays in this experiment. We observe larger improvements when the model is more probabilistic, \ie when the temperature scaling is large, and when the weight decay is small and thus the learned prior is more dominant.
Finally, following \citet{wenzel2020how}, we examine the cold posterior effect using the learned prior in Figure~\ref{fig:results_cold_posterior}. Although the optimal value for the temperature scaling is less than one, the temperature scaling can be closer to one compared to the isotropic priors. This suggests that the cold posterior effect is reduced by using a learned prior.

Additionally, we further validate the proposed sums-of-Kronecker product computations (see Figure~\ref{fig:results_ablations:kronecker}). In Appendix~\ref{appendix:results}, further experimental results are provided, including qualitative analysis of our tractable approximate bound used to optimize the curvature scales (Appendix~\ref{appendix:results_approximate_bound}). Furthermore, results for a larger cold posterior experiment using ResNet-50~\citep{he2016deep}, ImageNet-1K~\citep{deng2009imagenet}, and CIFAR-10~\citep{krizhevsky2009learning} are presented in Appendix~\ref{appendix:results_cold_posterior}. Here, the learned prior improves the accuracy but does not reduce the cold posterior effect for all evaluated weight decays. This suggests the use-case of our method. 
Overall, our results demonstrate the effectiveness of our method in improving the generalization bounds. Furthermore, we show that the learned prior is particularly useful for BNNs when little data is available. 

\subsection{Generalization in Bayesian Continual Learning}
\label{sec:results:continual:learning}

Another advantage of our prior is its connection to continual learning. In the following experiments, we, therefore, evaluate our method for continual learning tasks. To do so, we closely follow the setup of \citet{denninger2018persistent}, who introduces a practical robotics scenario. This allows us to also obtain meaningful results for practitioners. We do not use the typical MNIST setup here because we also want to test the effectiveness of the expressive prior from ImageNet~\citep{deng2009imagenet}. Each continual learning task consists of recognizing the specific object instances of the Washington RGB-D data-set (WRGBD)~\citep{lai2011large}, while only a subset of all classes is available at a given time. We neglect the depth information and split the data similar to \citet{denninger2018persistent}. The prior is learned with the ImageNet-1K~\citep{deng2009imagenet} and the ResNet-50~\citep{he2016deep} is used. We specify a total of four lateral connections, each at the downsampling $1 \times 1$ convolution of the first "bottleneck" building block of the ResNet layers. The baselines are PNNs with several weight decays and using MC dropout~\citep{gal2016dropout}. Moreover, we compare to both isotropic priors as in the ablations.

\cref{tab:results_continual learning} shows the mean and the standard deviation of the accuracy for each task of the continual learning experiment for $5$ different random seeds. We report the mean and the standard deviation of two independent runs. Overall, in our experiments, the accuracy averaged over all tasks is improved by $0.6$ percent points compared to the second best approach PNN with weight decay $10 ^{-5}$. In addition, the increase in accuracy is $1.3$ percent points greater compared to an isotropic prior, and $2.1$ percent point greater when using zero as the prior mean. Therefore, these experimental results illustrate that with our idea of expressive posterior as BNN prior, we can improve the generalization on the test set for practical tasks of robotic continual learning.

\subsection{Few-Shot Generalization and Uncertainty}
\label{sec:results:fewshot}

\begin{figure*}
    \centering
    \includegraphics[width=\textwidth]{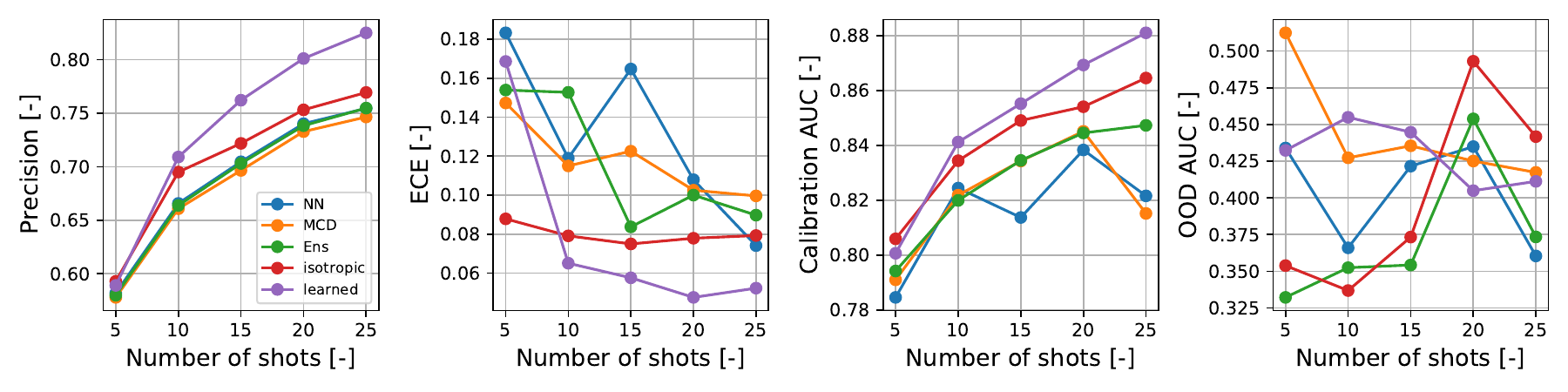}
    \caption{Few-shot learning experiments. Results are averaged over eight data-sets. Higher is better for accuracy and AUC measures, while lower is better for ECE measures. The results show the benefits of our method in uncertainty calibration and generalization.}
    \label{fig:main:result:3}
    \vspace*{-5mm}
\end{figure*}

Finally, we consider the task of few-shot learning. Our key hypothesis is that the choice of prior matters more in the small data regime, \ie, the prior may dominate over the likelihood term. To this end, closely following \citet{tran2022plex}, we use a few-shot learning set-up across several data-sets, namely CIFAR-100, UC Merced, Caltech 101, Oxford-IIIT Pets, DTD, Colorectal Histology, Caltech-UCSD Birds 200, and Cars196. CIFAR-10 is used as an out-of-distribution (OOD) data-set \footnote{Unlike \citet{tran2022plex}, we did not use few-shot learning on ImageNet since we obtain our prior using the entire training set.}. Standardized metrics such as accuracy, ECE, AUROC, and OOD AUROC are examined. The implementations are based on uncertainty baselines~\citep{nado2021uncertainty}. We use ResNet-50 for the architecture. 

For the baselines, we choose MC dropout \citep{gal2016dropout} (MCD) and additionally include deep ensemble \citep{lakshminarayanan2017simple} (Ens) and a standard network (NN). These uncertainty estimation techniques are often used methods in practice~\citep{gustafsson2020evaluating}. We do not use the plex~\citep{tran2022plex} due to the lack of industry-level resources to comparably train our Bayesian prior. To increase their competitiveness, we uniformly searched over ten weight decays in the range from $10^{-1}$ to $10^{-10}$. We also tried fixed representation or only last-layer fine-tuning. Additionally, we add LA which represents the use of isotropic prior. To facilitate the fair comparison between isotropic and learned prior, we carefully searched the following combinations: (a) curvature scaling without PAC-Bayes, with McAllester and Cantoni, (b) ten temperature scaling ranging from $10^{-10}$ to $10^{-28}$, and (c) three weight decays $10^{-6}$ to $10^{-8}$ and $10^{-10}$. For all hyperparameters, we selected the best model using validation accuracy.

The results are reported in \cref{fig:main:result:3}, where we averaged across eight data-sets, closely following \citet{tran2022plex}. The results per data-set are in Appendix~\ref{appendix:fewshot:resultsperdata} whereas in Appendix~\ref{appendix:results_approximate_bound}, we also analyze the influence of these weight decays and temperature scales with varying numbers of available samples per class against accuracy. For shots up to 25 per class, we observe that our method often outperforms the baselines in terms of generalization and uncertainty calibration metrics. In particular, in the experiments, our learned prior from ImageNet significantly outperforms the isotropic prior within directly comparable set-ups. We interpret that the prior learning method is effective in this small data regime. This motivates the key idea of expressive posteriors as informative priors for BNNs. Moreover, for isotropic and learned prior, the McAllester bound often resulted in the best model. This also motivates the use of explicit curvature scaling for the generalization bounds.
\section{Conclusion}

This paper presents a prior learning method for BNNs. Our prior can be learned from large-scale data-set and architecture, resulting in expressive probabilistic representations with generalization guarantees. Empirically, we demonstrated how we mitigate cold posterior effects and further obtain non-vacuous generalization bound as low as 0.885 in neural networks using LA. In our benchmark experiments within continual and few-shot learning tasks, we further showed advancements over the prior arts. 

Importantly, we find that the use-case of our prior learning method is more within the small data regime, e.g., when prior may dominate over the likelihood term. Finally, one of the fundamental assumptions of prior learning methods is on the existence of relevant data and tasks to learn the prior from. Moreover, for a valid PAC-Bayes analysis, the data-sets for individual tasks should not share common examples. In the future, we would like to see follow-ups that address this limitation, by either (a) learning prior in larger scale data-set and architectures like foundational models, or (b) combining self-supervised pre-training to quickly fine-tune the prior for the domain of the relevant task~\citep{bommasani2021opportunities}. Another direction is to obtain tighter generalization bounds by adapting clever tricks such as optimizing the entire covariance instead of individual scales with our objectives and using a subset of the training data to improve the prior~\citep{perez2021tighter}.

\textbf{Acknowledgments} The authors would like to thank the anonymous reviewers and area chairs for valuable feedback. This work is also supported by the Helmholtz Association’s Initiative and Networking Fund (INF) under the Helmholtz AI platform grant agreement (ID ZT-I-PF-5-1).

\newpage

\bibliography{bib}
\bibliographystyle{icml2023}

\newpage
\appendix
\onecolumn
\section{Appendix: Preliminaries}

Here, we first give an overview of the concept on Laplace approximation. Moreover, we discuss the main idea of the PAC-Bayesian theory.

\subsection{Laplace Approximation}
\label{sec:laplace_approximation}

\begin{figure}
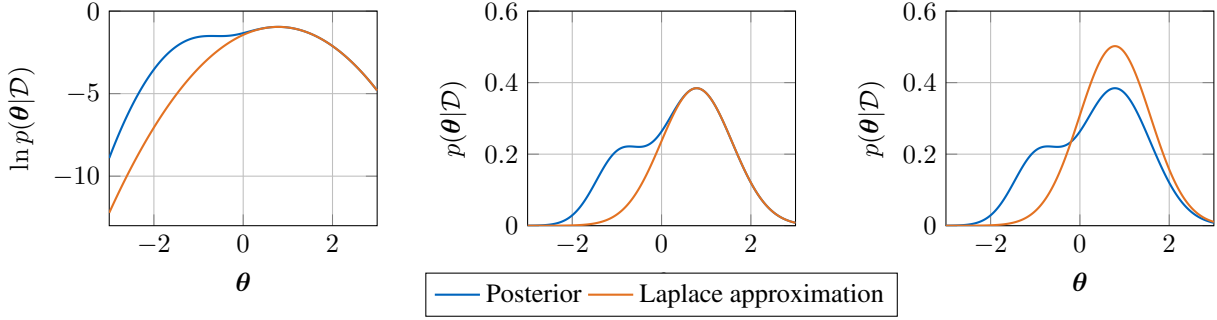

    \centering
    \includestandalone{tikz/related_work/laplace_approximation}
    \caption[Example of Laplace approximation.]{In the left plot one can see the log-posterior and the corresponding Taylor approximation. The middle and right plots show the posterior and the unnormalized and normalized Laplace approximation, respectively.}
    \label{fig:la}
\end{figure}

Laplace approximation~\citep{mackay1992practical} locally approximates the posterior around a mode $\boldsymbol{\hat{\theta}}$, namely the MAP estimate, by a normal distribution. For this, the second-order Taylor approximation of the log-posterior around $\boldsymbol{\hat{\theta}}$ is considered,

\begin{align}
    \ln{p(\boldsymbol{\theta}|\mathcal{D})} \approx \ln{p(\boldsymbol{\hat{\theta}}|\mathcal{D})} + \frac{1}{2} (\boldsymbol{\theta} - \boldsymbol{\hat{\theta}})^T \mathbf{H} (\boldsymbol{\theta} - \boldsymbol{\hat{\theta}}),\nonumber
\end{align}

with the Hessian of the log-posterior at the mode
    \begin{align}
        \mathbf{H} &= \frac{d^2}{{d\boldsymbol{\theta}}^2} \ln{p(\boldsymbol{\theta}|\mathcal{D})}\Bigr|_{\boldsymbol{\theta} = \boldsymbol{\theta}^*}\nonumber 
        = \frac{d^2}{{d\boldsymbol{\theta}}^2} \ln{p(\mathcal{D}|f_{\boldsymbol{\theta}})}\Bigr|_{\boldsymbol{\theta} = \boldsymbol{\theta}^*} + \frac{d^2}{{d\boldsymbol{\theta}}^2} \ln{p(\boldsymbol{\theta})}\Bigr|_{\boldsymbol{\theta} = \boldsymbol{\theta}^*}\nonumber
        =: \mathbf{H}_{likelihood} + \mathbf{H}_{prior}.\nonumber
    \end{align}
This is shown in the left plot in Figure~\ref{fig:la}. As the Taylor approximation is around a mode, the first-order term vanishes. The unnormalized Laplace approximation (shown in the middle of Figure~\ref{fig:la}) can then be obtained by taking the exponential,
    \begin{align}
        p(\boldsymbol{\theta} | \mathcal{D}) \appropto p(\boldsymbol{\hat{\theta}}|\mathcal{D})\exp{\left(\frac{1}{2} (\boldsymbol{\theta} - \boldsymbol{\hat{\theta}})^T \mathbf{H} (\boldsymbol{\theta} - \boldsymbol{\hat{\theta}})\right)}.
    \end{align}
The resulting normalized Laplace approximation is then a normal distribution at the mode with the precision matrix $\mathbf{H}$, \ie
    \begin{align}
        \label{eq:normal_la}
        p(\boldsymbol{\theta}|\mathcal{D}) \approx \mathcal{N}(\boldsymbol{\hat{\theta}}, \mathbf{H}^{-1}).
    \end{align}

\paragraph{Fisher Information Matrix}
As $\boldsymbol{\hat{\theta}}$ could be a saddle point or because of numerical instabilities, the Hessian could be indefinite in which case the normal distribution in Equation~\ref{eq:normal_la} is not well defined~\citep{martens2014new, botev2017practical}. Therefore, positive semi-definite approximations of the Hessian are used like the \emph{Fisher Information Matrix} (or the Fisher matrix) or the \emph{Gauss-Newton matrix}. For a likelihood of an exponential family, \eg categorical and normal distributions, both approximations are the same and for piece-wise linear activation functions like ReLU~\citep{nair2010rectified}, they moreover coincide with the Hessian~\citep{martens2015optimizing}.

\begin{definition}[Fisher Information Matrix~\citep{martens2015optimizing}]
    Let $P$ be a distribution over $\mathcal{X} \times \mathcal{Y}$ and $p(\cdot|\mathbf{x}, f_{\boldsymbol{\theta}})$ be a conditional distribution over $\mathcal{Y}$ dependent on the parameter vector $\boldsymbol{\theta}$. Then the Fisher matrix is defined as
    
    \begin{align}
        \mathbf{F} = \mathbb{E}_{(\mathbf{x}, \_) \sim P}\mathbb{E}_{\mathbf{y} \sim p(\cdot|\mathbf{x}, f_{\boldsymbol{\theta}})}\left[\frac{d \ln{p(\mathbf{y}|\mathbf{x}, f_{\boldsymbol{\theta}})}}{d\boldsymbol{\theta}} \frac{d\ln{ p(\mathbf{y}|\mathbf{x}, f_{\boldsymbol{\theta}})}}{d\boldsymbol{\theta}}^T\right].\nonumber
    \end{align}
    
    Here, the underscore denotes that the corresponding variable is not used.
\end{definition}
\begin{remark}
    Note that the targets from the training data are not used to compute the Fisher matrix. Using the ground truth targets instead of samples from the predictive model distribution would lead to the empirical Fisher matrix.
\end{remark}

For an easier notation, we write $\mathbb{E}$ instead of $\mathbb{E}_{(\mathbf{x}, \_) \sim P}\mathbb{E}_{\mathbf{y} \sim p(\cdot|\mathbf{x}, f_{\boldsymbol{\theta}})}$ and $\mathcal{D}\cdot = \frac{d \ln{p(\mathbf{y}|\mathbf{x}, f_{\boldsymbol{\theta}})}}{d\cdot}$ for the derivative of the log-likelihood in the following. Moreover, we drop the layer index for the activations and pre-activations, \ie $\mathbf{\bar{a}} = \mathbf{\bar{a}}_l$ and $\mathbf{s} = \mathbf{s}_l$.

\paragraph{The Fisher Approximations}
The full Fisher matrix and even a block-diagonal approximation without correlations between different layers are usually not feasible to store or compute for modern neural networks~\citep{martens2015optimizing}. 
Common approximations of the block-diagonal form are by a diagonal or a Kronecker-factored matrix. The Kronecker-factored approximation comes from the fact that for a single sample, the Fisher matrix is the sum of Kronecker-factored matrices. 
For fully-connected layers, the derivative after the weight matrix can be computed as $\mathcal{D} \mathbf{W} = \mathcal{D} \mathbf{s} (\mathbf{\bar{a}})^T$. With this, the block of the Fisher matrix corresponding to layer $l$ is given by $\mathbf{F}_l = \mathbb{E}[\mathcal{D} \mathbf{s} (\mathcal{D} \mathbf{s})^T \otimes \mathbf{\bar{a}} (\mathbf{\bar{a}})^T].$

As convolutional layers share the weight tensor among the spatial positions, the derivative is a sum of outer products: $\mathcal{D}\mathbf{W}_{i, k} = \sum_{\mathbf{t} \in \mathcal{T}} \mathcal{D} \mathbf{s}_{\mathbf{t}, i} \mathbf{\bar{a}}_{k, \mathbf{t}}$. Therefore, the Fisher matrix block for layer $l$ can be formulated as $\mathbf{F}_l = \mathbb{E}[\sum_{\mathbf{t} \in \mathcal{T}} \sum_{\mathbf{t'} \in \mathcal{T}} {\mathcal{D} \mathbf{s}}_{\mathbf{t}} (\mathcal{D} \mathbf{s})_{\mathbf{t'}}^T \otimes \mathbf{\bar{a}}_{:, \mathbf{t}} (\mathbf{\bar{a}}_{:, \mathbf{t'}})^T]$.

\subparagraph{Kronecker-Factored Approximate Curvature}
The \emph{Kronecker-Factored Approximate Curvature} (KFAC) approximates this Fisher matrix of a fully-connected layer~\citep{martens2015optimizing} and a convolutional layer~\citep{grosse2016kronecker} by
    \begin{align}
        \mathbf{F}_l \approx \mathbb{E}[\mathcal{D} \mathbf{s} (\mathcal{D} \mathbf{s})^T] \otimes \mathbb{E}[\mathbf{\bar{a}} (\mathbf{\bar{a}})^T] \quad \text{and} \quad \mathbf{F}_l \approx \mathbb{E}[(\mathcal{D} \mathbf{s})^T {\mathcal{D} \mathbf{s}}] \otimes \frac{1}{|\mathcal{T}|}\mathbb{E}[\mathbf{\bar{a}}(\mathbf{\bar{a}})^T],\nonumber
    \end{align}
respectively. 
In particular, KFAC approximates the expected Kronecker product as a Kronecker product of expectations, which is not true in general but leads to Kronecker-factored blocks of the Fisher matrix. The Kronecker factorization enables the storage of two smaller matrices rather than one large matrix~\citep{martens2015optimizing}. However, these factorizations assume that the activations and the corresponding pre-activations are statistically independent, which is usually not met in practice~\citep{tang2021skfac}. Furthermore, additional assumptions like the independence of the first and second-order statistics of the spatial positions are used for convolutional layers, which might impair its approximation quality.

\subparagraph{Kronecker-Factored Optimal Curvature}
Another tractable approximation of the Fisher matrix is the \emph{Kronecker-factored Optimal Curvature} (KFOC)~\citep{schnaus2021kronecker} which finds optimal Kronecker factors for each batch of data points and approximates both, the linear and the convolutional layer, as Kronecker product with two factors,
    \begin{align}
        \mathbf{F}_l \approx \mathbf{L}_l \otimes \mathbf{R}_l.\nonumber
    \end{align}
It transforms the problem of finding optimal Kronecker-factors into the best rank-$1$-problem and solves it with a scalable version of the power method. The approximation is usually closer to the Fisher matrix than the approximation by KFAC in terms of the Frobenius error.

\paragraph{Laplace Approximation for KFAC and KFOC}
Given a block-diagonal Kronecker-factored precision matrix $H = \text{diag}(\mathbf{L}_1 \otimes \mathbf{R}_1, \mathbf{L}_2 \otimes \mathbf{R}_2, \cdots, \mathbf{L}_L \otimes \mathbf{R}_L)$,
the normal distribution of Equation~\ref{eq:normal_la} reduces to $L$ independent matrix normal:
    \begin{align}
        \mathcal{N}(\boldsymbol{\hat{\theta}}, \mathbf{F}^{-1}) = \prod_{l=1}^L \mathcal{MN}(\mathbf{\hat{W}}^l, \mathbf{L}_l, \mathbf{R}_l).\nonumber
    \end{align}
In the following, we assume a block-diagonal approximation of the Fisher matrix where each block $\mathbf{F}_l \in \mathbb{R}^{n_l \times n_l}$ corresponds to a layer $l \in [L]$. For fully-connected and convolutional layers, the dimensionality $n_l$ is the size of the vectorized weight matrix, \ie for fully-connected layers $n_l = d_l (d_{l-1} + 1)$ and for convolutional layers $n_l = c_l (c_{l-1} |\Delta^l| + 1)$.

\subsection{PAC-Bayesian Bounds}
\label{sec:pac_bayesian_bounds}
Even though Bayesian neural networks were introduced in the Bayesian framework, \emph{Probably Approximately Correct (PAC)-Bayesian bounds} introduce a frequentist method to bound the generalization of statistical functions like Bayesian neural networks~\citep{germain2016pac}. PAC bounds aim to upper bound the risk, \ie the expected loss on the true data distribution, with high probability using properties of the architecture and optimization of neural networks~\citep{wolf2018mathematical}. PAC-Bayesian bounds obtain similar bounds for Bayesian neural networks by comparing the posterior distribution with a data-set independent prior distribution~\citep{guedj2019primer}.

Let $\mathcal{G} = \{g: \mathcal{X} \to \mathcal{Y}\ |\  g\  \text{is measurable}\}$ be the set of hypotheses and a corresponding $\sigma$-algebra $\mathfrak{G}$ such that $(\mathcal{G}, \mathfrak{G})$ is a measurable space. Denote the set of probability distributions on this measurable space as
    \begin{align}
        \mathcal{M} = \{\mu: \mathcal{G} \to [0, 1]\ |\ \mu \text{ is a probability distribution on } (\mathcal{G}, \mathfrak{G})\}.\nonumber
    \end{align}
Moreover, let $l: \mathcal{G} \times \mathcal{X} \times \mathcal{Y} \to \mathbb{R}$ be a loss function. Then define the risk for $g \in \mathcal{G}$ as
    \begin{align}
        \mathcal{L}^{l}_{P}(g) = \mathbb{E}_{(\mathbf{x}, \mathbf{y}) \sim P}\left[l(g, \mathbf{x}, \mathbf{y})\right]\nonumber
    \end{align}
and its empirical counterpart on the training data as
    \begin{align}
        \hat{\mathcal{L}}^{l}_{\mathcal{D}}(g) = \frac{1}{N}\sum_{i = 1}^N l(g, \mathbf{x}_i, \mathbf{y}_i).\nonumber
    \end{align}
Note that the neural network $f_{\boldsymbol{\theta}}$ as defined above is not in $\mathcal{G}$ because its output does not have to be in $\mathcal{Y}$. It only specifies the parameters of a distribution $p(\cdot|\mathbf{x}, f_{\boldsymbol{\theta}})$ over $\mathcal{Y}$. Nonetheless, for example, the function defined by $x \mapsto \argmax_{y \in \mathcal{Y}} p(y|\mathbf{x}, f_{\boldsymbol{\theta}})$ is in $\mathcal{G}$.

Given a data-set independent prior distribution over the hypothesis set $\pi \in \mathcal{M}$, the PAC-Bayesian theory bounds the probability that the expected risk is large for hypotheses sampled from another probability distribution $\rho \in \mathcal{M}$ which is absolutely continuous \wrt $\pi$:
\begin{align}
    \label{eq:pac_bayes_bound}
    P_{D \sim P^N} \left( \forall \rho \in \mathcal{M},\ \rho \ll \pi:\ \mathbb{E}_{g \sim \rho}[\mathcal{L}^{l}_{P}(g)] \leq \delta(\rho, \pi, \mathcal{D}, \varepsilon) \right) \geq 1 - \varepsilon,
\end{align}
for $\varepsilon > 0$ and the PAC-Bayesian bound $\delta(\rho, \pi, \mathcal{D}, \varepsilon)$~\citep{guedj2019primer}.

The first bound was introduced by McAllester~\citep{mcallester1999some, mcallester1999pac, mcallester2003pac, mcallester2003simplified} for bounded loss functions.
\begin{definition}[McAllester Bound~\citep{guedj2019primer}]
    Let $\varepsilon > 0$, $\rho, \pi \in \mathcal{M}$, $\rho \ll \pi$ and $l: \mathcal{G} \times \mathcal{X} \times \mathcal{Y} \to [0, 1]$, then
    \begin{align}
        \label{eq:mc_allester}
        \delta(\rho, \pi, \mathcal{D}, \varepsilon) = \mathbb{E}_{g \sim \rho}[\hat{\mathcal{L}}^{l}_{\mathcal{D}}(g)] + \sqrt{\frac{\mathbb{KL}(\rho\|\pi) + \ln{\frac{2\sqrt{N}}{\varepsilon}}}{2 N}}
    \end{align}
    is an upper bound for Equation~\ref{eq:pac_bayes_bound}.
\end{definition}
The bound was improved for the error loss function by \citet{catoni2007pac} when $\frac{\mathbb{KL}(\rho\|\pi)}{N}$ is large. The error loss function, or $0$-$1$-loss, is defined as
\begin{align}
    \er: \mathcal{G} \times \mathcal{X} \times \mathcal{Y} \to \{0, 1\},\ \er(g, \mathbf{x}, \mathbf{y}) = \mathbbm{1}[f(\mathbf{x}) \neq \mathbf{y}],
\end{align}
where $\mathbbm{1}[s]$ is one if statement $s$ is true and else zero.
\begin{definition}[Catoni Bound~\citep{catoni2007pac}]
    Let $\varepsilon > 0$, $\rho, \pi \in \mathcal{M}$ and $\rho \ll \pi$, then
    \begin{align}
        \label{eq:catoni}
        \delta(\rho, \pi, \mathcal{D}, \varepsilon) = \inf_{c > 0} \frac{1 - \exp(-c \mathbb{E}_{g \sim \rho}[\hat{\mathcal{L}}^{\er}_{\mathcal{D}}(g)] - \frac{\mathbb{KL}(\rho\|\pi) - \ln{\varepsilon}}{N})}{1 - \exp(-c)}
    \end{align}
    is an upper bound for Equation~\ref{eq:pac_bayes_bound}.
\end{definition}
\begin{remark}
    Note that in the PAC-Bayesian literature, $\rho$ is called posterior even though it is an arbitrary distribution that is dependent on the data~\citep{guedj2019primer}. To distinguish this from the posterior computed by Bayes' rule, we will explicitly write PAC-Bayes posterior if we do not use Bayes' rule.
\end{remark}
The PAC-Bayes bounds depend mainly on two terms: the KL-divergence of the PAC-Bayes prior and posterior and the empirical risk. Therefore, these bounds are small when the posterior is close to the prior and the loss on the training data is low. Thus, a good model should explain the training data well while depending not too much on it. In PAC-Bayesian bounds, the prior and the posterior are both distributions over the functions and not over the weights like in Bayesian neural networks. Hence, to apply the PAC-Bayesian bounds for Bayesian neural networks, one needs to identify each weight sample with a sample in the function space. However, neural networks are not identifiable~\citep{brea2019weight, pourzanjani2017improving}. Thus, multiple different weight vectors can explain the same function given by a neural network architecture. The KL-divergence is, therefore, smaller in function space than in weight space and one obtains an upper bound by considering the distributions over the weights~\citep{dziugaite2017computing}. 

In this work, we use the two bounds introduced above. Nonetheless, we find an approximate upper bound of the expected empirical error for Laplace approximation and can compute the KL-divergence in closed form for our models. Hence, all results of this work can directly be applied to other PAC-Bayesian bounds given that they depend on the expected empirical error and the KL-divergence of the posterior and the prior.

\section{Appendix: Algorithmic Overview, Complexity and Extensions}
\label{appendix:bpnn_overview}

This section summarizes the training of BPNNs and their computational complexity. After that, we present the details about the proposed sums-of-Kronecker-product computations. First, we review the basic idea of progressive neural networks (PNNs)~\citep{rusu2016progressive} and their extension to arbitrary network architectures such as ResNets~\citep{rusu2016progressive}. We then describe the training process and present a pseudocode for it. Finally, we analyze the complexity of training BPNNs. We use the same setup as in \cref{sec:method:bayespnn} by considering the tasks $\mathfrak{T}_0, \dots, \mathfrak{T}_T$, where $\mathfrak{T}_0$ is used to learn the prior. Each task $\mathfrak{T}_t$ contains a training data-set $\mathcal{D}_t = \left((\mathbf{x}_i^{(t)}, \mathbf{y}_i^{(t)})\right)_{i=1}^{N_t}$.

\paragraph{Implementation of PNNs}
Given a network architecture, PNNs create a copy of that network, called a column, for each new task and also add lateral connections between different columns to promote a positive transfer between tasks. Lateral connections are parameterized functions that combine the features of previous layers with the current layer. Unlike \citet{rusu2016progressive}, we use the same function for the lateral connections as for the main column, instead of their adapter architecture. Let $\mathbf{s}_l^{(t)}$ and $\mathbf{a}_{l}^{(t)}$ denote the activations and pre-activations at layer $l$ from column $t$, then we compute the lateral connection as
\begin{align}
    \mathbf{s}_l^{(t)} = \frac{1}{t}\sum_{t' = 1}^t \phi_l^{(t', t)}(\mathbf{a}_{l-1}^{(t')}).\nonumber
\end{align}
Here, we use a superscript $(t', t)$ to denote the lateral connection from column $t'$ to $t$. We also use this notation for the main column with $t = t'$, for the weights, the prior and the posterior over the weights and the Fisher matrix as summarized in \cref{tab:notation_PBNN}.

We implement PNNs for arbitrary network architectures by storing intermediate activations on the one hand and on the other hand by introducing aggregation layers that combine the stored activations at the layer level. The first part is implemented by creating forward hooks for the lateral connection layers $l \in \mathfrak{L}$ that store the activations of the previous layer $a^{(t')}_{l-1}$ during the forward pass in the base network. To combine the activation, the aggregation layers apply the lateral layers to the stored activations of the previous columns, $\phi_l^{(t', t)}(a^{(t')}_{l-1})$, and compute the mean over the resulting pre-activations. The aggregation layer is incorporated into the base network by replacing each layer $l \in \mathfrak{L}$ with the composition of the layer followed by the aggregation layer. With this architectural change, PNNs can be applied to complex network architectures.

\begin{table}
    \centering
    {\small
    \renewcommand{\arraystretch}{1.3}
    \begin{tabular}{lllll}
        \toprule
         & \textbf{Weight} & \textbf{Lateral} & \textbf{Column} & \textbf{Full}\\
        \midrule
        Weight & $\boldsymbol{\theta}_l^{(t', t)}$ & $\boldsymbol{\theta}^{(t', t)} = \vectorize(\boldsymbol{\theta}_l^{(t', t)})_{l \in [L]}$ & $\boldsymbol{\theta}^{(t)} = \vectorize(\boldsymbol{\theta}^{(t', t)})_{t' \in [t]}$ & $\boldsymbol{\theta}^{(\leq t)} = \vectorize(\boldsymbol{\theta}^{(t')})_{t' \in [t]}$\\
        Prior & $\pi_l^{(t', t)}$ & $\pi^{(t', t)} = \prod_{l=1}^L \pi_l^{(t', t)}$ & $\pi^{(t)} = \prod_{t'=1}^t \pi^{(t', t)}$ & $\pi^{(\leq t)} = \prod_{t'=1}^t \pi^{(t')}$\\
        Posterior & $\rho_l^{(t', t)}$ & $\rho^{(t', t)} = \prod_{l=1}^L \rho_l^{(t', t)}$ & $\rho^{(t)} = \prod_{t'=1}^t \rho^{(t', t)}$ & $\rho^{(\leq t)} = \prod_{t'=1}^t \rho^{(t')}$\\
        Fisher matrix & $\mathbf{F}_l^{(t', t)}$ & $\mathbf{F}^{(t', t)}=\diag(\mathbf{F}_l^{(t', t)})_{l \in [L]}$ & $\mathbf{F}^{(t)}=\diag(\mathbf{F}^{(t', t)})_{t' \in [t]}$ & $\mathbf{F}^{(\leq t)}=\diag(\mathbf{F}^{(t')})_{t' \in [t]}$\\
        \bottomrule
    \end{tabular}
    }
    \caption[The notation for BPNN.]{The notation for BPNN. Here, $1 \leq t' \leq t$ and $l \in [L]$ for $t=t'$ and $l \in \mathfrak{L}$ for $t' < t$. For task $0$, we write $(0)$ instead of $(0, 0)$ as no inter-column weights are used.}
    \label{tab:notation_PBNN}
\end{table}

\paragraph{Training of BPNNs}
In Bayesian Progressive Neural Networks (BPNNs), we combine this architecture with Bayesian neural networks and, in particular, LA with our learned prior. Therefore, we start by learning a prior on the data-set $\mathcal{D}_0$. This prior is then used as the prior for the main columns excluding the lateral connections. As a prior for the lateral connections, we use the posterior from the corresponding layer of the originating column as shown in \cref{fig:pbnn}. Unlike the learned prior, this layer was trained to produce reasonable features given the features from the previous layer of the same column, so we use it here instead. After training the prior, the training for each column is similar to LA with the learned prior and curvature scaling. That is, we first optimize the parameters of the column, including all incoming lateral connections, using either MAP estimation or the frequentist projection. During the training, we already sample the weights from the previous columns to make the new column robust to small changes in the activations from the lateral connections. After finding the optimal network parameters for a column, we compute the Fisher matrix. We keep the previous weights and distributions fixed, so the Fisher matrix is computed only for the new parameters for the current task. Finally, the curvature is scaled using a PAC-Bayes objective as explained in \cref{sec:method:pacbayes}. The complete training procedure is also shown in \cref{alg:bpnn_training}.

\begin{algorithm}
   \caption{Training of Bayesian Progressive Neural Networks}
   \label{alg:bpnn_training}
\begin{algorithmic}[1]
   \STATE {\bfseries Input:} network architecture $f_{\cdot}$, datasets $(\mathcal{D}_t)_{t=0}^T$, lateral connections $\mathfrak{L}$, weight decay for task $\mathfrak{T}_0$
   \STATE {\bfseries Output:} The posterior distribution for BPNNs $\rho^{(\leq T)} = \mathcal{N}(\boldsymbol{\hat{\theta}}^{(\leq T)}, (\mathbf{\tilde{F}}^{(\leq T)})^{-1})$
   \STATE $\boldsymbol{\hat{\theta}}^{(0)} \leftarrow$ optimal parameters for $f_{\cdot}$ by MAP estimation or a PAC-Bayes objective (\cref{subsubsec:frequentist_projection}) using $\mathcal{D}_0$
   \STATE $\mathbf{F}^{(0)} \leftarrow$ compute the Fisher using $\mathcal{D}_0$ \COMMENT{simultaneously compute the negative data log-likelihood for $f_{\boldsymbol{\hat{\theta}}^{(0)}}$}
   \STATE $\mathbf{\alpha}^{(0)}, \mathbf{\beta}^{(0)}, \mathbf{\tau}^{(0)} \leftarrow \argmin_{\mathbf{\alpha}, \mathbf{\beta}, \mathbf{\tau}} g(\mathbf{\alpha}, \mathbf{\beta}, \mathbf{\tau})$ \COMMENT{here, g is $ma$ (\cref{eq:mc_allester_objective}) or $ca$ (\cref{eq:catoni_objective})}
   \STATE $\mathbf{\tilde{F}}^{(0)}_l \leftarrow \frac{1}{\mathbf{\tau}^{(0)}_l} (\mathbf{\beta}^{(0)}_l \mathbf{F}^{(0)}_l + \mathbf{\alpha^{(0)}_l} \gamma I)$ for $l \in [L]$
   \FOR{$t=1$ to $T$}
        \STATE add a new column and new lateral connections
        \STATE $\boldsymbol{\hat{\theta}}^{(t)} \leftarrow$ optimize the new parameters by MAP estimation or a PAC-Bayes objective using $\mathcal{D}_t$
       \STATE $\mathbf{F}^{(t)} \leftarrow$ compute the Fisher using $\mathcal{D}_t$
       \STATE $\mathbf{\alpha}^{(t)}, \mathbf{\beta}^{(t)}, \mathbf{\tau}^{(t)} \leftarrow \argmin_{\mathbf{\alpha}, \mathbf{\beta}, \mathbf{\tau}} g(\mathbf{\alpha}, \mathbf{\beta}, \mathbf{\tau})$
       \STATE $\mathbf{\tilde{F}}^{(t, t)}_l \leftarrow \frac{1}{\mathbf{\tau}^{(t, t)}_l} (\mathbf{\beta}^{(t, t)}_l \mathbf{F}^{(t, t)}_l + \mathbf{\alpha^{(t, t)}_l} \mathbf{\tilde{F}}^{(0)}_l)$ for $l \in [L]$
       \STATE $\mathbf{\tilde{F}}^{(i, t)}_l \leftarrow \frac{1}{\mathbf{\tau}^{(i, t)}_l} (\mathbf{\beta}^{(i, t)}_l \mathbf{F}^{(i, t)}_l + \mathbf{\alpha^{(i, t)}_l} \mathbf{\tilde{F}}^{(i, i)}_l)$ for $i \in [t-1]$, $l \in \mathfrak{L}$
       \STATE $\rho^{(i, t)}_l \leftarrow \mathcal{N}(\boldsymbol{\hat{\theta}}^{(t)}, (\mathbf{\tilde{F}}^{(i, t)}_l)^{-1})$ for all available $i, t, l$
   \ENDFOR
\end{algorithmic}
\end{algorithm}

\paragraph{Computational Complexity} 
For each task and additionally for the prior task, the computational complexity boils down to finding the optimal parameters, computing the Fisher matrix, and scaling the curvature. Parameter optimization is equivalent to finding the MAP estimate. This can be solved efficiently by computing the negative log-likelihood over mini-batches using common variants of stochastic gradient descent~\citep{kingma2015adam}. In addition, the quadratic term induced by the prior must be computed in each update step. For a diagonal $\mathbf{\tilde{F}}_l$, it can be computed with two element-wise multiplications of vectors of size $n_l$ with $(\boldsymbol{\hat{\theta}}_l - \boldsymbol{\tilde{\theta}}_l)^T \mathbf{\tilde{F}}_l (\boldsymbol{\hat{\theta}}_l - \boldsymbol{\tilde{\theta}}_l) = (\boldsymbol{\hat{\theta}}_l - \boldsymbol{\tilde{\theta}}_l) \odot \diag(\mathbf{\tilde{F}}_l) \odot (\boldsymbol{\hat{\theta}}_l - \boldsymbol{\tilde{\theta}}_l)$, where $\diag(\mathbf{\tilde{F}}_l)$ is the vector containing the diagonal elements of $\mathbf{\tilde{F}}_l$. The computations for Kronecker-factored matrices involve two matrix-matrix products with the Kronecker-factors and one element-wise product:
    \begin{align}
        (\boldsymbol{\hat{\theta}}_l - \boldsymbol{\tilde{\theta}}_l)^T \mathbf{\tilde{F}}_l (\boldsymbol{\hat{\theta}}_l - \boldsymbol{\tilde{\theta}}_l) &= (\boldsymbol{\hat{\theta}}_l - \boldsymbol{\tilde{\theta}}_l)^T (\mathbf{L} \otimes \mathbf{R}) (\mathbf{\hat{W}}^l - \mathbf{\tilde{W}}^l)\nonumber\\
        &= (\boldsymbol{\hat{\theta}}_l - \boldsymbol{\tilde{\theta}}_l) \odot \vectorize(\mathbf{L}(\mathbf{\hat{W}}^l - \mathbf{\tilde{W}}^l)\mathbf{R}),\nonumber
    \end{align}
where $\mathbf{\hat{W}}^l$ and $\mathbf{\tilde{W}}^l$ are the weight matrices, which  correspond to $\boldsymbol{\hat{\theta}}_l$ and $\boldsymbol{\tilde{\theta}}_l$ respectively. Hence, this term can be computed efficiently, so that the parameter optimization has a complexity similar to standard MAP training. For common approximations of the Fisher matrix, such as the KFAC~\cite{martens2014new} and KFOC~\cite{schnaus2021kronecker}, the computation of the Fisher matrix corresponds to an additional epoch over the training data~\citep{martens2015optimizing, grosse2016kronecker}, although an update step is usually slightly slower than updating the weights. The curvature scaling is an optimization problem with a total of $3 L$ parameters, where $L$ is the number of layers in the network when all three scales are optimized. Thus, it is usually a much lower-dimensional optimization than finding the weights. Also, the computation of the negative log-likelihood can be incorporated into the computation of the Fisher matrix. Therefore, no additional pass through the data is needed. Overall, BPNNs have a similar computational complexity during training as PNNs, with the main overhead being the training of an additional task. However, this overhead can be reduced by using a pre-trained model which then leads to the minor overhead of about one additional epoch to compute the Fisher matrix for each task. During inference, BPNNs use multiple network samples and thus, have a computational complexity comparable to multiple forward passes in PNN.

We further comment on the complexity of the existing baseline methods. Deep ensemble corresponds to training ensembles of deep learning models with different initializations and is known to be more expensive than Kronecker-factored Laplace approximation \citep{daxberger2021laplace}. Full LA involves the Hessian, which requires the memory complexity of storing matrices and inverting these matrices, which is cubic in cost. Therefore, full LA is known not to scale, and previous research introduced several approximations such as layer-wise Kronecker-factorization. Of course, learning the prior incurs more cost than relying on an isotropic Gaussian prior.

\subsection{Complexity of the Power Method for Sums of Kronecker Products}
For all posteriors in BPNN, the final covariance matrix is a sum of Kronecker products. Hence, in general, we are interested in approximating
\begin{align}
    \mathbf{\hat{L}}, \mathbf{\hat{R}} \in \argmin_{\substack{\mathbf{L} \in \mathbb{R}^{M \times M}\\ \mathbf{R} \in \mathbb{R}^{N \times N}}} \|\sum_{k=1}^K \mathbf{L}^k \otimes \mathbf{R}^k - \mathbf{L} \otimes \mathbf{R}\|_F,
\end{align}
for $M, N, K \in \mathbb{N}$, $\mathbf{L}^k \in \mathbb{R}^{M \times M}$ and $\mathbf{R}^k \in \mathbb{R}^{N \times N}$ for $k \in [K]$. \cref{lem:vec_kron} shows that this problem is equivalent to a rank-one approximation. Therefore, one can use the power method to solve the problem. Nonetheless, each step of the plain power method consists of a matrix multiplication with an $M^2 \times N^2$-sized matrix. The complexity can be reduced by utilizing that the matrix is a sum of few rank-one matrices. This is shown in Algorithm \ref{alg:kronpower}. With this, the convergence properties of the power method are achieved with a computational complexity of $\mathcal{O}\left(n^{max} K (N^2 + M^2)\right)$. Also, only $\mathcal{O}\left(K (N^2 + M^2)\right)$ memory is needed. Here, we assume that a matrix multiplication $\mathbf{A} \mathbf{B}$ for $\mathbf{A} \in \mathbb{R}^{m \times n}$ and $\mathbf{B} \in \mathbb{R}^{n \times k}$ has the complexity $\mathcal{O}(m n k)$ while a Hadamard product $\mathbf{A} \odot \mathbf{C}$ for $\mathbf{C} \in \mathbb{R}^{m \times n}$ can be computed in $\mathcal{O}(m n)$.

\begin{algorithm}
   \caption{Power method for sums of Kronecker products}
   \label{alg:kronpower}
\begin{algorithmic}[1]
   \STATE {\bfseries Input:} left matrices $(\mathbf{L}^k)_{k \in [K]}$, right matrices $(\mathbf{R}^k)_{k \in [K]}$, number of steps $n^{max}=100$, stopping precision $\delta=10^{-5}$
   \STATE {\bfseries Output:} $\mathbf{\hat{L}}, \mathbf{\hat{R}} \in \argmin_{\mathbf{L}, \mathbf{R}} \|\sum_{k=1}^K \mathbf{L}^k \otimes \mathbf{R}^k - \mathbf{L} \otimes \mathbf{R}\|_F$
   \STATE $\vectorize(\mathbf{\Bar{L}}^{(0)}) \leftarrow \mathcal{N}(\mathbf{0}, \mathbf{I})$ \COMMENT{standard normal initialization of $\mathbf{\Bar{L}}^{(0)}$}
   \STATE $\mathbf{L}^{(0)} \leftarrow \frac{\mathbf{\Bar{L}}^{(0)}}{\|\mathbf{\Bar{L}}^{(0)}\|_F}$ \COMMENT{normalize $\mathbf{\Bar{L}}^{(0)}$}
   \FOR{$n=1$ to $n^{max}$}
        \STATE $\mathbf{\Bar{R}}^{(n)} \leftarrow \sum_{k=1}^K \langle \mathbf{L}^k, \mathbf{L}^{(n-1)} \rangle_F \mathbf{R}^k$ \COMMENT{first power iteration step}
        \STATE $\mathbf{R}^{(n)} \leftarrow \frac{\mathbf{\Bar{R}}^{(n)}}{\|\mathbf{\Bar{R}}^{(n)}\|_F}$ \COMMENT{normalize $\mathbf{\Bar{R}}^{(n)}$}
        \STATE $\mathbf{\Bar{L}}^{(n)} \leftarrow \sum_{k=1}^K \langle \mathbf{R}^k, \mathbf{R}^{(n)} \rangle_F \mathbf{L}^k$ \COMMENT{second power iteration step}
        \STATE $\mathbf{L}^{(n)} \leftarrow \frac{\mathbf{\Bar{L}}^{(n)}}{\|\mathbf{\Bar{L}}^{(n)}\|_F}$ \COMMENT{normalize $\mathbf{\Bar{L}}^{(n)}$}
        \IF{$\|\mathbf{L}^{(n)} - \mathbf{L}^{(n-1)}\|_F < \delta$}
            \STATE {\bfseries break} \COMMENT{stopping criterion}
        \ENDIF
   \ENDFOR
   \STATE $\mathbf{\hat{L}} \leftarrow \mathbf{L}^{(n)}$
   \STATE $\mathbf{\hat{R}} \leftarrow \sum_{k=1}^K \langle \mathbf{L}^k, \mathbf{L}^{(n)} \rangle_F \mathbf{R}^k$ \COMMENT{first power iteration step}
\end{algorithmic}
\end{algorithm}

\section{Appendix: Proof and Derivations}
\label{appendix:derivations}

This section contains proofs for the presented theory. Detailed remarks and follow up derivations are further provided.

\subsection{Proof of Lemma \ref{lem:vec_kron}}
\label{subsec:lemma1_proof}
\begin{lemmanum}[\ref{lem:vec_kron}]
Let $M, N, K \in \mathbb{N}$, $\mathbf{L}^k \in \mathbb{R}^{M \times M}$ and $\mathbf{R}^k \in \mathbb{R}^{N \times N}$ for $k \in [K]$. Then
\begin{align}
    \tag{\ref{eq:vec_kron}}
    \|\sum_{k=1}^K \mathbf{L}^k \otimes \mathbf{R}^k - \mathbf{L} \otimes \mathbf{R}\|_F = \|\sum_{k=1}^K \vectorize(\mathbf{L}^k) \vectorize(\mathbf{R}^k)^T - \vectorize(\mathbf{L}) \vectorize(\mathbf{R})^T\|_F.
\end{align}
\end{lemmanum}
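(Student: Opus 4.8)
The plan is to show that both sides, when squared, expand to one and the same bilinear expression in the Frobenius inner products $\langle \mathbf{L}^j,\mathbf{L}^k\rangle_F$ and $\langle \mathbf{R}^j,\mathbf{R}^k\rangle_F$. Conceptually, the map sending $\mathbf{A}\otimes\mathbf{B}$ to $\vectorize(\mathbf{A})\vectorize(\mathbf{B})^T$ extends to a linear \emph{rearrangement} of matrices (it merely permutes the $M^2N^2$ entries of an $MN\times MN$ matrix into an $M^2\times N^2$ matrix), hence is a Frobenius isometry; but I would not need to set up that operator explicitly, since the inner-product computation below is self-contained. Throughout I fix the column-major convention for $\vectorize(\cdot)$ and use it consistently for both the left and right factors.

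First I would record two elementary identities. (i) \emph{Mixed-product rule for the Frobenius inner product:} for $\mathbf{A},\mathbf{C}\in\R^{M\times M}$ and $\mathbf{B},\mathbf{D}\in\R^{N\times N}$,
\[
\langle \mathbf{A}\otimes\mathbf{B},\ \mathbf{C}\otimes\mathbf{D}\rangle_F = \langle \mathbf{A},\mathbf{C}\rangle_F\,\langle \mathbf{B},\mathbf{D}\rangle_F,
\]
which follows from $\langle \mathbf{X},\mathbf{Y}\rangle_F=\sum_{ij}\mathbf{X}_{ij}\mathbf{Y}_{ij}$ together with the entrywise formula $(\mathbf{A}\otimes\mathbf{B})_{(i-1)N+p,\,(j-1)N+q}=\mathbf{A}_{ij}\mathbf{B}_{pq}$, by factoring the resulting quadruple sum over $(i,j,p,q)$. (ii) \emph{The same bilinear form on the rank-one side:} $\langle \vectorize(\mathbf{A})\vectorize(\mathbf{B})^T,\ \vectorize(\mathbf{C})\vectorize(\mathbf{D})^T\rangle_F = \langle \vectorize(\mathbf{A}),\vectorize(\mathbf{C})\rangle\,\langle \vectorize(\mathbf{B}),\vectorize(\mathbf{D})\rangle$ (since $\langle \vu\vv^T,\vx\vy^T\rangle_F=(\vu^T\vx)(\vv^T\vy)$), and $\langle \vectorize(\mathbf{A}),\vectorize(\mathbf{C})\rangle=\langle \mathbf{A},\mathbf{C}\rangle_F$ because vectorization only reindexes entries; hence this inner product also equals $\langle \mathbf{A},\mathbf{C}\rangle_F\langle \mathbf{B},\mathbf{D}\rangle_F$.

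With these in hand the proof is pure bookkeeping. Set $\mathbf{L}^0:=\mathbf{L}$, $\mathbf{R}^0:=\mathbf{R}$, and $\epsilon_0=-1$, $\epsilon_k=1$ for $k\in[K]$. Then
\[
\Big\|\sum_{k=1}^K \mathbf{L}^k\otimes\mathbf{R}^k - \mathbf{L}\otimes\mathbf{R}\Big\|_F^2
= \sum_{j,k=0}^{K}\epsilon_j\epsilon_k\,\langle \mathbf{L}^j\otimes\mathbf{R}^j,\ \mathbf{L}^k\otimes\mathbf{R}^k\rangle_F
= \sum_{j,k=0}^{K}\epsilon_j\epsilon_k\,\langle \mathbf{L}^j,\mathbf{L}^k\rangle_F\,\langle \mathbf{R}^j,\mathbf{R}^k\rangle_F
\]
by identity (i), and expanding the squared right-hand side of \eqref{eq:vec_kron} in the same way and applying identity (ii) yields the identical double sum. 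Since both norms are nonnegative, taking square roots gives the claim. The only point demanding care is pinning down the index translations in identities (i)–(ii) and keeping the $\vectorize$ convention identical for the $\mathbf{L}$- and $\mathbf{R}$-factors; there is no genuine obstacle beyond this routine reindexing.
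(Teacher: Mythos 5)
Your proof is correct, but it follows a different route from the paper's. You expand both squared norms by bilinearity and reduce each to the same Gram-type double sum $\sum_{j,k=0}^{K}\epsilon_j\epsilon_k\,\langle \mathbf{L}^j,\mathbf{L}^k\rangle_F\,\langle \mathbf{R}^j,\mathbf{R}^k\rangle_F$, using the mixed-product rule $\langle \mathbf{A}\otimes\mathbf{B},\mathbf{C}\otimes\mathbf{D}\rangle_F=\langle\mathbf{A},\mathbf{C}\rangle_F\langle\mathbf{B},\mathbf{D}\rangle_F$ on one side and $\langle \vu\vv^T,\vx\vy^T\rangle_F=(\vu^T\vx)(\vv^T\vy)$ together with the isometry of $\vectorize$ on the other; both identities are standard and your verification of them is sound, so the argument closes cleanly after taking square roots of nonnegative quantities. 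The paper instead argues entrywise: it exhibits an explicit index bijection showing that $\sum_k \mathbf{L}^k\otimes\mathbf{R}^k-\mathbf{L}\otimes\mathbf{R}$ and $\sum_k \vectorize(\mathbf{L}^k)\vectorize(\mathbf{R}^k)^T-\vectorize(\mathbf{L})\vectorize(\mathbf{R})^T$ contain exactly the same entries in a different arrangement (the rearrangement you mention in passing but deliberately avoid formalizing), whence the Frobenius norms agree. Your approach buys freedom from index bookkeeping and isolates the algebraic content (both sides induce the same quadratic form in the Frobenius inner products), at the cost of proving only the norm equality; the paper's rearrangement argument is slightly heavier on indices but establishes the stronger fact that the two matrices are entrywise permutations of one another, which is what makes the subsequent identification of the Kronecker-approximation problem with a best rank-one approximation (and hence the power-method convergence in the next lemma) immediate. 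Either proof suffices for the statement as claimed.
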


\begin{proof}
Let $i, j \in [M N]$. Then the $i,j$-th entry of the left matrix is
\begin{align}
    &\left(\sum_{k=1}^K \mathbf{L}^k \otimes \mathbf{R}^k - \mathbf{L} \otimes \mathbf{R}\right)_{i,j} = \sum_{k=1}^K \mathbf{L}^k_{i_1, j_1} \mathbf{R}^k_{i_2, j_2} - \mathbf{L}_{i_1, j_1} \mathbf{R}_{i_2, j_2} \nonumber\\
    &= \left(\sum_{k=1}^K \vectorize(\mathbf{L}^k) \vectorize(\mathbf{R}^k)^T - \vectorize(\mathbf{L}) \vectorize(\mathbf{R})^T\right)_{M(i_1 - 1) + j_1, N(i_2 - 1) + j_2},
\end{align}
with $i = N (i_1 - 1) + i_2$, $j = N (j_1 - 1) + j_2$.
Hence, both matrices have the same entries and only the order of the entries is in general different. Therefore, the sum over the squared entries and thus the Frobenius norm is the same:
\begin{align}
    &\|\sum_{k=1}^K \mathbf{L}^k \otimes \mathbf{R}^k - \mathbf{L} \otimes \mathbf{R}\|_F^2 = \sum_{i=1}^{M^2} \sum_{j=1}^{N^2} \left(\sum_{k=1}^K \mathbf{L}^k \otimes \mathbf{R}^k - \mathbf{L} \otimes \mathbf{R}\right)_{i,j}^2\nonumber\\
    &= \sum_{i_1,j_1=1}^N \sum_{i_2,j_2=1}^M \left(\sum_{k=1}^K \mathbf{L}^k \otimes \mathbf{R}^k - \mathbf{L} \otimes \mathbf{R}\right)_{N (i_1 - 1) + i_2,N (j_1 - 1) + j_2}^2\nonumber\\
    &= \sum_{i_1,j_1=1}^N \sum_{i_2,j_2=1}^M \left(\sum_{k=1}^K \mathbf{L}_{i_1, j_1}^k \otimes \mathbf{R}_{i_2, j_2}^k - \mathbf{L}_{i_1, j_1} \otimes \mathbf{R}_{i_2, j_2}\right)^2\nonumber\\
    &= \sum_{i_1,j_1=1}^N \sum_{i_2,j_2=1}^M \left(\sum_{k=1}^K \vectorize(\mathbf{L}^k) \vectorize(\mathbf{R}^k)^T - \vectorize(\mathbf{L}) \vectorize(\mathbf{R})^T\right)_{M(i_1 - 1) + j_1, N(i_2 - 1) + j_2}^2\nonumber\\
    &= \|\sum_{k=1}^K \vectorize(\mathbf{L}^k) \vectorize(\mathbf{R}^k)^T - \vectorize(\mathbf{L}) \vectorize(\mathbf{R})^T\|_F^2.
\end{align}
\end{proof}

\subsection{Proof of Lemma \ref{lem:opt_kron_sum}}
\label{subsec:lemma2_proof}
\begin{lemmanum}[\ref{lem:opt_kron_sum}]
    Let $\mathbf{A} = \sum_{k=1}^K \vectorize(\mathbf{L}^k) \vectorize(\mathbf{R}^k)^T$ and $\mathbf{A} = \sum_{i=1}^r \sigma_i \mathbf{u}_i \mathbf{v}_i^T$ be its singular value decomposition with $\sigma_1 \geq \sigma_2 \geq \dots \geq \sigma_r > 0$ and $\mathbf{u}_i^T \mathbf{u}_j = \mathbf{v}_i^T \mathbf{v}_j = \mathbbm{1}[i = j]$. Then there is a solution of equation \ref{eq:opt_kron_sum} with
    \begin{align}
        \label{eq:best_LR}
        \vectorize(\mathbf{\hat{L}}) = \mathbf{u}_1, \vectorize(\mathbf{\hat{R}}) = \sigma_1 \mathbf{v}_1.
    \end{align}
    If $\sigma_1 > \sigma_2$, the solution is unique up to changing the sign of both factors, and Algorithm \ref{alg:kronpower} converges almost surely to this solution.
\end{lemmanum}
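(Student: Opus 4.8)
The plan is to reduce \cref{eq:opt_kron_sum} to the best rank-one matrix approximation and then analyze \cref{alg:kronpower} as a disguised power iteration. First, by \cref{lem:vec_kron} the objective in \cref{eq:opt_kron_sum} equals $\|\mathbf{A} - \vectorize(\mathbf{L})\vectorize(\mathbf{R})^T\|_F$, and since $\vectorize$ is a linear bijection $\mathbb{R}^{M\times M}\to\mathbb{R}^{M^2}$ (resp.\ $\mathbb{R}^{N\times N}\to\mathbb{R}^{N^2}$), minimizing over $(\mathbf{L},\mathbf{R})$ is equivalent to minimizing $\|\mathbf{A} - \mathbf{x}\mathbf{y}^T\|_F$ over $\mathbf{x}\in\mathbb{R}^{M^2}$, $\mathbf{y}\in\mathbb{R}^{N^2}$, i.e.\ over all matrices of rank at most one. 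By the Eckart--Young--Mirsky theorem, $\sigma_1\mathbf{u}_1\mathbf{v}_1^T$ is a best (Frobenius-norm) rank-one approximation of $\mathbf{A}$, and the unique one when $\sigma_1>\sigma_2$. Imposing the normalization $\|\mathbf{\hat L}\|_F = \|\vectorize(\mathbf{\hat L})\|_2 = 1$ and writing $\sigma_1\mathbf{u}_1\mathbf{v}_1^T = \vectorize(\mathbf{\hat L})\vectorize(\mathbf{\hat R})^T$ forces $\vectorize(\mathbf{\hat L}) = \pm\mathbf{u}_1$ and $\vectorize(\mathbf{\hat R}) = \pm\sigma_1\mathbf{v}_1$ with the two signs tied; the $+$ choice gives the solution in \cref{eq:best_LR}.

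For uniqueness, $\sigma_1>\sigma_2$ makes the dominant singular value simple, so $\mathbf{u}_1$ and $\mathbf{v}_1$ are determined up to a common sign and the best rank-one approximant $\sigma_1\mathbf{u}_1\mathbf{v}_1^T$ is unique; a nonzero rank-one matrix $M$ admits a factorization $M = \mathbf{x}\mathbf{y}^T$ with $\|\mathbf{x}\|_2 = 1$ that is unique up to simultaneously replacing $(\mathbf{x},\mathbf{y})$ by $(-\mathbf{x},-\mathbf{y})$ (since $\operatorname{span}(\mathbf{x})$ is the column space of $M$, which pins $\mathbf{x}$ to $\pm$ a unit vector, and $\mathbf{y}$ is then determined). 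This is precisely the claimed uniqueness.

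For \cref{alg:kronpower}, the key observation is that one sweep of the loop is one step of the power method for the positive semidefinite matrix $\mathbf{A}\mathbf{A}^T$. Vectorizing the two power steps and using $\langle \mathbf{L}^k,\mathbf{L}^{(n-1)}\rangle_F = \vectorize(\mathbf{L}^k)^T\vectorize(\mathbf{L}^{(n-1)})$ yields $\vectorize(\mathbf{\bar R}^{(n)}) = \mathbf{A}^T\vectorize(\mathbf{L}^{(n-1)})$ and $\vectorize(\mathbf{\bar L}^{(n)}) = \mathbf{A}\,\vectorize(\mathbf{R}^{(n)})$, so after the Frobenius normalizations $\vectorize(\mathbf{L}^{(n)})$ equals $\mathbf{A}\mathbf{A}^T\vectorize(\mathbf{L}^{(n-1)})$ rescaled to unit norm. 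Since $\mathbf{A}\mathbf{A}^T$ has eigenvalues $\sigma_1^2\ge\sigma_2^2\ge\cdots$ with eigenvectors $\mathbf{u}_1,\mathbf{u}_2,\dots$, and $\sigma_1>\sigma_2$ makes $\sigma_1^2$ strictly dominant, the standard power-method analysis shows that whenever $\vectorize(\mathbf{\bar L}^{(0)})$ has nonzero component along $\mathbf{u}_1$, the normalized iterates converge geometrically (rate $(\sigma_2/\sigma_1)^2$ per sweep) to a fixed $\pm\mathbf{u}_1$ --- the sign stabilizes rather than alternates because $\mathbf{A}\mathbf{A}^T\succeq0$. The initialization $\vectorize(\mathbf{\bar L}^{(0)})\sim\mathcal{N}(\mathbf{0},\mathbf{I})$ lies off the hyperplane $\{\mathbf{z}:\mathbf{z}^T\mathbf{u}_1=0\}$ with probability one, which also guarantees the normalizing constants never vanish, so the iteration is well defined and $\mathbf{L}^{(n)}\to\pm\mathbf{u}_1$ almost surely; the final line then gives $\vectorize(\mathbf{\hat R}) = \mathbf{A}^T\vectorize(\mathbf{\hat L})\to\mathbf{A}^T(\pm\mathbf{u}_1)=\pm\sigma_1\mathbf{v}_1$, so $(\mathbf{\hat L},\mathbf{\hat R})$ converges to the solution of \cref{eq:best_LR} up to the joint sign.

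The mathematics is standard, so the main obstacle is careful bookkeeping: identifying the Frobenius-inner-product updates of \cref{alg:kronpower} with multiplications by $\mathbf{A}$ and $\mathbf{A}^T$ under the exact vectorization convention fixed in \cref{lem:vec_kron}, and invoking the correct form of the power-method theorem --- the positive semidefinite case, where the iterate direction converges without sign oscillation --- together with the genericity argument that a Gaussian initialization almost surely has nonzero overlap with $\mathbf{u}_1$. One should also note the degenerate case $\mathbf{A}=\mathbf{0}$ is excluded by $\sigma_r>0$ (hence $r\ge1$), and that once the $\mathbf{u}_1$-overlap is nonzero it stays bounded away from zero along the iteration, keeping every normalization step legitimate.
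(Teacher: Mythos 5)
Your proposal is correct and follows essentially the same route as the paper's proof: reduce \cref{eq:opt_kron_sum} to a best rank-one approximation via \cref{lem:vec_kron}, apply Eckart--Young--Mirsky for existence and uniqueness (up to a joint sign) under $\sigma_1 > \sigma_2$, and identify the algorithm's Frobenius-inner-product updates with power iteration on $\mathbf{A}\mathbf{A}^T$, whose almost-sure convergence from a Gaussian initialization gives the claim. Your treatment of the convergence step is merely more explicit than the paper's, which delegates it to standard power-method results.
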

\begin{proof}
    The main idea of the proof is to use Lemma \ref{lem:vec_kron} to identify the problem with a best rank-one approximation. The algorithm then corresponds to the power method that utilizes the Kronecker factorization for a faster and memory-efficient computation of the matrix-vector products in the Kronecker matrix space.\\
    By the Eckart–Young–Mirsky theorem~\citep{eckart1936approximation}, an optimal rank-one approximation for $A$ in the Frobenius norm is
        \begin{align}
            \sigma_1 \mathbf{u}_1 \mathbf{v}_1^T \in \argmin_{\mathbf{\hat{A}} \in \mathbb{R}^{M^2 \times N^2}: \rank(\mathbf{\hat{A}}) = 1} \|\mathbf{A} - \mathbf{\hat{A}}\|_F,
        \end{align}
    which is unique up to changing the sign of both factors if $\sigma_1 > \sigma_2$.\\
    Therefore, the matrices $\mathbf{\hat{L}}$ and $\mathbf{\hat{R}}$ that satisfy equation (\ref{eq:best_LR}) are optimal solutions of equation (\ref{eq:opt_kron_sum}). Moreover, the left factor is normalized, e.g. $\|\mathbf{\hat{L}}\|_F = \|\mathbf{u}_1\|_2 = 1$.\\
    The equivalence of Algorithm \ref{alg:kronpower} with the power method can be seen by multiplying $\mathbf{A} \mathbf{A}^T$ with $\vectorize(\mathbf{L}^{(n-1)})$ for $\mathbf{L}^{(n-1)} \in \mathbb{R}^{N^2}$:
        \begin{align}
            \mathbf{A}^T \vectorize(\mathbf{L}^{(n-1)}) &= \sum_{k=1}^K \vectorize(\mathbf{R}^k) \vectorize(\mathbf{L}^k)^T \vectorize(\mathbf{L}^{(n-1)})\\
            &= \sum_{k=1}^K \langle \mathbf{L}^k, \mathbf{L}^{(n-1)} \rangle_F \vectorize(\mathbf{R}^k)\\
            &= \vectorize(\mathbf{\Bar{R}}^{(n)}),
        \end{align}
    and
        \begin{align}
            \mathbf{A} \mathbf{A}^T \vectorize(\mathbf{L}^{(n-1)}) &= \sum_{k=1}^K \vectorize(\mathbf{L}^k) \vectorize(\mathbf{R}^k)^T \vectorize(\mathbf{\Bar{R}}^{(n)})\\
            &= \sum_{k=1}^K \langle \mathbf{R}^k, \mathbf{\Bar{R}}^{(n)} \rangle_F \vectorize(\mathbf{L}^k)\\
            &= \|\mathbf{\Bar{R}}^{(n)}\|_F \sum_{k=1}^K \langle \mathbf{R}^k, \mathbf{R}^{(n)} \rangle_F \vectorize(\mathbf{L}^k)\\
            &= \|\mathbf{\Bar{R}}^{(n)}\|_F \vectorize(\mathbf{\Bar{L}}^{(n)}).
        \end{align}
    Hence, we can compute the same iterations as the standard power method, like Algorithm \ref{alg:kronpower}:
        \begin{align}
        \frac{\mathbf{A} \mathbf{A}^T \vectorize(\mathbf{L}^{(n-1)})}{\|\mathbf{A} \mathbf{A}^T \vectorize(\mathbf{L}^{(n-1)})\|_2} = \frac{\|\mathbf{\Bar{R}}^{(n)}\|_F \vectorize(\mathbf{\Bar{L}}^{(n)})}{\|\mathbf{\Bar{R}}^{(n)}\|_F \|\mathbf{\Bar{L}}^{(n)}\|_F} = \frac{\vectorize(\mathbf{\Bar{L}}^{(n)})}{\|\mathbf{\Bar{L}}^{(n)}\|_F} = \vectorize(\mathbf{L}^{(n)}).
        \end{align}
    The final right factor then corresponds to $\mathbf{A}^T \vectorize(\mathbf{L}^{(n)}) \approx \sigma_1 \mathbf{v}_1$.\\
    For $\sigma_1 > \sigma_2$, the convergence properties are inherited from the power method, e.g., see the work of \citet{bindel2016power}.
\end{proof}
\begin{remark}
    Even in the case when the first singular value is not (much) larger than the other singular values and no convergence is achieved, the resulting matrices of Algorithm \ref{alg:kronpower} are with high probability in the span of the singular vectors corresponding to the set of large singular values~\citep{blum2020best}. Hence, in this case, the approximation will still converge to good Kronecker factors with high probability.
\end{remark}

\subsection{Derivation of the PAC-Bayes Objectives}
In this section, we provide further information on the PAC-Bayes objectives of \cref{eq:mc_allester_objective} and \cref{eq:catoni_objective}. In particular, we first derive the upper bound and its approximation that is shown in \cref{eq:approximate_upper_bound} together with computing the KL-divergence in dependence on the curvature scales. This is then used to derive the PAC-Bayes objectives. Next, we show, how these objectives can be adapted for network parameter optimization in the frequentist projection. Finally, we present the extension of the objectives to continual learning setup, in particular the proposed BPNNs.

\paragraph{Upper Bound of the Expected Empirical Error}
Here, we show the first equation of \cref{eq:approximate_upper_bound}:
\begin{align}
    \label{eq:upper_bound}
    \mathbb{E}_{\boldsymbol{\theta} \sim \rho}\left[\frac{1}{N} \sum_{i=1}^N \mathbbm{1}[\argmax_{\mathbf{y}'} p(\mathbf{y}'|\mathbf{x}_i, f_{\boldsymbol{\theta}}) \neq \mathbf{y}_i] \right] \leq \mathbb{E}_{\boldsymbol{\theta} \sim \rho}\left[\frac{1}{N} \sum_{i=1}^N - \frac{\ln{p(\mathbf{y}_i|\mathbf{x}_i,f_{\boldsymbol{\theta}})}}{\ln{2}}\right].
\end{align}
For this, as a first step, we present \cref{lem:upper_bound_error}:
\begin{lemma}
    \label{lem:upper_bound_error}
    Let $f: \mathcal{X} \to \Omega_L$, $(\mathbf{x}, \mathbf{y}) \in \mathcal{X} \times \mathcal{Y}$ and 
    \begin{align*}
        \er(f, \mathbf{x}, \mathbf{y}) = \mathbbm{1}[\argmax_{\mathbf{y}' \in \mathcal{Y}} p(\mathbf{y}'|\mathbf{x}, f) \neq \mathbf{y}],
    \end{align*}
    then
    \begin{align*}
        \er(f, \mathbf{x}, \mathbf{y}) \leq - \frac{\ln{p(\mathbf{y}|\mathbf{x},f)}}{\ln{2}}.
    \end{align*}
\end{lemma}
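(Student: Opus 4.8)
The plan is a direct case analysis on the binary quantity $\er(f,\mathbf{x},\mathbf{y}) \in \{0,1\}$, using only that $p(\cdot\,|\,\mathbf{x},f)$ is a probability distribution on $\mathcal{Y}$, so that $0 \le p(\mathbf{y}'\,|\,\mathbf{x},f) \le 1$ for every label and $\sum_{\mathbf{y}'} p(\mathbf{y}'\,|\,\mathbf{x},f) = 1$. The right-hand side $-\ln p(\mathbf{y}\,|\,\mathbf{x},f)/\ln 2$ is just the base-$2$ surprisal of the true label under the model, and the content of the lemma is the elementary fact that $0$-$1$ error never exceeds this surprisal.

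First I would treat the case $\er(f,\mathbf{x},\mathbf{y})=0$: here the left-hand side vanishes, while $p(\mathbf{y}\,|\,\mathbf{x},f) \le 1$ gives $-\ln p(\mathbf{y}\,|\,\mathbf{x},f) \ge 0$, so the inequality is immediate (and if $p(\mathbf{y}\,|\,\mathbf{x},f)=0$ the right-hand side is $+\infty$ and there is nothing to prove). For the case $\er(f,\mathbf{x},\mathbf{y})=1$, the definition of $\er$ says that $\mathbf{y}$ is not a maximizer of $\mathbf{y}' \mapsto p(\mathbf{y}'\,|\,\mathbf{x},f)$, so there is a label $\mathbf{y}^\ast \neq \mathbf{y}$ with $p(\mathbf{y}^\ast\,|\,\mathbf{x},f) \ge p(\mathbf{y}\,|\,\mathbf{x},f)$. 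Since these two masses are part of a probability distribution, $1 \ge p(\mathbf{y}\,|\,\mathbf{x},f) + p(\mathbf{y}^\ast\,|\,\mathbf{x},f) \ge 2\,p(\mathbf{y}\,|\,\mathbf{x},f)$, hence $p(\mathbf{y}\,|\,\mathbf{x},f) \le \tfrac12$, and therefore $-\ln p(\mathbf{y}\,|\,\mathbf{x},f)/\ln 2 \ge 1 = \er(f,\mathbf{x},\mathbf{y})$. Combining the two cases proves the lemma; taking expectations over $\boldsymbol{\theta}\sim\rho$ and averaging over the training set then gives \cref{eq:upper_bound}.

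There is essentially no obstacle here — it is a two-line argument. The only points worth a sentence of care are: the degenerate case $p(\mathbf{y}\,|\,\mathbf{x},f)=0$, where the bound is vacuous as noted; the existence of a maximizing label $\mathbf{y}^\ast$ when $\mathcal{Y}$ is finite (for countable $\mathcal{Y}$ one replaces ``maximizer'' by ``some label carrying mass at least $p(\mathbf{y}\,|\,\mathbf{x},f)$'', which still exists because on the event $\er=1$ the complement of $\{\mathbf{y}\}$ must carry mass at least $p(\mathbf{y}\,|\,\mathbf{x},f)$, forcing $p(\mathbf{y}\,|\,\mathbf{x},f) \le \tfrac12$); and the tie-breaking convention in $\argmax$, which is irrelevant since a tie also produces a competing label of mass $\ge p(\mathbf{y}\,|\,\mathbf{x},f)$.
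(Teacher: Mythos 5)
Your proof is correct and follows essentially the same case analysis as the paper: the error-zero case uses $p(\mathbf{y}|\mathbf{x},f)\le 1$, and the error-one case produces a competing label of mass at least $p(\mathbf{y}|\mathbf{x},f)$, giving $p(\mathbf{y}|\mathbf{x},f)\le \tfrac12$ and hence a surprisal of at least one bit. The extra remarks on the degenerate case $p(\mathbf{y}|\mathbf{x},f)=0$, countable $\mathcal{Y}$, and tie-breaking are harmless additions beyond what the paper states.
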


\begin{proof}
    The proof examines the case of a correct prediction first and of a wrong prediction second. For the correct prediction, the error is $0$ and the bound reduces to the negative log-likelihood being larger than $0$. In the case of a wrong prediction, we use that there is a class with a higher probability and that the correct and the most probable class have together a probability that can be bounded by $1$ from above.
    
    First, consider the case of a correct classification, which means that $y$ has the largest probability: $\argmax_{\mathbf{y}' \in \mathcal{Y}} p(\mathbf{y}'|\mathbf{x}, f) = \mathbf{y}$. Then, $\er(f, \mathbf{x}, \mathbf{y}) = 0$. As $p(\mathbf{y}|\mathbf{x},f)$ is a discrete probability, $p(\mathbf{y}|\mathbf{x},f) \leq 1$ and hence, by taking the negative logarithm on both sides, $- \frac{\ln{p(\mathbf{y}|\mathbf{x},f)}}{\ln{2}} \geq 0 = \er(f, \mathbf{x}, \mathbf{y})$.
    
    Now, let $\argmax_{\mathbf{y}' \in \mathcal{Y}} p(\mathbf{y}'|\mathbf{x}, f) \neq \mathbf{y}$. Therefore, $\er(f, \mathbf{x}, \mathbf{y}) = 1$ and there exists a $\mathbf{y}' \in \mathcal{Y}$ such that $p(\mathbf{y}'|\mathbf{x}, f) \geq p(\mathbf{y}|\mathbf{x}, f)$. Additionally, we have that $1 \geq p(\mathbf{y}'|\mathbf{x}, f) + p(\mathbf{y}|\mathbf{x}, f) \geq 2 p(\mathbf{y}|\mathbf{x}, f)$. This can be written as $p(\mathbf{y}|\mathbf{x}, f) \leq \frac{1}{2}$. Due to the monotony of the logarithm, we can take the natural logarithm on both sides and divide by $-\ln{2}$ to obtain that $- \frac{\ln{p(\mathbf{y}|\mathbf{x},f)}}{\ln{2}} \geq 1 = \er(f, \mathbf{x}, \mathbf{y})$.
    
    Consequently, $- \frac{\ln{p(\mathbf{y}|\mathbf{x},f)}}{\ln{2}}$ is an upper bound of $\er(f, \mathbf{x}, \mathbf{y})$.
\end{proof}
\begin{remark}
    We can see from the proof that the bound is tighter when the correct class has a high likelihood. Hence, the bound will be best for good-performing models, while it might be loose when the negative data log-likelihood is large. 
\end{remark}

We can directly apply \cref{lem:upper_bound_error} to the empirical error of our neural network $f_{\boldsymbol{\theta}}$ to get
    \begin{align}
        \mathbbm{1}[\argmax_{\mathbf{y}'} p(\mathbf{y}'|\mathbf{x}_i, f_{\boldsymbol{\theta}}) \neq \mathbf{y}_i] = \er(f_{\boldsymbol{\theta}}, \mathbf{x}_i, \mathbf{y}_i) \leq - \frac{\ln{p(\mathbf{y}_i|\mathbf{x}_i,f_{\boldsymbol{\theta}})}}{\ln{2}}
    \end{align}
for each element $i$ in the data-set. Therefore, also the sum and the expectation is larger or equal leading to \cref{eq:upper_bound}.

Moreover, we can plug this upper bound into the PAC-Bayes bounds to receive new upper bounds on the expected loss on the true data distribution:
\begin{corollary}
    Let $\varepsilon > 0$, $N \in \mathbb{N}$, and $\rho \ll \pi$ distributions over the weight space. Then
    \begin{align*}
        L_{\text{upper}} + \sqrt{\frac{\mathbb{KL}(\rho\|\pi) + \ln{\frac{2\sqrt{N}}{\varepsilon}}}{2 N}} \quad \text{and} \quad \inf_{c > 0} \frac{1 - \exp(-c L_{\text{upper}} - \frac{\mathbb{KL}(\rho\|\pi) - \ln{\varepsilon}}{N})}{1 - \exp(-c)}
    \end{align*}
    are upper bounds of the expected error $\mathbb{E}_{\boldsymbol{\theta} \sim \rho}[\mathcal{L}^{l}_{P}(f_{\boldsymbol{\theta}})]$ in \cref{eq:pac_bayes} with probability larger or equal to $1 - \varepsilon$.
\end{corollary}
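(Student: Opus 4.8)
The plan is to specialize the two PAC-Bayes bounds already stated (the McAllester bound of \cref{eq:mc_allester} and the Catoni bound of \cref{eq:catoni}) to the $0$-$1$ error loss $\er$ — which is valid since $\er$ takes values in $\{0,1\}\subset[0,1]$ — and then feed in the deterministic pointwise estimate of \cref{lem:upper_bound_error}. Concretely, I would fix the hypothesis family to be the classifiers $\mathbf{x}\mapsto\argmax_{\mathbf{y}'}p(\mathbf{y}'|\mathbf{x},f_{\boldsymbol{\theta}})$ indexed by $\boldsymbol{\theta}$, so that $\hat{\mathcal{L}}^{\er}_{\mathcal{D}}(f_{\boldsymbol{\theta}})=\frac1N\sum_{i=1}^N\mathbbm{1}[\argmax_{\mathbf{y}'}p(\mathbf{y}'|\mathbf{x}_i,f_{\boldsymbol{\theta}})\neq\mathbf{y}_i]$ and $\mathcal{L}^{\er}_P(f_{\boldsymbol{\theta}})$ is exactly $\mathcal{L}^{l}_{P}(f_{\boldsymbol{\theta}})$ with $l=\er$. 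Applying \cref{lem:upper_bound_error} to each training pair $(\mathbf{x}_i,\mathbf{y}_i)$, averaging over $i$, and taking $\mathbb{E}_{\boldsymbol{\theta}\sim\rho}$ (monotone) gives $\mathbb{E}_{\boldsymbol{\theta}\sim\rho}[\hat{\mathcal{L}}^{\er}_{\mathcal{D}}(f_{\boldsymbol{\theta}})]\le L_{\text{upper}}$, where $L_{\text{upper}}:=\mathbb{E}_{\boldsymbol{\theta}\sim\rho}\big[\tfrac1N\sum_{i=1}^N-\tfrac{\ln p(\mathbf{y}_i|\mathbf{x}_i,f_{\boldsymbol{\theta}})}{\ln 2}\big]$, i.e.\ the right-hand side of \cref{eq:upper_bound}.

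For the McAllester term this finishes the argument immediately: on the event of probability at least $1-\varepsilon$ on which \cref{eq:mc_allester} holds we have $\mathbb{E}_{\boldsymbol{\theta}\sim\rho}[\mathcal{L}^{\er}_P(f_{\boldsymbol{\theta}})]\le \mathbb{E}_{\boldsymbol{\theta}\sim\rho}[\hat{\mathcal{L}}^{\er}_{\mathcal{D}}(f_{\boldsymbol{\theta}})]+\sqrt{\tfrac{\mathbb{KL}(\rho\|\pi)+\ln\frac{2\sqrt N}{\varepsilon}}{2N}}$, and substituting $\mathbb{E}_{\boldsymbol{\theta}\sim\rho}[\hat{\mathcal{L}}^{\er}_{\mathcal{D}}(f_{\boldsymbol{\theta}})]\le L_{\text{upper}}$ into the first summand yields the first claimed bound (the KL term and the square root are untouched).

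For the Catoni term the one nontrivial ingredient is monotonicity of the Catoni functional in its empirical-loss slot: for each fixed $c>0$ the map $a\mapsto \frac{1-\exp\!\big(-ca-\frac{\mathbb{KL}(\rho\|\pi)-\ln\varepsilon}{N}\big)}{1-\exp(-c)}$ is nondecreasing, since its derivative in $a$ is $\frac{c\,\exp(-ca-\cdots)}{1-\exp(-c)}>0$ (the denominator $1-e^{-c}$ is positive for $c>0$). An infimum over $c>0$ of a family of nondecreasing functions of $a$ is again nondecreasing, so replacing $a=\mathbb{E}_{\boldsymbol{\theta}\sim\rho}[\hat{\mathcal{L}}^{\er}_{\mathcal{D}}(f_{\boldsymbol{\theta}})]$ by the larger $L_{\text{upper}}$ can only increase the Catoni expression; on the event of probability at least $1-\varepsilon$ where \cref{eq:catoni} holds this gives the second claimed bound.

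I expect the whole argument to be routine, with the only point requiring care being exactly this monotonicity step for Catoni, and a minor modeling choice to flag in the write-up: whether the statement means each bound separately holds with probability $\ge 1-\varepsilon$ (the reading used above) or both simultaneously, in which case one would invoke each at confidence level $\varepsilon/2$ and union bound, replacing $\varepsilon$ by $\varepsilon/2$ inside the logarithms. I would also add a remark echoing the one after \cref{lem:upper_bound_error}: these surrogate bounds are only informative when the negative data log-likelihood is small, since the slack $L_{\text{upper}}-\mathbb{E}_{\boldsymbol{\theta}\sim\rho}[\hat{\mathcal{L}}^{\er}_{\mathcal{D}}(f_{\boldsymbol{\theta}})]$ can otherwise dominate.
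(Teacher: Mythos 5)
Your proposal is correct and follows essentially the same route as the paper: the paper's proof also just substitutes the surrogate upper bound of \cref{lem:upper_bound_error} for the expected empirical error in the McAllester and Catoni bounds and appeals to their monotonicity in that argument. Your explicit monotonicity check for the Catoni functional and the remark about interpreting the confidence level for each bound separately are merely more detailed versions of what the paper leaves implicit.
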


\begin{proof}
    Both bounds come from using the upper bound instead of the expected empirical error in the McAllester~\citep{guedj2019primer} and Catoni~\citep{catoni2007pac} bounds. Since both bounds are monotone with respect to the expected empirical error, we get a new bound for each of them.
\end{proof}

\paragraph{Approximation of the Expected Empirical Error}
Next, we address the approximation of the upper bound from \cref{eq:approximate_upper_bound}:
    \begin{align}
        \label{eq:approximation}
        \mathbb{E}_{\boldsymbol{\theta} \sim \rho}\left[\frac{1}{N} \sum_{i=1}^N - \frac{\ln{p(\mathbf{y}_i|\mathbf{x}_i,f_{\boldsymbol{\theta}})}}{\ln{2}}\right] \approx \frac{- \ln{p(\mathcal{D}|f_{\boldsymbol{\hat{\theta}}})} + \frac{1}{2} \sum_{l \in [L]} \mathbf{\tau}_l \trace \left(\mathbf{F}_l (\mathbf{\beta}_l \mathbf{F}_l + \mathbf{\alpha}_l \mathbf{\tilde{F}}_l)^{-1}\right)}{N \ln{2}}.
    \end{align}
First, we observe that the upper bound in \cref{eq:upper_bound} is the scaled negative data log-likelihood:
    \begin{align*}
        \mathbb{E}_{\boldsymbol{\theta} \sim \rho}\left[\frac{1}{N} \sum_{i=1}^N - \frac{\ln{p(\mathbf{y}_i|\mathbf{x}_i,f_{\boldsymbol{\theta}})}}{\ln{2}}\right] = \frac{1}{N \ln{2}} \mathbb{E}_{\boldsymbol{\theta} \sim \rho}\left[- \ln{p(\mathcal{D}|f_{\boldsymbol{\theta}})}\right].
    \end{align*}
For the approximation, we use the idea from LA to use the second-order Taylor polynomial around the optimal parameters $\boldsymbol{\hat{\theta}}$ and replace the Hessian by the Fisher matrix:
    \begin{align*}
        &\frac{1}{N \ln{2}} \mathbb{E}_{\boldsymbol{\theta} \sim \rho}\left[- \ln{p(\mathcal{D}|f_{\boldsymbol{\theta}})}\right] \\
        &\approx \frac{1}{N \ln{2}} \mathbb{E}_{\boldsymbol{\theta} \sim \rho}\left[- \ln{p(\mathcal{D}|f_{\boldsymbol{\hat{\theta}}})} - {\mathcal{D}\boldsymbol{\theta}}^T (\boldsymbol{\theta} - \boldsymbol{\hat{\theta}}) + \frac{1}{2}(\boldsymbol{\theta} - \boldsymbol{\hat{\theta}})^T \mathbf{F} (\boldsymbol{\theta} - \boldsymbol{\hat{\theta}})\right].
    \end{align*}
In contrast to LA, we don't use the Taylor approximation on the posterior but on the likelihood. Therefore, the quadratic term only includes the Fisher matrix and not the precision of the prior. In the next step, we can move the expectation in because $\ln{p(\mathcal{D}|f_{\boldsymbol{\hat{\theta}}})}$ is independent of $\boldsymbol{\theta}$ and $\mathbb{E}_{\boldsymbol{\theta} \sim \rho}\left[{\mathcal{D}\boldsymbol{\theta}}^T (\boldsymbol{\theta} - \boldsymbol{\hat{\theta}})\right] = {\mathcal{D}\boldsymbol{\theta}}^T (\mathbb{E}_{\boldsymbol{\theta} \sim \rho}\left[\boldsymbol{\theta}\right] - \boldsymbol{\hat{\theta}}) = {\mathcal{D}\boldsymbol{\theta}}^T (\boldsymbol{\hat{\theta}} - \boldsymbol{\hat{\theta}}) = 0$:
    \begin{align*}
        &\frac{\mathbb{E}_{\boldsymbol{\theta} \sim \rho}\left[- \ln{p(\mathcal{D}|f_{\boldsymbol{\hat{\theta}}})} - {\mathcal{D}\boldsymbol{\theta}}^T (\boldsymbol{\theta} - \boldsymbol{\hat{\theta}}) + \frac{1}{2}(\boldsymbol{\theta} - \boldsymbol{\hat{\theta}})^T \mathbf{F} (\boldsymbol{\theta} - \boldsymbol{\hat{\theta}})\right]}{N \ln{2}} = \frac{- \ln{p(\mathcal{D}|f_{\boldsymbol{\hat{\theta}}})} + \frac{1}{2}\mathbb{E}_{\boldsymbol{\theta} \sim \rho}\left[(\boldsymbol{\theta} - \boldsymbol{\hat{\theta}})^T \mathbf{F} (\boldsymbol{\theta} - \boldsymbol{\hat{\theta}})\right]}{N \ln{2}}
    \end{align*}
Finally, we can use that the posterior is a normal distribution $\rho = \mathcal{N}(\boldsymbol{\hat{\theta}}, \mathbf{\hat{F}}^{-1})$, where $\mathbf{\hat{F}}$ is a block-diagonal matrix, where each block is given by $\mathbf{\hat{F}}_l = \frac{1}{\mathbf{\tau}_l} (\mathbf{\beta}_l \mathbf{F}_l + \mathbf{\alpha}_l \mathbf{\tilde{F}}_l)$. Therefore, we can reformulate the expectation of the quadratic term as a trace
    \begin{align*}
        \frac{- \ln{p(\mathcal{D}|f_{\boldsymbol{\hat{\theta}}})} + \frac{1}{2}\mathbb{E}_{\boldsymbol{\theta} \sim \rho}\left[(\boldsymbol{\theta} - \boldsymbol{\hat{\theta}})^T \mathbf{F} (\boldsymbol{\theta} - \boldsymbol{\hat{\theta}})\right]}{N \ln{2}} = \frac{- \ln{p(\mathcal{D}|f_{\boldsymbol{\hat{\theta}}})} + \frac{1}{2} \sum_{l \in [L]} \mathbf{\tau}_l \trace \left(\mathbf{F}_l (\mathbf{\beta}_l \mathbf{F}_l + \mathbf{\alpha}_l \mathbf{\tilde{F}}_l)^{-1}\right)}{N \ln{2}} =: \aer(\mathbf{\alpha}, \mathbf{\beta}, \mathbf{\tau}).
    \end{align*}
This approximation is only dependent on quantities that were already computed during the LA, such as the Fisher matrix and the optimal parameters, or that can be computed without extra effort during the computation of the LA like the negative data log-likelihood for the optimal parameters.

\paragraph{KL-divergence}
The PAC-Bayes bounds are not only dependent on the expected empirical risk but also on the KL-divergence between the prior and posterior. Since we use LA for both, this boils down to the KL-divergence between two multivariate normal distributions which can be computed in closed form. For the prior $\rho=\mathcal{N}(\boldsymbol{\tilde{\theta}}, \mathbf{\tilde{F}}^{-1})$ and posterior $\mathcal{N}(\boldsymbol{\hat{\theta}}, \mathbf{\hat{F}}^{-1})$ defined as above, the KL-divergence can be computed as
    \begin{align}
        \mathbb{KL}(\rho\|\pi) &= \mathbb{KL}(\mathcal{N}(\boldsymbol{\hat{\theta}}, \mathbf{\hat{F}}^{-1})\|\mathcal{N}(\boldsymbol{\tilde{\theta}}, \mathbf{\tilde{F}}^{-1})) \nonumber\\
        &= \frac{1}{2} \left(\trace(\mathbf{\tilde{F}} \mathbf{\hat{F}}^{-1}) - n - \ln{\frac{\det{\mathbf{\tilde{F}}}}{\det{\mathbf{\hat{F}}}}} + (\boldsymbol{\hat{\theta}} - \boldsymbol{\tilde{\theta}})^T \mathbf{\tilde{F}} (\boldsymbol{\hat{\theta}} - \boldsymbol{\tilde{\theta}})\right)\nonumber\\
        &= \frac{1}{2} \left(\sum_{l \in [L]}\trace(\mathbf{\tilde{F}}_l \mathbf{\tau}_l(\mathbf{\beta}_l \mathbf{F}_l + \mathbf{\alpha}_l \mathbf{\tilde{F}}_l)^{-1}) - n_l - \ln{\frac{\det{\mathbf{\tilde{F}}_l}}{\det(\frac{1}{\mathbf{\tau}_l}(\mathbf{\beta}_l \mathbf{F}_l + \mathbf{\alpha}_l \mathbf{\tilde{F}}_l))}} + (\boldsymbol{\hat{\theta}}_l - \boldsymbol{\tilde{\theta}}_l)^T \mathbf{\tilde{F}}_l (\boldsymbol{\hat{\theta}}_l - \boldsymbol{\tilde{\theta}}_l)\right)\nonumber\\
        &= \frac{1}{2} \left(\sum_{l \in [L]}\mathbf{\tau}_l\trace(\mathbf{\tilde{F}}_l (\mathbf{\beta}_l \mathbf{F}_l + \mathbf{\alpha}_l \mathbf{\tilde{F}}_l)^{-1}) - n_l (1 + \ln{\mathbf{\tau}_l}) - \ln{\frac{\det{\mathbf{\tilde{F}}_l}}{\det(\mathbf{\beta}_l \mathbf{F}_l + \mathbf{\alpha}_l \mathbf{\tilde{F}}_l)}} + (\boldsymbol{\hat{\theta}}_l - \boldsymbol{\tilde{\theta}}_l)^T \mathbf{\tilde{F}}_l (\boldsymbol{\hat{\theta}}_l - \boldsymbol{\tilde{\theta}}_l)\right) \nonumber\\
        &=: \kl(\mathbf{\alpha}, \mathbf{\beta}, \mathbf{\tau}).\label{eq:kl}
    \end{align}
Similar to the approximation of the expected empirical error, all relevant quantities to compute the KL-divergence are already given after the LA.

\paragraph{PAC-Bayes Objectives}
Combining the approximation of the expected empirical error $\aer(\mathbf{\alpha}, \mathbf{\beta}, \mathbf{\tau})$ and the KL-divergence $\kl(\mathbf{\alpha}, \mathbf{\beta}, \mathbf{\tau})$, we get approximations of the McAllester bound~\citep{guedj2019primer}:
\begin{align*}
    ma(\mathbf{\alpha}, \mathbf{\beta}, \mathbf{\tau}) = \aer(\mathbf{\alpha}, \mathbf{\beta}, \mathbf{\tau}) + \sqrt{\frac{\kl(\mathbf{\alpha}, \mathbf{\beta}, \mathbf{\tau}) + \ln{\frac{2\sqrt{N}}{\varepsilon}}}{2 N}},
\end{align*}
and Catoni bound~\citep{catoni2007pac}:
\begin{align*}
    ca(\mathbf{\alpha}, \mathbf{\beta}, \mathbf{\tau}) = \inf_{c > 0} \frac{1 - \exp(-c \aer(\mathbf{\alpha}, \mathbf{\beta}, \mathbf{\tau}) - \frac{\kl(\mathbf{\alpha}, \mathbf{\beta}, \mathbf{\tau}) - \ln{\varepsilon}}{N})}{1 - \exp(-c)}.
\end{align*}
These objectives can be evaluated and minimized without going through any data samples.

\subsubsection{Frequentist Projection}
\label{subsubsec:frequentist_projection}
In addition to the curvature scaling, we also propose frequentist projection, which also uses approximations of the bounds to optimize the network parameters. This has the goal of further minimizing the PAC-Bayesian bounds also with the choice of the parameters and not only with the curvature scaling. Nonetheless, as the Fisher matrix is only available after the weights are found, we neglect the terms that depend on the curvature. For the McAllester Bound, this results in the optimization problem
\begin{align*}
    \min_{\boldsymbol{\theta}} - \frac{\ln{p(\mathcal{D}|f_{\boldsymbol{\theta}})}}{\ln(2) N} + \sqrt{\frac{\frac{1}{2 \tau} (\boldsymbol{\theta} - \boldsymbol{\tilde{\theta}})^T \mathbf{\tilde{F}} (\boldsymbol{\theta} - \boldsymbol{\tilde{\theta}}) + \ln(\frac{2 N}{\varepsilon})}{2 N}}.
\end{align*}
The Catoni Bound is difficult to optimize with variations of stochastic gradient descent because of the exponential in the objective function. Nonetheless, when the Catoni scale $c > 0$ is fixed, minimizing the upper bound of the Catoni Bound is equivalent to calculating
\begin{align*}
    \min_{\boldsymbol{\theta}} - c \frac{\ln{p(\mathcal{D}|f_{\boldsymbol{\theta}})}}{\ln(2) N} + \frac{\frac{1}{2 \tau} (\boldsymbol{\theta} - \boldsymbol{\tilde{\theta}})^T \mathbf{\tilde{F}} (\boldsymbol{\theta} - \boldsymbol{\tilde{\theta}}) - \ln{\varepsilon}}{N}.
\end{align*}
Therefore, we heuristically alternate between optimizing $\boldsymbol{\theta}$ and $c$ in practice, where the optimization after the Catoni scale is done using the objective
\begin{align*}
    \min_{c > 0} \frac{1 - \exp(c \frac{\ln{p(\mathcal{D}|f_{\boldsymbol{\theta}})}}{\ln(2) N} - \frac{\frac{1}{2 \tau} (\boldsymbol{\theta} - \boldsymbol{\tilde{\theta}})^T \mathbf{\tilde{F}} (\boldsymbol{\theta} - \boldsymbol{\tilde{\theta}}) - \ln{\varepsilon}}{N})}{1 - \exp(-c)}.
\end{align*}

\subsubsection{Extension to BPNNs}
For BPNNs, we also have to consider the parameters and posteriors from previous distributions. Therefore, we present the adaptions for BPNNs in this section. Since we still use the same approximations as in the transfer learning setup, we need to estimate the upper bound of the expected empirical risk and the KL-divergence as a function of the curvature scales. We reuse the notation from \cref{appendix:bpnn_overview}. Hence, we denote the correspondence to a given column by a superscript. In addition, we denote the precision matrix of the posterior with a tilde, \ie $\mathbf{\tilde{F}}^{(i, t)}_l = \frac{1}{\mathbf{\tau}_l} (\mathbf{\beta}^{(i, t)}_l \mathbf{F}^{(i, t)}_l + \mathbf{\alpha}^{(i, t)}_l \mathbf{\tilde{F}}^{(i, i)}_l)$ for a lateral connection at layer $l$ from column $i$ to column $j$. One can see from $\mathbf{\tilde{F}}^{(i, i)}_l$ that we use the posterior from the column $i$ as the prior for this lateral connection.

\paragraph{Approximation of the Expected Empirical Error}
To compute the approximate upper bound of the expected empirical error, we need to compute the expected empirical error on the training data as in \cref{eq:approximation}. For the proposed continual learning architecture, we can compute the upper bound of the expected empirical error and its approximation with
    \begin{align*}
        &\mathbb{E}_{\boldsymbol{\theta}^{(\leq t)} \sim \rho^{(\leq t)}}\left[\frac{1}{N_t} \sum_{i=1}^{N_t} \er(f_{\boldsymbol{\theta}^{(\leq t)}}, \mathbf{x}_i^{(t)}, \mathbf{y}_i^{(t)})\right]\\
        &\leq \frac{1}{N_t \ln{2}} \mathbb{E}_{\boldsymbol{\theta}^{(\leq t)} \sim \rho^{(\leq t)}}\left[- \ln{p(\mathcal{D}_t|f_{\boldsymbol{\theta}^{(\leq t)}})}\right]\\
        &\approx \frac{1}{N_t \ln{2}} \left(- \ln{p(\mathcal{D}_t|f_{\boldsymbol{\hat{\theta}}^{(\leq t)}})} + \frac{1}{2}\mathbb{E}_{\boldsymbol{\theta}^{(\leq t)} \sim \rho^{(\leq t)}}\left[(\boldsymbol{\theta}^{(\leq t)} - \boldsymbol{\hat{\theta}}^{(\leq t)})^T \mathbf{F}^{(\leq t)} (\boldsymbol{\theta}^{(\leq t)} - \boldsymbol{\hat{\theta}}^{(\leq t)})\right]\right),
    \end{align*}
where $\mathbf{F}^{(\leq t)}$ denotes the Fisher matrix on task $t$ for all weights up to column $t$. The quadratic term in the expectation can be computed exactly, \ie, we can use $\rho^{(\leq t)}$ as a normal distribution with mean $\boldsymbol{\hat{\theta}}^{(\leq t)}$. Moreover, we can decompose this into:
    \begin{align*}
        &\mathbb{E}_{\boldsymbol{\theta}^{(\leq t)} \sim \rho^{(\leq t)}}\left[(\boldsymbol{\theta}^{(\leq t)} - \boldsymbol{\hat{\theta}}^{(\leq t)})^T \mathbf{F}^{(\leq t)} (\boldsymbol{\theta}^{(\leq t)} - \boldsymbol{\hat{\theta}}^{(\leq t)})\right] = \trace(\mathbf{F}^{(\leq t)} (\mathbf{\tilde{F}}^{(\leq t)})^{-1})\\
        &= \sum_{j=1}^{t-1} \sum_{i=1}^{j} \trace(\mathbf{F}^{(i, j)} (\mathbf{\tilde{F}}^{(i, j)})^{-1}) + \sum_{i=1}^{t} \trace(\mathbf{F}^{(i, t)} (\mathbf{\tilde{F}}^{(i, t)})^{-1}).
    \end{align*}
The weights from previous columns, \ie for $i \leq j < t$, are frozen and their corresponding posterior together with their curvature scales are fixed. Therefore, the first term is constant with respect to the new curvature scales. To avoid computing the Fisher matrix with respect to previous network weights, we reuse the fixed Fisher matrix from the corresponding task of the given column. This simplifies the upper bound:
    \begin{align*}
        \sum_{j=1}^{t-1}& \sum_{i=1}^{j} \trace(\mathbf{F}^{(i, j)} (\mathbf{\tilde{F}}^{(i, j)})^{-1}) + \sum_{i=1}^{t} \trace(\mathbf{F}^{(i, t)} (\mathbf{\tilde{F}}^{(i, t)})^{-1})\\
        \approx& \text{tr}^{(\leq t-1)} + \sum_{i=1}^{t} \trace(\mathbf{F}^{(i, t)} (\mathbf{\tilde{F}}^{(i, t)})^{-1}),\\
        =& \text{tr}^{(\leq t-1)} + \sum_{i=1}^{t-1} \sum_{l \in \mathfrak{L}} \trace(\mathbf{F}^{(i, t)}_l \mathbf{\tau}_l (\mathbf{\beta}^{(i, t)}_l \mathbf{F}^{(i, t)}_l + \mathbf{\alpha}^{(i, t)}_l \mathbf{\tilde{F}}^{(i, i)}_l)^{-1}) + \sum_{l = 1}^{L} \trace(\mathbf{F}^{(t, t)}_l \mathbf{\tau}_l (\mathbf{\beta}^{(t, t)}_l \mathbf{F}^{(t, t)}_l + \mathbf{\alpha}^{(t, t)}_l \mathbf{\tilde{F}}^{(0)}_l)^{-1}).\\
    \end{align*}
Here, we introduce the trace for all fixed previous columns and Fisher matrices as $\text{tr}^{(\leq t-1)}$. This term is not dependent on the curvature scales. Moreover, the set of the lateral connections is $\mathfrak{L}$.

\paragraph{KL-divergence}
With a block-diagonal covariance matrix, we assume that the weights of different layers are independent. Using this, the KL-divergence between the posterior and the prior can be written as
\begin{align}
    \mathbb{KL}\left(\rho^{(\leq t)}\|\pi^{(\leq t)}\right) &= \sum_{j = 1}^{t} \sum_{i = 1}^{j} \mathbb{KL}\left(\rho^{(i, j)}\|\pi^{(i, j)}\right)\nonumber\\
    &= \sum_{i = 1}^{t} \mathbb{KL}\left(\rho^{(i, t)}\|\pi^{(i, t)}\right) + \sum_{j = 1}^{t-1} \sum_{i = 1}^{j} \mathbb{KL}\left(\rho^{(i, j)}\|\pi^{(i, j)}\right)\nonumber\\
    &= \sum_{i = 1}^{t} \mathbb{KL}\left(\rho^{(i, t)}\|\pi^{(i, t)}\right)\label{eq:kl_fixed}\\
    &= \sum_{l \in [L]}\mathbb{KL}\left(\rho^{(t, t)}_l\|\mathcal{N}(\boldsymbol{\hat{\theta}}^{(0)}_l, (\mathbf{\tilde{F}}^{(0)}_l)^{-1})\right) + \sum_{l \in \mathfrak{L}}\sum_{i = 1}^{t-1} \mathbb{KL}\left(\rho^{(i, t)}_l\|\rho^{(i, i)}_l\right),\nonumber
\end{align}
where it was used that the prior is equal to the posterior for all fixed columns in \cref{eq:kl_fixed}. We approximate the posterior with LA and curvature scaling. Therefore, we can compute the KL-divergence again in closed form:
\begin{align}
    &\mathbb{KL}(\rho^{(\leq t)}\|\pi^{(\leq t)}) = \sum_{l \in [L]}\mathbb{KL}\left(\mathcal{N}(\boldsymbol{\hat{\theta}}^{(t, t)}_l, \mathbf{\tau}_l (\mathbf{\beta}^{(t, t)}_l \mathbf{F}^{(t, t)}_l + \mathbf{\alpha}^{(t, t)}_l \mathbf{\tilde{F}}^{(0)}_l)^{-1})\|\mathcal{N}(\boldsymbol{\hat{\theta}}^{(0)}_l, (\mathbf{\tilde{F}}^{(0)}_l)^{-1})\right)\\
    &\hphantom{\mathbb{KL}(\rho^{(\leq t)}\|\pi^{(\leq t)})} + \sum_{l \in \mathfrak{L}}\sum_{i = 1}^{t-1} \mathbb{KL}\left(\mathcal{N}(\boldsymbol{\hat{\theta}}^{(i, t)}_l, \mathbf{\tau}_l (\mathbf{\beta}^{(i, t)}_l \mathbf{F}^{(i, t)}_l + \mathbf{\alpha}^{(i, t)}_l \mathbf{\tilde{F}}^{(i, i)}_l)^{-1})\|\mathcal{N}(\boldsymbol{\hat{\theta}}^{(i, i)}_l, (\mathbf{\tilde{F}}^{(i, i)}_l)^{-1})\right)
\end{align}
Altogether, this shows the advantage of BPNN that some related feature embeddings can increase the performance even though only the current column contributes to the KL-divergence.

\section{Some thoughts on the regression}
\label{subsec:regression:comments}

In this section, we share potential extensions of the given PAC-Bayes framework in LA, to regression problems. PAC-Bayes traditionally assumed bounded losses like in classification problems, while in recent years, the theory is being also extended to unbounded loss functions, as in regression problems. While we see this extension as beyond the scope of this work, we comment some thoughts on the regression case. For this, let us start with a general bound from \citet{alquier2016properties}:

\begin{definition}[\citet{alquier2016properties}]
\label{alquier:original}
Given a distribution $\mathcal{D}$ over $\mathcal{X} \times \mathcal{Y}$, a hypothesis set $\mathcal{G}$, a loss function $l: \mathcal{G}\times \mathcal{X} \times \mathcal{Y}$, a prior distribution $\pi$ over $\mathcal{G}$, a $\delta \in (0,1]$ and a real number $\gamma > 0$, with probability at least $1-\delta$ over the choice of $(X,Y) \sim \mathcal{D}^n$, we have
    \begin{align}
    \forall {\rho} \ \ \text{on} \ \ \mathcal{G}: \quad \mathbb{E}_{g \sim \rho} \mathcal{L}^l_\mathcal{D}(g) \leq \mathbb{E}_{g \sim \rho} \hat{\mathcal{L}}^l_{X,Y}(g) + \frac{1}{\lambda}\left [ \text{KL}(\rho||\pi) + \text{ln}\frac{1}{\delta} + \Psi_{l,\pi,\mathcal{D}}(\lambda, n)  \right ], & \\
    \text{where} \quad \Psi_{l,\pi,\mathcal{D}}(\lambda, n) = \text{ln } \mathbb{E}_{g \sim \pi} \mathbb{E}_{X',Y' \sim \mathcal{D}^n} \text{ exp }\left [ \lambda \left ( \mathcal{L}_\mathcal{D}^l(g) - \hat{\mathcal{L}}_{X',Y'}^l(g) \right ) \right ]. &
    \end{align}
\end{definition}
A special case for the unbounded loss functions have been examined in \citet{germain2016pac}. The results showed that, for regression problems, the Alquier bound holds for certain classes of loss functions. One of them is the so-called sub-gamma losses. Sub-gamma losses, given variance factor $s^2$ and scale parameter c, has a variable $\psi_V(\lambda):= \text{ln} \mathbb{E}e^{\lambda V}$ with $V=\mathcal{L}_{\mathcal{D}}^l(g) - l(g,x,y)$:
    \begin{equation}
        \psi_V(\lambda) \leq \frac{s^2}{c^2}\left ( -\text{ln}(1-\lambda c) \right ) \leq \frac{\lambda^2 s^2}{2(1-c\lambda)}, \quad \lambda \in (0, \frac{1}{c}).
    \end{equation}
For these types of losses, the PAC-Bayes bound can be obtained as shown in the theorem below.
\begin{definition}[\citet{germain2016pac}]
Given $D, \mathcal{G}, l, \pi$ and $\delta$ defined in the statement of the above theorem (definition \ref{alquier:original}), if the loss is sub-gamma with variance factor $s^2$ and scale $c < 1$ we have, with probability at least $1-\delta$ over $(X,Y) \sim \mathcal{D}^n$,
    \begin{align}
    \forall \rho \ \ \text{on} \ \ \mathcal{G}: \quad \mathbb{E}_{g \sim \rho} \mathcal{L}^l_\mathcal{D}(g) \leq \mathbb{E}_{g \sim \rho} \hat{\mathcal{L}}^l_{X,Y}(g) + \frac{1}{N}\left [ \text{KL}(\rho||\pi) + \text{ln}\frac{1}{\delta} \right ] + \frac{s^2}{2(1-c)}. & 
    \end{align}
\end{definition}
One concrete example is the case for Bayesian linear regression. Here, our model is: $f_{\boldsymbol{\theta}}(\rvx) = \boldsymbol{\theta} \phi(\rvx)$ with the modelling choices of: $\boldsymbol{\theta} \sim \mathcal{N}(\mathbf{0},\sigma_\pi^2\mathbf{I})$ and $\phi(\rvx) \sim \mathcal{N}(\mathbf{0}, \sigma_x^2 \mathbf{I})$. Also, we add white noise: $\rvy = \hat{\boldsymbol{\theta}} \rvx + \epsilon$ where $\epsilon \sim \mathcal{N}(\mathbf{0}, \sigma_{\epsilon}^2)$, resulting in $p(\rvy|\rvx) \sim \mathcal{N}(\mathbf{\Phi} (\rvx) \cdot \hat{\boldsymbol{\theta}}, \sigma_{\epsilon}^2)$. As in \citet{bishop2009pattern}, $p(\boldsymbol{\theta}|\mathcal{D},\sigma,\sigma_\pi) = \mathcal{N}( \boldsymbol{\theta}|\hat{ \boldsymbol{\theta}}, \mathbf{A}^{-1})$ where $\mathbf{A} := \frac{1}{\sigma^2}\mathbf{\Phi}^T\mathbf{\Phi} + \frac{1}{\sigma_\pi^2}I$. Under these settings and using negative log likelihood, \citet{germain2016pac} shows:
    \begin{align}
        & s^2 \geq \frac{1}{\lambda \sigma^2}\left [ \sigma_x^2(\sigma_\pi^2d + \left \| \hat{\boldsymbol{\theta}} \right \|^2) + \sigma_\epsilon^2(1-\lambda c) \right ] \quad \text{and} \quad c \geq \frac{1}{\sigma^2}(\sigma_x^2 \sigma_\pi^2).
    \end{align}
Moreover, under Gaussian assumptions, the Alquier bound is closed, e.g., $N \mathbb{E}_{\boldsymbol{\theta} \sim \rho} \hat{\mathcal{L}}_{X,Y}^{l_{\text{nll}}}(\boldsymbol{\theta}) = \mathbb{E}_{\boldsymbol{\theta} \sim \rho}\sum_{i=1}^{n}-\text{ln} p(\rvy_i|\rvx_i, \boldsymbol{\theta})= N \hat{\mathcal{L}}_{X,Y}^{l_{\text{nll}}}(\boldsymbol{\theta}) + \frac{1}{2\sigma^2}\text{tr}(\mathbf{\Phi}^T \mathbf{\Phi}  \mathbf{A}^{-1})$, and also the KL-divergence term between two Gaussian distributions. These results are given generalized linear models, while our paper is about neural networks. On the other hand, if neural networks can be approximated with linear regression in some sense, we should intuitively be able to exploit the results so far. Hence, we next make the connections between BNNs and linear regression via LA.

To start with, let us denote $p(\boldsymbol{\theta} | \mathcal{D}) \approx \mathcal{N}(\boldsymbol{\theta} | \hat{\boldsymbol{\theta}}, \mathbf{\Sigma})$ as the posterior distribution of BNNs, which is obtained using LA, \ie, $\mathbf{\Sigma}^{-1} = \sum_{i=1}^{N}\triangledown_{\boldsymbol{\theta} \boldsymbol{\theta}}^2 l_i(\hat{\boldsymbol{\theta}}) + \delta \mathbf{I}$ with $\triangledown_{\boldsymbol{\theta} 
 \boldsymbol{\theta}}^2 l(\boldsymbol{\theta}) \approx \mathbf{J}_{\boldsymbol{\theta}}(\rvx)^T \mathbf{\Lambda}_{\boldsymbol{\theta}}(\rvx,\rvy)\mathbf{J}_{\boldsymbol{\theta}}(\rvx)$. Here, $\mathbf{J}_{\boldsymbol{\theta}}(\rvx)$ is the first derivative of the output w.r.t the weights or jacobians of neural networks and $\mathbf{\Lambda}_{\boldsymbol{\theta}}(\rvx,\rvy)$ is the output noise precision term. These quantities are defined at $\theta^*$, and additionally, we define the residual term $\boldsymbol{r}(\rvx_i,\rvy_i)$ similarly. We note that $\triangledown_{\boldsymbol{\theta}  \boldsymbol{\theta}}^2 l(\boldsymbol{\theta})$ here represents an output space formulation of the Hessian, while previously, we described the weight space formulation for the Hessian. These are different ways of representing the Hessin in neural networks. Then, a transformed data-set can be defined: $\mathcal{\tilde{D}} = \left\{ (\rvx_i,\tilde{\rvy}_i) \right\}_{i=1}^N$ where $\tilde{\rvy}_i :=  \mathbf{J}(\rvx_i) \theta^* - \mathbf{\Lambda}(\rvx_i, \rvy_i)^{-1} \boldsymbol{r}(\rvx_i,\rvy_i)$. Given at the mode, assuming that the residual term is close to zero, \ie, the model fits the data well, a linear model that fits this data-set $\mathcal{\tilde{D}}$ as a subspace, can be defined:
    \begin{equation}
    \label{eq:neural:linear:model}
        \widetilde{\rvy} = \mathbf{J}(\rvx) \boldsymbol{\theta} + \epsilon \text{ where } \epsilon \sim \mathcal{N}\left (\mathbf{0}, \mathbf{\Lambda}^{-1}(\rvx, \rvy)  \right ) \text{ and } \boldsymbol{\theta} \sim \mathcal{N}(\mathbf{0}, \delta^{-1} \mathbf{I}).
    \end{equation}
This alternative LA-based BNN formulation has interesting implication, as \citet{khan2019approximate} shows that the posterior of this linear model is the same as that of original BNNs $p(\boldsymbol{\theta} | \mathcal{D}) \approx \mathcal{N}(\boldsymbol{\theta} | \hat{\boldsymbol{\theta}}, \mathbf{\Sigma})$:

\begin{definition}[\citet{khan2019approximate}]
The Laplace approximation of a neural network posterior $p(\boldsymbol{\theta} | \mathcal{D})$ is equal to the posterior distribution $p(\boldsymbol{\theta}|\mathcal{\tilde{D}})$.
\end{definition}

This means that we may then bring these results together in order to obtain a valid PAC-Bayes bound. Overall, there exists a valid PAC-Bayes bound for Bayesian linear regression models which can be obtained in closed form for tractable optimization. As with LA, a neural network has a linear subspace defined by a Bayesian linear model (equation \ref{eq:neural:linear:model}), one could also analyze LA-based BNNs like \citep{lee2021trust} using PAC-Bayes bound, and consequently the curvature scaling. 
\section{Appendix: Implementation Details}
\label{appendix:implementation:details}

Now, we present the implementation details, which have been used in our experiments. We plan to officially release the code for the community. While implementation details are provided per different experiments, the general setting is as follows. Our software is based on Pytorch when possible, while only in few-shot learning experiments, we also use TensorFlow and JAX in order to utilize the 'uncertainty baselines' repository.  In all our experiments, the Fisher matrix is computed with the K-FOC method~\citep{schnaus2021kronecker}, which is a sub-class of the KFAC method from Section~\ref{sec:method}. Our computing cluster consists of multiple GPUs, from NVIDIA's Pascal to Ampere architecture. No multi-GPU training was used for the results. The largest GPU memory is 24GB, while CPU RAM of 120GB is available for these large GPUs.

\subsection{Implementation Details: Section~\ref{sec:results:ablations:main}}

The implementation details for section~\ref{sec:results:ablations:main} is as follows. We use $90\%$ of the training data for training and $10\%$ for validation. Furthermore, we use the Adam optimizer with a batch size of 256 and a learning rate of $5 \cdot 10^{-4}$. The learning rate is halved if the validation loss does not decrease for five consecutive steps and training is stopped after 100 epochs or if the loss does not improve for ten consecutive steps. We also optimize the curvature scales using Adam with an exponential learning rate decay with a decay factor of $0.999999$ per step. In our experiments, we compare the performance of our learned prior against an isotropic prior with weight decay $10^{-3}$, $10^{-5}$, and $10^{-8}$ either with mean zero or using the pre-trained model as mean. We also use the same weight decay to compute the learned prior. For all Bayesian neural networks, we use $100$ samples from the posterior to estimate the predictive distribution. The PAC-Bayes bounds are evaluated with $\varepsilon=0.1$. Hence, the bounds are satisfied with at least $90\%$ probability.

We note that cross-validation would be a reasonable method to use the entire training set. However, this would mean that with k folds, one would have to retrain the model k times with each hyperparameter configuration. Even with only one separate validation set, grid search would quickly become infeasible when optimizing three parameters per layer, each with multiple values. This is primarily because one would need to sample multiple weights for each validation sample to estimate the predictive distribution, and then repeat this for the entire validation set for each hyperparameter optimization. Meanwhile, for PAC-Bayes objectives, we reuse some of the quantities that can be estimated during the Laplace approximation (the Fisher matrix, the optimal network parameters, and the log-likelihood of the training data using the optimal parameters). In this way, our objectives can be evaluated without additional forward passes. Therefore, the complexity of hyperparameter optimization is independent of the size of the data-set and the complexity of the model forward pass, and thus in practice, its convergence can be faster than grid search. To illustrate this point, we compared to a grid search over $\alpha \in \left\{ 0.01, 0.1, 0.3, 0.5, 0.8, 0.9, 0.99, 1, 2 \right\}$, $\beta \in \left\{ 0.01, 0.1, 0.3, 0.5, 0.8, 0.9, 0.99, 1, 2 \right\}$, and $\tau \in \left\{ 10^{-i}| i \in [26] \right\}$ shared over each layer. The models maximizing the validation accuracy were chosen. Other ablations are similar to the setting of Figure 2a and we have used five different seeds. Thus, curvature scaling and frequentist projection refer to our PAC-Bayes-based approaches. The results are shown in the corresponding table.

We also quantitatively evaluated the quality of the approximation on the sums-of-Kronecker-products. For this, we measured the relative error in terms of a Frobenius norm, where we compared the approximates to the true sums-of-Kronecker product with positive definite matrices. The baseline is the approximation adapted by \citet{ritter2018scalable,ritter2018online}, which assumes $(\mathbf{L} \otimes \mathbf{R} + \gamma \mathbf{I} )^{-1} \approx (\mathbf{L} + \gamma \mathbf{I})^{-1} \otimes (\mathbf{R}+ \gamma \mathbf{I})^{-1}$ (denoted as 'Sum'). We note again that this rule does not hold in general, and therefore, we proposed a power method to better approximate the sum-of-the-Kronecker products as a Kronecker product. We denote our approach as the' power method' here. 

First, we evaluate by generating positive definite matrices $\mathbf{L} \in \mathbb{R}^{M \times M}$, $\mathbf{R} \in \mathbb{R}^{N \times N}$ with $M=N$. One variation can be the size of the matrices from $2$ to $20$. We obtain them by sampling the elements from a standard normal distribution, then multiplying its transposed, and finally adding $10^{-8} \cdot \mathbf{I}$. We also scale the resulting matrix by a scale from $10^{-5} to 10^{4}$ and also use different weight decays $10^{-10}$ to $1$ within $(\mathbf{L} \otimes \mathbf{R} + \gamma \mathbf{I} )$.  For each size, scale, and weight decay, we use further $100$ different random seeds. The results are shown in Figure~\ref{fig:results_ablations:kronecker}. In the plots, we show the mean and the $95\%$ confidence interval of the relative Frobenius error for the two different approximations. Within these settings, we find that the power method has a relative Frobenius error of $0.01020 \pm 0.03129$. On the other hand, computing the sum of each factor as in \citep{ritter2018scalable} by $(\mathbf{L} + \sqrt{\gamma} \mathbf{I}) \otimes (\mathbf{R} + \sqrt{\gamma} \mathbf{I})$, has a relative Frobenius error of $0.10167 \pm 0.22935$. So, our experiments show that the power method yields at least one magnitude smaller in terms of the Frobenius error.

Furthermore, we also evaluate the sums-of-Kronecker-products for more than two matrices. Such evaluation is important for settings such as BPNN, where such computations are needed for each task. To do so, we compute the relative Frobenius error for $\sum_{k=1}^K \mathbf{L}^k \otimes \mathbf{R}^k - \mathbf{L} \otimes \mathbf{R}$ with $K$ ranging from $2$ to $9$. We use $M=N=5$ and sample the matrices similar to the aforementioned way: $\mathbf{L}$ and $\mathbf{R}$ (see Figure~\ref{fig:results_ablations:kronecker}). The baseline is again the 'sum'. As in \citep{ritter2018online}, we sum over more than two Kronecker factored matrices, where the first matrix is an identity matrix, scaled by the weight decay $\gamma$. The results are depicted in Figure~\ref{fig:results_ablations:kronecker}. We observed that the relative error is not only low with the power iteration method but also, the relative error of the baseline 'sum' significantly increases. These results motivate the proposed method to compute the sums-of-Kronecker-products as two Kronecker factors.

\subsection{Implementation Details: Section~\ref{sec:results:continual:learning}}

The implementation details for section~\ref{sec:results:continual:learning} is as follows. Following \citet{grosse2016kronecker}, we do not set a prior on the parameters of batch normalization.  All images are normalized by the mean and standard deviation from the ImageNet data-set and resized to $224 \times 224$. We further use the Adam optimizer with batch size $4$ and learning rate $10^{-4}$. We compare our method to PNNs with weight decays $10^{-1}, 10^{-2}, \dots, 10^{-10}$, PNNs using Monte Carlo dropout~\citep{gal2015bayesian} without weight decay with dropout probability $0.01$. Moreover, the learned prior is compared to an isotropic Gaussian prior around zero and around the pre-trained weights on the ImageNet-1K data-set with weight decay $10^{-8}$. The temperature scaling is inferred after the training of each task with a coarse grid search.

\subsection{Implementation Details: Section~\ref{sec:results:fewshot}}

The implementation details for section~\ref{sec:results:fewshot} is as follows. Similar to the continual learning experiments (section~\ref{sec:results:continual:learning}), we do not set a prior for batch normalization. All images are resized to $224 \times 224$ after normalization. For comparing isotropic priors and learned priors, we use the Adam with a batch size of $4$ and learning rate of $10^{-4}$. On the other hand, for other baselines, we tried four learning rates more: $10^{-2},10^{-3}, 10^{-4}$ and $10^{-5}$ in order to increase the few-shot learning performance. We did not use one-shot learning due to the validity of LA, but rather increased the number of evaluation points. While we use the software of  \citet{tran2022plex} for creating the few-shot learning data-set pairs. On top of, we created separate validation sets from the remaining pool of data with 1:1 ratio to the few-shot training set. For the baselines, we use a deep ensemble with five members. \citet{lakshminarayanan2017simple} showed that for therein presented classification experiments, the five members were able to closely match the performance with more members. The dropout rates in Monte Carlo dropout the same as what is available. In all the experiments, we used $100$ samples from the posterior for computing the predictive distribution. 
\section{Appendix: Additional Experiments and Results}
\label{appendix:results}

\subsection{Qualitative Analysis of the Approximate Upper Bound}
\label{appendix:results_approximate_bound}

\begin{figure}
    \centering
    \vskip 0.2in
        \begin{center}
            \centerline{\includegraphics[width=0.8\textwidth]{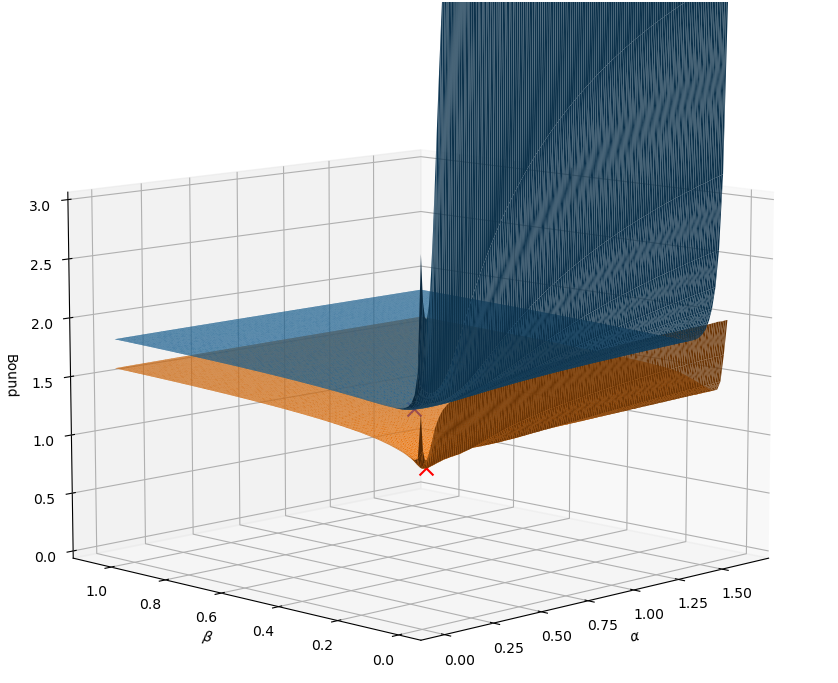}}
            \caption{True bound (orange) Versus the proposed approximation (blue). The results show that the proposed approximation exhibits a similar shape to the true bound, providing qualitative insights about the proposed approximation. Smaller the bound, the better. }
            \label{fig:results:true_vs_approximate}
        \end{center}
    \vskip -0.2in
\end{figure}

Here, we want to qualitatively analyze the quality of the upper bound, and obtain further insights about their validity. To do so, we used the same setting as our presented ablation studies in Section~\ref{sec:results}. One advantage is that the scale is small enough that we can directly compare the true PAC-Bayes bound, and the proposed approximation to that bound. However, unlike other experiments, we share $\alpha$ and $\beta$ across the layers. This was to meaningfully see the shape of the bound in 3D plots. The results are depicted in Figure~\ref{fig:results:true_vs_approximate}. Within this experiment, we observe that the true bound (orange) is tighter, but follows qualitatively similar behavior as the proposed approximation (blue). In particular, we find that the optima (orange cross for a true bound, blue cross for approximation) are close to each other. Therefore, we think this data shows the quality of the proposed approximation at a small scale.

\subsection{Ablations in the Small-Data Regime}
\label{appendix:results_small_data}

\begin{figure}
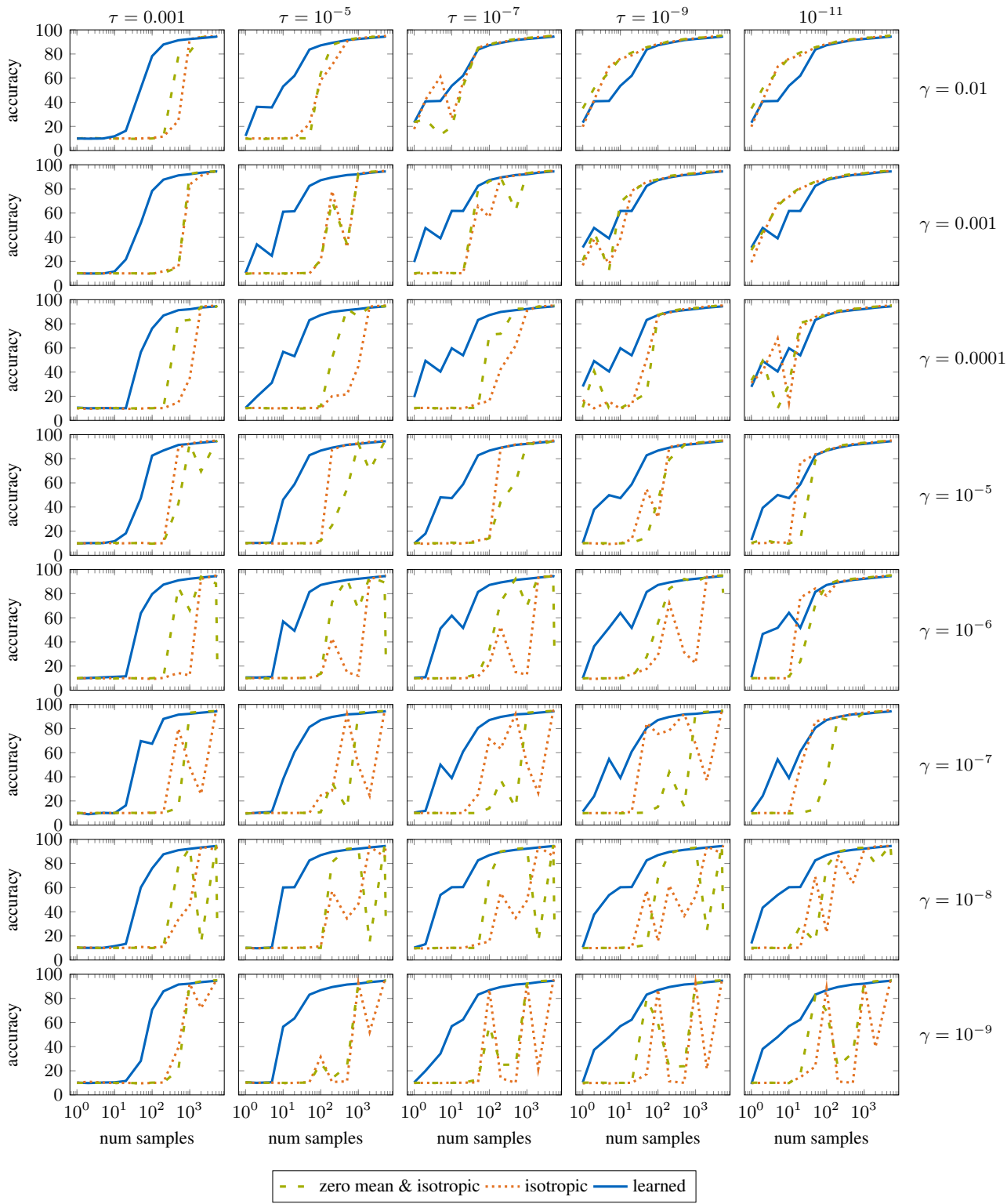

    \centering
    \vskip 0.2in
        \begin{center}
            \centerline{\includestandalone[width=\textwidth]{tikz/appendix/results/results_small_data}}
            \caption{Extension of ablations in the small-data regime via varying two hyperparameters, namely temperature scaling and weight decays. The results empirically show conditions when the learned prior enables more data-efficient learning. Faster the increase in accuracy, the better.}
            \label{fig:results_ablations:appendix:more}
        \end{center}
    \vskip -0.2in
\end{figure}

\begin{figure*}
    \centering
     \begin{subfigure}[b]{0.49\textwidth}
         \centering
         \includestandalone{tikz/appendix/results/cold_posterior_small}
         \label{fig:cold_posterior_small:appendix}
     \end{subfigure}
     \hfill
     \begin{subfigure}[b]{0.49\textwidth}
         \centering
         \includestandalone{tikz/appendix/results/cold_posterior_large}
         \label{fig:cold_posterior_large:appendix}
     \end{subfigure}
    \caption{Right: Extended cold posterior experiments across more weight decay variations. The learned prior produces more stable results w.r.t variations in the weight decay. Left: Further validation with the prior learned from ImageNet. Learned prior from ImageNet and tested on CIFAR-10. Maximum reachable accuracy is higher with the learned prior. Learned prior also generates more stable results w.r.t variations in weight decays.}
    \label{fig:results_ablations:futher_cold_posterior}
\end{figure*}

\begin{figure*}
    \centering
    \includegraphics[width=\textwidth]{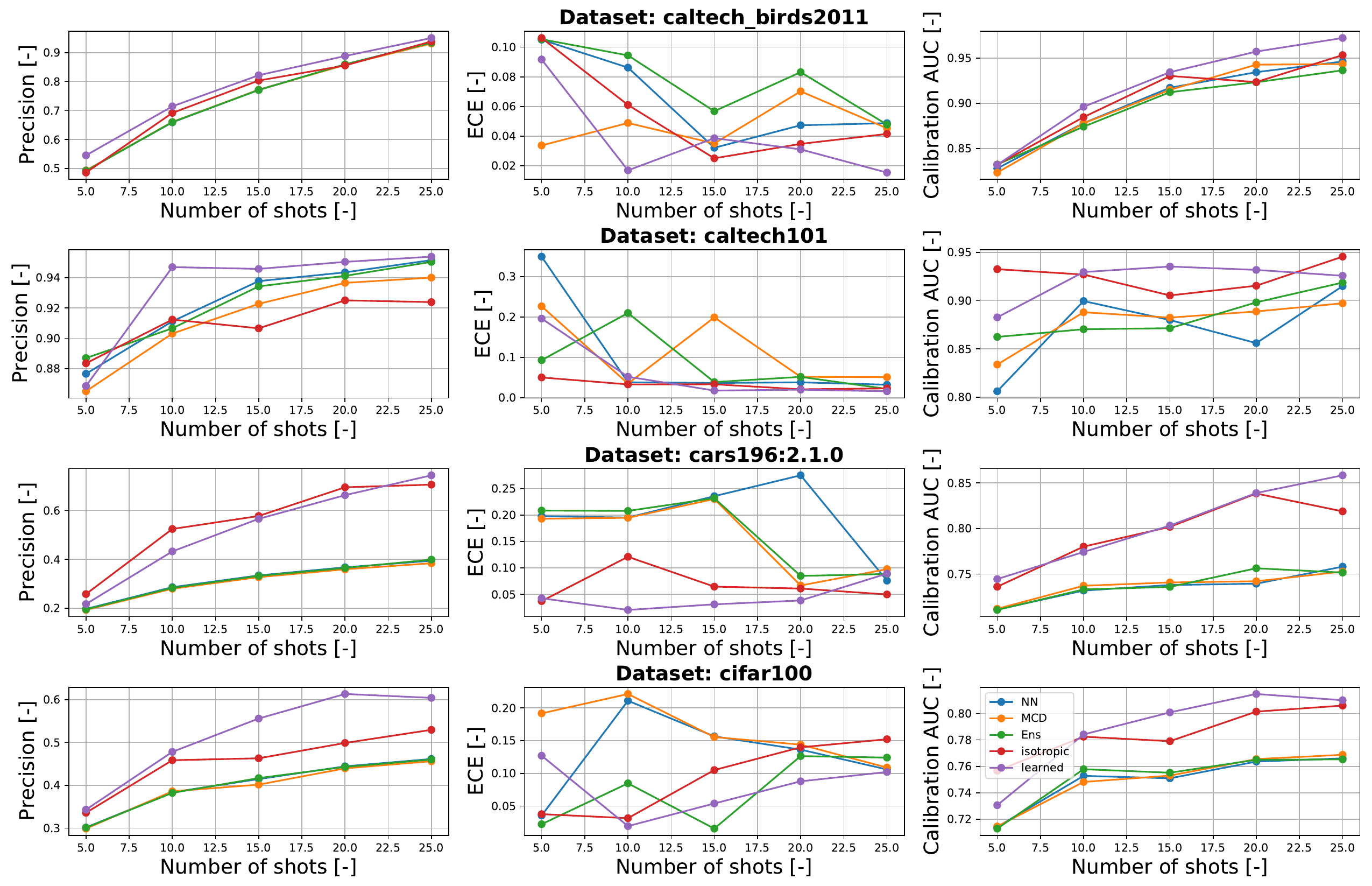}
    \caption{Few-shot learning experiments. Results are presented for each data-set. Higher the better for accuracy and AUC measures, while the lower the better for ECE measures. The results show the high performance of our method in both uncertainty calibration and generalization.}
    \label{fig:results_fewshot:appendix:1}
\end{figure*}

\begin{figure*}
    \centering
    \includegraphics[width=\textwidth]{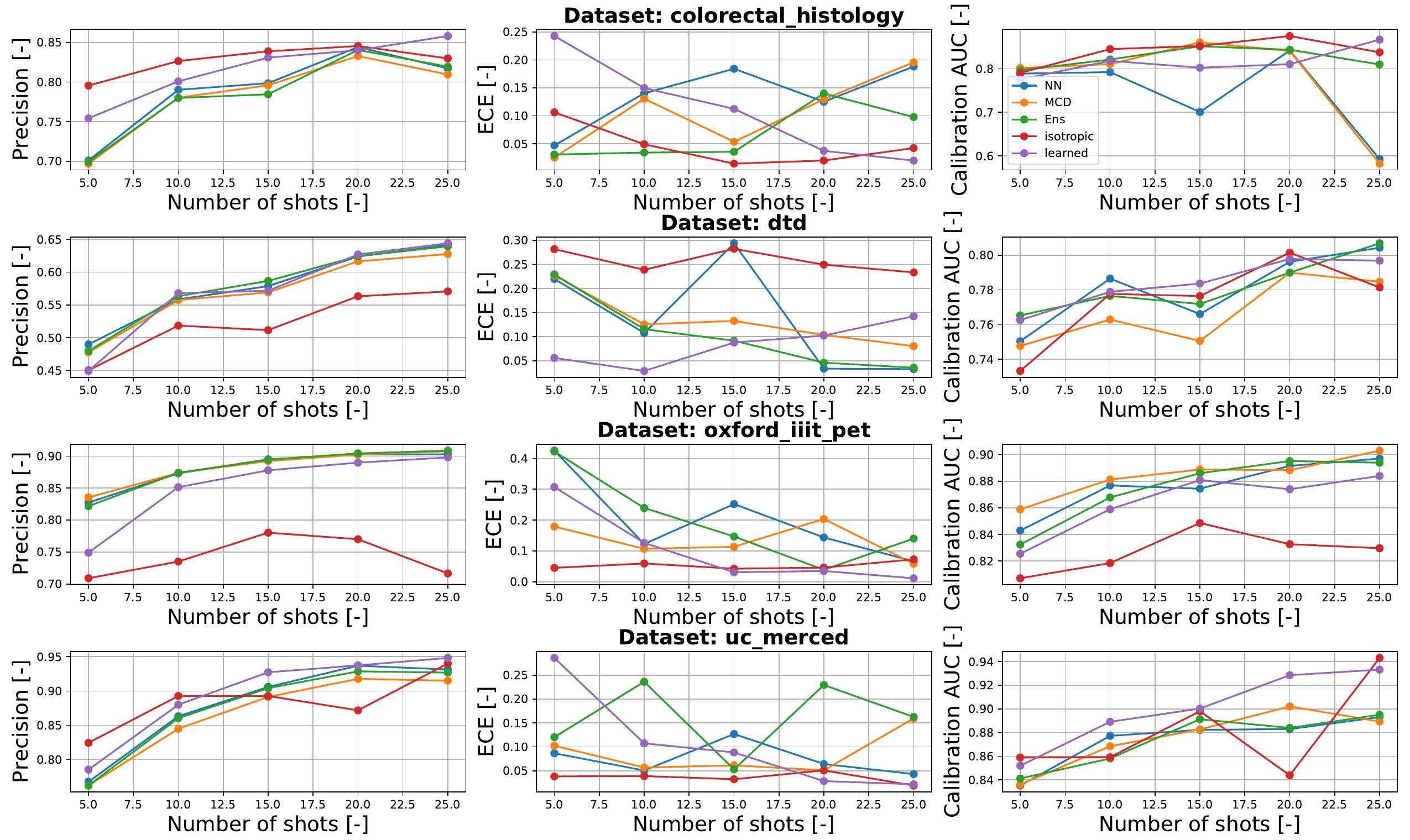}
    \caption{Few-shot learning experiments. Results are presented for each data-set. Higher the better for accuracy and AUC measures, while the lower the better for ECE measures. The results show the high performance of our method in both uncertainty calibration and generalization.}
    \label{fig:results_fewshot:appendix:2}
\end{figure*}

Next, we extend the ablation studies from Section~\ref{sec:results} in order to provide the influence of hyper-parameters namely temperature scaling and weight decays. To achieve this, we keep the same setting as the ablations again and plot for a specified range. Similarly again, as often done in PAC-Bayes literature, we vary the availability of the data and compute the accuracy as a metric. The results can be found in Figure~\ref{fig:results_ablations:appendix:more}. Importantly, this figure shows that the learned prior is in general, more data-efficient. On the other hand, the results also show the conditions when our method is effective. To explain, we observe that for small temperature scaling and large weight decays (top right), the performance of learned prior deteriorates. The results are expected because in the case of large weight decay, the isotropic prior becomes a more dominant term in the learned prior. Thus, the differences become smaller. On the other hand, for a small weight decay, BNNs become closer to deterministic. In that case, learning the prior only helps for small weight decay. Lastly, we find that in a vast range of hyperparameter settings, the learned prior helps to learn more data efficiently, when compared to isotropic priors.

\subsection{Further Cold Posterior Experiments}
\label{appendix:results_cold_posterior}

\subsubsection{Small-Scale}

In Section~\ref{sec:results}, we presented an empirical result on cold posterior effects. There, we showed that the cold posterior effect is not completely vanished, but is largely mitigated with the proposed learned prior. In this section, we aim to examine the stability of these results w.r.t different weight decay settings. To this end, within the setting of the ablations, we examine variations of three weight decays. The results are shown in Figure~\ref{fig:results_ablations:futher_cold_posterior}. In the experiments, we observe similar qualitative behavior, \ie, our learning-based prior largely mitigates the cold posterior effects when compared to the isotropic prior. We also ablate the prior with learned mean, which helps over purely zero mean isotropic Gaussian. Yet, incorporating the curvature, as in our method, improves the results further. Most importantly, our learned prior generates the least deviations due to the changes in the weight decay. Thus, we interpret the results that the learned priors are more stable with respect to changing the weight decay, while better mitigating the cold posterior effects.

\subsubsection{Large-Scale}

Next, we further examine the cold posteriors within a large scale experiment. To do so, we obtain the posteriors from ImageNet using ResNet-50 architecture and use it as a prior on CIFAR-10. Using CIFAR-10, we then obtain the posterior. We use a learning rate of 1e-3 and a batch size of eight. All other settings are kept the same as in the ablations of Section~\ref{sec:results}. We do not evaluate the zero mean $\&$ isotropic prior but only the isotropic prior with the learned mean. This enables direct comparison of the methods, as we only vary the prior. The results are presented in Figure~\ref{fig:results_ablations:futher_cold_posterior}. We observe the followings. First, we find that the cold poster effect is strong for the weight decay 1e-3, whereas with the weight decay 1e-5, the learned prior helps. The weight decay of 1e-8 didn't produce reasonable results for all evaluated temperature scales (up to 1e-15). These results are omitted from the plot for presentation clarity. Finally, the reachable accuracy is higher for the learning-based priors than for the isotropic priors with learned mean, while the influence of random seeds on deviations seems smaller. 

\subsection{Few-Shot Learning: Results per Data-set}
\label{appendix:fewshot:resultsperdata}
In Section~\ref{sec:results}, we have presented few-shot learning results, which were averaged across eight data-sets. Now, we present the results per individual data-set. Hence, the results herein is an elaboration to Section~\ref{sec:results}. Figures~\ref{fig:results_fewshot:appendix:1} and \ref{fig:results_fewshot:appendix:2} depict the results, four different data-sets each. We observe that in the majority of the data-sets, the learned prior outperforms the isotropic priors as well as the presented baselines, in terms of precision and calibration measures. On the other hand, we also observe that the learned prior cannot uniformly outperform all other baselines across different data-set and metrics. For example, the colorectal histology data-set is one where the isotropic prior outperforms the learned prior, while in oxford pet data-set, the standard BNNs and a deterministic neural network show stronger results in terms of precision and calibration AUC. We think that this is expected, as the prior learning methods assume the existence of relevant data and tasks to learn the prior from. Despite this limitation, as we find that the learned prior more often outperforms the methods based on the isotropic priors, our work shows the relevance of the prior learning methods, \ie, when there exists relevant data and tasks to learn from, there are potential performance advantages over relying on the isotropic priors.

\end{document}